  \newtheoremstyle{remarkstyle}{\topsep}{\topsep}{\rm}{}{\bfseries}{.}{.5em}{}
  \theoremstyle{remarkstyle}
\begin{document}


\nonumchapter{Introduction}

{\em Data science} is a new, exciting, and interdisciplinary subject, which lies in the intersection of mathematics, statistics, and computer science. Its major focus is to extract valuable information from large and often high-dimensional datasets \cite{datascience}. Today, modern technology allows for the collection of large amounts of data, regularly in the magnitudes of terabytes and petabytes; for instance, the NASA Center for Climate Simulation stores over 30 petabytes of climate information on a series of supercomputers \cite{websitenasa}. These datasets, known as {\em big data}, have such immense size and complexity that traditional data analysis techniques cannot be applied  to them efficiently \cite{bigdata}. {\em Data scientists}, practitioners of data science, develop innovative computer algorithms, which are both effective and efficient, often through parallel and cloud computing, to study these big data \cite{datascientist}.

A subfield of data science is {\em supervised learning theory}, which formalizes the algorithmic notion of learning and building predictions from observed data. A classical example is detecting email spams; powerful supervised learning algorithms aim to accurately detect whether a newly received email is spam through the process of training from large amounts of past emails labeled as either spam or not spam \cite{guzella2009review}. In general, a labeled dataset with a specific predictive goal (e.g. predict email spam) is given, where each observation from the dataset comes equipped with a possible binary-valued label corresponding to the predictive goal (e.g. either ``spam" or ``not spam"). A learning algorithm, or classifier, would train on this information to predict the label for any new observation \cite{machinelearn}.  

Mathematically, as presented in \cite{devroyeprob} for example, a labeled dataset, also known as a training sample or observed data, is denoted as
\[
\mathcal{S}_\mathrm{lab} = \{(X_1,Y_1),(X_2,Y_2),\ldots,(X_n,Y_n)\},
\]
where the $i$'th observation $X_i\in\Omega$ is in some domain (or feature) space with $m$ dimensions (also known as features or coordinates), such as $\Omega = \mathbb{R}^m$, and $Y_i\in \{0,1\}$ is its label. A learning algorithm is then a function $g$ which maps a new observation $X\in \Omega$ to a label $g(X)\in\{0,1\}$, given the training information:
\begin{align*}
g: (\Omega \times \{0,1\})^n \times \Omega &\to \{0,1\}\\
\left(\mathcal{S}_\mathrm{lab}, X\right) &\longmapsto g(X).
\end{align*}

One of the major challenges for supervised learning theory is that big data often have high dimensions, and computational costs of running learning algorithms can be prohibitively expensive. Hence, techniques for dimensionality reduction, which map a labeled dataset onto a lower dimensional space, would have to be applied to simplify data complexity and allow these algorithms to run efficiently on the transformed data. Some commonly known learning algorithms in literature are the $k$-Nearest Neighbour classifier, Support Vector Machines (SVM), and Decision Trees. Widely used dimensionality reduction techniques include Principal Component Analysis (PCA), Linear Discriminant Analysis (LDA), and feature selection based on some measure of importance \cite{dmreview}.

\subsubsection{Thesis Objective}

The two main goals of this thesis are to formally introduce certain supervised learning algorithms and dimensionality reduction techniques, and to compare their applications on a high-dimensional genetic dataset for predicting coronary artery disease (CAD). CAD is a common type of cardiovascular disease that occurs when substances clog a heart's arteries, and severe cases can often lead to heart attacks \cite{cadweb}. It is a well-known fact that genetic variations play a role in the prevalence of CAD among individuals; in fact, studies have repeatedly shown that these variations account for approximately 40\% to 60\% of the risk for CAD \cite{cadroberts}. Consequently, the prediction of this disease using individuals' possible genetic variations at the DNA base pair level, called Single-Nucleotide Polymorphisms (SNPs), is an important supervised learning problem. Successful solutions can lead to more accurate diagnosis of CAD and better understanding of the genetics behind CAD. Since there are an immense number of base pairs with possible genetic variations among individuals, datasets containing SNP information are extremely high-dimensional and thus, dimensionality reduction steps have to be performed prior to running classification algorithms.

This thesis first explains in detail two widely used learning algorithms in literature, the $k$-Nearest Neighbour ($k$-NN) classifier and the Random Forest classifier, and two techniques for dimensionality reduction, one known method named Random Projections and one novel method based on the theory of Mass Transportation Distance (MTD), introduced in this thesis for the first time as {\em MTD Feature Selection}.

The $k$-NN classifier is one of the oldest and most recognized supervised learning algorithms in data science, and it is based on finding $k$ nearest neighbours in a metric space $\Omega = (\Omega, d)$ \cite{knnorig}. Given a training sample $\{(X_1,Y_1),(X_2,Y_2),\ldots,(X_n,Y_n)\}$ and a new observation $X\in\Omega$, this classifier arranges the training observations in increasing distance from $X$,
\[
d(X_{(1)},Y_{(1)}) \leq d(X_{(2)},Y_{(2)}) \leq \ldots \leq d(X_{(n)},Y_{(n)}),
\]
and the predicted label for $X$ is based on the majority vote of its $k$ closest neighbours: $\mathrm{label}\, (X) = \mathrm{mode}\{Y_{(1)},Y_{(2)},\ldots,Y_{(k)}\}$. The algorithm has the important theoretical property of being {\em universally consistent} when $\Omega = (\mathbb{R}^m,||\cdot||_2)$ is the finite dimensional Euclidean space \cite{stone77}. In a rough sense, this property signifies that as the number $n$ of training observations grows arbitrarily large, the predictive ability of $k$-NN will become approximately optimal.

The Random Forest classifier is a supervised learning algorithm that generalizes Decision Tree, a classifier which partitions a feature space into hyper-rectangles based on the training sample. Each rectangle corresponds to a classification label, and the learning algorithm predicts the label for a new observation according to the hyper-rectangle it belongs to, see e.g. \cite{cart}. Random Forest generalizes this classifier by considering multiple Decision Trees constructed from bootstrap samples of the training set, and predicts the label for a new observation according to the majority vote of the multiple Decision Trees \cite{rfbreiman}. 
Unlike $k$-NN however, the Random Forest learning algorithm is, in general, not universally consistent \cite{rfconsistent}, but it has excellent predictive abilities and is widely used in practice.

Random Projections is a dimensionality reduction technique for the Euclidean space $\Omega = (\mathbb{R}^m,||\cdot||_2)$, developed from the Johnson-Lindenstrauss Lemma:
\begin{theorem*}[Johnson-Lindenstrauss Lemma \cite{jllemmaorig}]\label{origjl}
Let $0<\epsilon<1/2$ and let $S\subseteq \mathbb{R}^m$ be any finite subset with $|S| = n$. If $m' \geq C\ln(n)/\epsilon^2$ is any integer, for some sufficiently large absolute constant $C$, there exists a linear map $T:\mathbb{R}^m\to\mathbb{R}^{m'}$ such that
\[
(1-\epsilon)||x - y||_2 \leq ||T(x) - T(y)||_2 \leq (1+\epsilon)||x - y||_2
\]
for all $x,y\in S$.
\end{theorem*}
\noindent The linear map $T$ can be chosen as matrix multiplication by a randomly generated matrix $(R_{ij})$, where $R_{ij}$ is a binary random variable taking values in $\{-1,1\}$ with equal probabilities \cite{variantjl}. In the field of supervised learning, if the domain is extremely high-dimensional, observations from a training sample, along with any new observations to be classified, can be projected to a lower dimensional space via matrix multiplication by $(R_{ij})$. A distance-based classifier, such as $k$-NN, can then be efficiently applied to the projected observations since all pairwise distances are preserved up to a factor of $1\pm \epsilon$, as guaranteed by the lemma.

The second dimensionality reduction method is new and based on the Mass Transportation Distance, which is defined on the space of finitely supported probability measures on a metric space $\Omega = (\Omega, d)$, see e.g \cite{mtdpaper} or \cite{supestov}. For predictive problems in learning theory with possible labels $\{0,1\}$, two probability measures $\mu_{0},\mu_{1}$, which model respectively the theoretical distributions of observations with labels $0$ and $1$, are natural to consider. The Mass Transportation Distance between $\mu_{0}$ and $\mu_{1}$ is defined to be
\[
{\hat d}(\mu_{0},\mu_{1}) = \inf_{\nu}\int_{\Omega\times \Omega} d(x,y) \,d\nu(x,y),
\]
where the infimum is taken over all probability measures $\nu$ on $\Omega\times \Omega$ such that the marginals of $\nu$ are $\mu_{0}$ and $\mu_{1}$ respectively. This distance provides a measure of separation between the two classes of observations; however, the infimum and integral are extremely difficult to compute exactly. In the specific instance that $(\Omega,d)$ is a finite and discrete metric space, where $d$ is the $\{0,1\}$-distance: $d(x,y) = 0$ if $x = y$ and $d(x,y) = 1$ otherwise, the Mass Transportation Distance ${\hat d}$ simplifies to the $\ell_1$ distance:
\[
{\hat d}(\mu_{0},\mu_{1}) = ||\mu_{0} - \mu_{1}||_1.
\]
Consequently, this easily computable distance can be used for dimensionality reduction of high-dimensional datasets with distances taking values in $\{0,1\}$. Let $\{(X_1,Y_1),(X_2,Y_2),\ldots,(X_n,Y_n)\}$, where $X_i = (X_{i1},X_{i2},\ldots,X_{im})$ and $Y_i\in\{0,1\}$, be a training sample such that for any fixed coordinate $j$, $X_{ij} \in \Omega$ for all $1\leq i \leq n$, and $(\Omega,d)$ is a finite metric space with the $\{0,1\}$-distance.
 Corresponding to the two labels, two empirical probability measures ${\hat \mu_{0}^j}$ and ${\hat \mu_{1}^j}$ can be defined on $\Omega$:
\[
{\hat \mu_{0}^j} = \sum_{\omega\in\Omega}p_0^j(\omega)\delta_\omega \quad \textnormal{and} \quad {\hat \mu_{1}^j} = \sum_{\omega\in\Omega}p_1^j(\omega)\delta_\omega,
\]
where $\delta_\omega$ is the Dirac measure, and $p_0^j(\omega)$ and $p_1^j(\omega)$ denote the relative occurrences of $\omega\in\Omega$, at the $j$'th coordinate, over all observations with labels $0$ and $1$, respectively:
\begin{align*}
p_0^j(\omega) &= \mathrm{card}\{i \,| \,(X_{ij}, Y_i) = (\omega, 0)\}/\mathrm{card}\{i \, | \, Y_i = 0\}\\
p_1^j(\omega) &= \mathrm{card}\{i \,| \,(X_{ij}, Y_i) = (\omega, 1)\}/\mathrm{card}\{i \, | \, Y_i = 1\}.
\end{align*}
Then, the Mass Transportation Distance
\begin{align*}
{\hat d}({\hat \mu_{0}^j},{\hat \mu_{1}^j}) = ||{\hat \mu_{0}^j} - {\hat \mu_{1}^j}||_1 =\sum_{\omega \in \Omega} |p_0^j(\omega) - p_1^j(\omega)|
\end{align*}
measures, in a heuristic sense, the degree of separation between the two classes with respect to the $j$'th coordinate. As a result, to reduce dimension, only coordinates $j$ with high Mass Transportation Distances should be considered, as the two class distributions would be most distinguishable at those coordinates, for classification purposes.

Following detailed explanations of these algorithms and techniques, this thesis provides a comparative study of two approaches for predicting coronary artery disease with a high-dimensional labeled dataset containing information on 865688 Single-Nucleotide Polymorphisms (SNPs) for 3907 observations, collected from the Ontario Heart Genomics Study, see e.g. \cite{robbiethesis}:

\begin{description}
\item[Approach 1] (Random Projections and $k$-NN): As a benchmark experiment, the first approach is to project the genetic dataset onto a lower dimensional space and apply the $k$-NN classifier for the prediction of coronary artery disease.
\item[Approach 2] (MTD Feature Selection and Random Forest): The second approach is to use Mass Transportation Distance Feature Selection to select the important coordinates, or equivalently SNPs, in the genetic dataset and apply the Random Forest classifier for prediction.
\end{description}
The predictive abilities of the two approaches are judged according to three common performance measures in learning theory: the accuracy, F-Measure, and area under the Receiver Operating Characteristic (ROC) curve \cite{rpfmeasure}. 

Results demonstrate that Approach 2 predicts coronary artery disease considerably better than Approach 1. Based on a subset of the genetic dataset containing information on 73571 SNPs from Chromosome 1 for the 3907 observations, Approach 1 achieves a maximum accuracy of 0.5554 and an area under the ROC of 0.5174 when the dataset is projected to a lower 5000-dimensional space and run by the $k$-NN classifier. On the other hand, Approach 2 can achieve a maximum accuracy of 0.6592 and an area under the ROC of 0.8392 when 287 important SNPs, out of the 73571 SNPs in Chromosome 1, are selected for classification by Random Forest. On the entire dataset with 865688 SNPs, Approach 2 can achieve an accuracy of 0.6660 and an area under the ROC of 0.8562. This area under the curve is the highest achieved on this dataset from the Ontario Heart Genomics Study, beating the previous high score of 0.608 from \cite{robbie12}, whose authors considered a panel of 12 previously identified SNPs associated to coronary artery disease and applied the Logistic Regression classifier.

\subsubsection*{Novel Contributions of Thesis}

The main theoretical contribution of this thesis is providing a complete proof that the $k$-Nearest Neighbour classifier is universally consistent in any finite dimensional normed vector space $\Omega = (\mathbb{R}^m, ||\cdot||)$. Although this result is already known, a complete and direct proof has not been found in the current literature, since the classical proof of the universal consistency of $k$-NN, e.g. seen in \cite{devroyeprob}, is specific for the Euclidean space $(\mathbb{R}^m, ||\cdot||_2)$. This classical proof involves Stone's Theorem, listing three conditions which together are sufficient for universal consistency, and Stone's Lemma in Euclidean space, required in proving that $k$-NN satisfies one of the three sufficient conditions \cite{stone77}. For the universal consistency of the $k$-NN classifier in any finite dimensional normed space, this thesis proves the classical Stone's Theorem and a generalized version of Stone's Lemma, whose proof has not been seen before. In addition, this thesis introduces a completely new feature selection method based on the Mass Transportation Distance. The thesis explains its application to reduce dimension of any dataset taking discrete values for prediction, and includes some theory on justifying the use of this distance in supervised learning theory.

Regarding new practical contributions, this thesis presents applications of the $k$-NN classifier, Random Projections, Random Forest, and the new MTD Feature Selection method on a genetic dataset from the Ontario Heart Genomics Study, for the prediction of coronary artery disease using Single-Nucleotide Polymorphisms information. All these algorithms and dimensionality reduction techniques are applied on this dataset for the first time, and another important practical contribution of this thesis is that the approach of applying MTD Feature Selection and Random Forest achieves the best area under the ROC curve ever obtained on this dataset.

\subsubsection*{Outline}

The thesis is structured as follows. Chapter \ref{prelim} first provides a mathematical foundation for supervised learning theory and explains the important concept of universal consistency. This chapter then discusses the biological background for the genetic dataset considered in this thesis. In particular, it explains coronary artery disease and the collection of genetic data, in the form of genotypes of Single Nucleotide Polymorphisms (SNPs), from a Genome Wide Association (GWA) Study. The chapter also surveys some past work in literature on this dataset and, more generally, on the application of supervised learning algorithms for datasets of GWA Studies.

Chapter \ref{supervisedalg} introduces the $k$-Nearest Neighbour classifier and provides a detailed proof that this classifier is universally consistent in any finite dimensional normed vector space. More specifically, the chapter covers a complete proof of Stone's Theorem and a new proof for a generalized version of Stone's Lemma. Chapter \ref{decisiontreesect} explains the Random Forest classifier and includes a section on the Decision Tree classifier, which Random Forest is generalized from. 

Chapter \ref{dimred} discusses two dimensionality reduction methods, Random Projections and the new feature selection technique based on the Mass Transportation Distance, considered in this thesis. An outline of a probabilistic proof of the Johnson-Lindenstrauss Lemma is given to motivate and justify the use of Random Projections, based on randomly generated matrices, in learning theory. A brief theoretical explanation for using the Mass Transportation Distance in data science is given as well.

Chapter \ref{validclass} explains three measures of predictive performance for supervised learning algorithms, namely accuracy, the F-Measure, and area under the Receiver Operating Characteristic curve. This chapter then discusses two methods, called the Holdout Method and cross validation, of dividing a labeled dataset to allow for the evaluation of a classifier and for estimations of optimal classification parameters on the same dataset.

Chapter \ref{datasetchap} provides information on the genetic dataset considered for this thesis, including its class information and the distribution of the 865688 SNPs across 22 chromosomes. This chapter also explains the methodology of applying Random Projections, $k$-NN, MTD Feature Selection, and Random Forest on the genetic dataset, and lists the classification parameters used by these techniques for dimensionality reduction and classification. In addition, the chapter introduces a framework for running Random Projections on any high-dimensional dataset in parallel.

Chapter \ref{resultschap} summarizes all the prediction results, obtained first from the approach of using Random Projections and the $k$-NN classifier, and then from the approach of MTD Feature Selection with Random Forest. This chapter also provides a brief discussion on the comparative results from the two approaches. Finally, Chapter \ref{conclude} concludes this thesis, addresses its limitations, and lists some directions for future research.

Throughout this thesis, knowledge in basic probability theory and real analysis is assumed from the reader. As the thesis author studies mathematics and data science, this thesis is mathematically oriented and its intended readers are mathematicians and data scientists. The main focus of the thesis is to explain dimensionality reduction techniques and classification algorithms in a rigorous setting, and to study the application of these techniques on a high-dimensional discrete (genetic) dataset. As a result, the biological background and explanations regarding biology and the genetic dataset itself are kept at a minimum. The excellent reference \cite{robbiethesis}, on the other hand, is a Master's thesis, on the prediction of coronary artery disease using the same genetic dataset, which is much more focused towards biology and the dataset in question.

\cleardoublepage

\chapter{Preliminaries}\label{prelim}

Chapter \ref{prelim} formalizes the mathematical setting for supervised learning theory and provides a biological introduction to the genetic dataset considered for this thesis. Section \ref{probintrolearning} defines a supervised learning classifier, the process of training and predicting based on a labeled dataset, and the important notion of universal consistency. Section \ref{bioback} introduces coronary artery disease, genetics from the DNA level to the Chromosome level, and Genome Wide Association Studies, where the genetic dataset is collected from.

\section{Introduction to Supervised Learning} \label{probintrolearning}

The goal of this section is formalize the algorithmic notion of training on a sample of labeled observations and applying this information to predict the label for a new observation. In Section \ref{uniconsist}, the property of universal consistency is defined, which makes exact the concept of optimal predictability for a learning algorithm. A good reference on supervised learning theory, and on which Section \ref{probintrolearning} is based, is \cite{devroyeprob}.

Suppose $(X,Y)$ is a pair of random variables taking values in $\Omega\times \{0,1\}$, where $\Omega$ is called the {\em domain} or {\em feature space}. The variable $X \in \Omega$ is called an {\em observation}, possibly containing information in $m$ dimensions (also known as coordinates or features), with {\em label} $Y \in \{0,1\}$. The distribution of this pair is given by $\mu$, a probability measure on $\Omega$, and a regression function of $Y$, $\eta:\Omega\to\left[0,1\right]$ defined by
\[
\eta(x) = \mathrm{Pr}(Y=1|X=x) = \mathbb{E}(Y|X = x).
\]
The function $\eta$ is simply the expectation of the label variable $Y$ when a particular instance of $X$ is observed. 

A {\em training sample} $\mathcal{S}_\mathrm{lab}$, with sample size $n$, is a collection of independent labeled observations:
\begin{equation}\label{trainsamp}
\mathcal{S}_\mathrm{lab} = \{(X_1,Y_1),(X_2,Y_2),\ldots,(X_n,Y_n)\},
\end{equation}
drawn from the same paired distribution as $(X,Y)$. For a fixed training sample $\mathcal{S}_\mathrm{lab}$, a {\em classifier} is any function $g:\Omega\to\{0,1\}$, and a {\em prediction} for a pair $(X,Y)$ is merely $g(X)$, and it need not equal $Y$. One particular classifier is defined below.
\begin{defn}
Given a random pair $(X,Y)$ with distribution $\mu$ and regression function $\eta$, the {\em Bayes classifier} $g^*$ is defined by
\[
g^*(x) = \begin{cases}
1 &\quad\textnormal{ if $\eta(x)>1/2$}\\
0 &\quad\textnormal{ otherwise}.
\end{cases}
\]
\end{defn}
\noindent The Bayes classifier is the optimal classifier, satisfying the important property of having the lowest probability of making a false prediction, out of all classifiers from $\Omega$ to $\{0,1\}$.
\begin{theo}\label{lowesterror}
Given any classifier $g:\Omega\to\{0,1\}$,
\[
\mathrm{Pr}(g^*(X)\neq Y) \leq \mathrm{Pr}(g(X) \neq Y).
\]
\end{theo}

\begin{proof}
It suffices to prove that
\[
\mathrm{Pr}(g^*(X)\neq Y|X = x) \leq \mathrm{Pr}(g(X)\neq Y| X = x),
\]
for all $x\in \Omega$. For any classifier $g$, the following holds:
\begin{align*}
\mathrm{Pr}(g(X)\neq Y|X = x) &= 1 - (\mathrm{Pr}(Y = 1, g(x)=1|X = x) + \mathrm{Pr}(Y = 0,g(x) = 0|X = x) )\\
& = \begin{cases}
1 - \mathrm{Pr}(Y = 1|X = x)& \quad\textnormal{ if }\quad g(x) = 1\\
1 - \mathrm{Pr}(Y = 0|X = x)& \quad\textnormal{ if }\quad g(x) = 0\\
\end{cases}\\
& = 1 - \eta(x)^{g(x)}(1-\eta(x))^{1-g(x)}.
\end{align*}

As a result,
\begin{align*}
\mathrm{Pr}(g(X)\neq Y|&X = x) - \mathrm{Pr}(g^*(X)\neq Y| X = x)\\
& = 1 - \eta(x)^{g(x)}(1-\eta(x))^{1-g(x)} - 1 + \eta(x)^{g^*(x)}(1-\eta(x))^{1-g^*(x)}\\
& = \eta(x)^{g^*(x)}(1-\eta(x))^{1-g^*(x)}- \eta(x)^{g(x)}(1-\eta(x))^{1-g(x)}\\
& = \begin{cases}
0 & \quad\textnormal{ if }\quad g(x) = g^*(x)\\
2\eta(x) - 1 & \quad\textnormal{ if }\quad g^*(x) = 1\textnormal{ and } g(x) = 0\\
1 - 2\eta(x) & \quad\textnormal{ if }\quad g^*(x) = 0\textnormal{ and } g(x) = 1 .
\end{cases}
\end{align*}
Since $g^*(x) = 1$ if and only if $\eta(x) > 1/2$, we must have $2\eta(x) - 1 >0$ when $g^*(x) = 1$ and $1-2\eta(x) \geq 0$ when $g^*(x) = 0$. Therefore,
\[
\mathrm{Pr}(g(X)\neq Y|X = x) - \mathrm{Pr}(g^*(X)\neq Y|X = x)\geq 0,
\]
and the claim is proved.
\end{proof}
In real-life however, the Bayes classifier cannot be constructed because the function $\eta$ is not known. Nevertheless, this classifier is important in learning theory, as it is used to define the notion of universal consistency, a theoretical formalization of how well a classifier can predict.

\subsection{Universal Consistency}\label{uniconsist}

Universal consistency is an important concept in supervised learning theory, and it involves studying the prediction error of a classifier as the training sample size increases to infinity.

\begin{defn}
Let $g:\Omega\to\{0,1\}$ be a classifier and $\mathcal{S}_\mathrm{lab}$ be a training sample as defined in (\ref{trainsamp}). The {\em learning error of $g$} is defined by
\[
L(g) = \mathrm{Pr}(g(X)\neq Y|\mathcal{S}_\mathrm{lab}).
\]
If $g = g^*$ is the Bayes classifier, its learning error is called the {\em Bayes error} and is denoted by $L^* = L(g^*)$.
\end{defn}
\noindent From Theorem \ref{lowesterror}, $L^*$ is the smallest error that can be achieved:
\[
L^* = \inf_{g:\Omega\to\{0,1\}} L(g) = \inf_{g:\Omega\to\{0,1\}} \mathrm{Pr}(g(X) \neq Y | \mathcal{S}_\mathrm{lab}).
\]

The sample size and the training sample have thus far been fixed for a classifier, but in order to define universal consistency, the size $n$ is allowed to become arbitrarily large: $\mathcal{S}_\mathrm{lab} = \mathcal{S}_{\mathrm{lab},n}$, where $n\to\infty$. As a result, a more general terminology for learning must be introduced.
\begin{defn}
A {\em learning rule} is a family of classifiers $\{g_n\}$, where
\[
g_n:(\Omega\times\{0,1\})^n\times \Omega\to\{0,1\}.
\]
\end{defn}
\noindent A learning rule takes in a training sample $\mathcal{S}_{\mathrm{lab},n} \in (\Omega\times\{0,1\})^n$ and a new observation $X\in  \Omega$, and outputs a label $g_n(X)\in \{0,1\}$, while at the same time, the training size $n$ can vary. Universal consistency of a learning rule is defined as follows.

\begin{defn}
A learning rule $\{g_n\}$ is {\em consistent} for $(X,Y)$ with distribution $\mu$ and regression function $\eta$ if
\[
\mathbb{E}(L(g_n))\to L^*,\quad\textnormal{ as $n\to\infty$,}
\]
where the expectation $\mathbb{E}(\cdot)$ is taken over all possible training samples $\mathcal{S}_{\mathrm{lab},n}$. The rule $\{g_n\}$ is {\em universally consistent} if it is consistent for any distribution and regression function of $(X,Y)$. 
\end{defn}

In practice and for the remaining chapters of this thesis, a learning rule is often referred to as a classifier or a (supervised) learning algorithm, with the implicit knowledge that the training sample, simply denoted as $\mathcal{S}_\mathrm{lab}$, and its size $n$ can always vary. Furthermore, a learning rule is commonly defined in two stages: 1) a training step on how the rule trains on a sample $\mathcal{S}_\mathrm{lab} \in (\Omega\times\{0,1\})^n$ and 2) a prediction, or classification, step on how this rule predicts the label for a new observation $X\in \Omega$. Chapters \ref{supervisedalg} and \ref{decisiontreesect} explain two well-known learning rules in the literature today.

\section{Biological Background}\label{bioback}
This section provides the necessary background to understand the biology behind the genetic dataset studied in this thesis. In Section \ref{cad}, coronary artery disease and its relevance in health care are explained. In Section \ref{genetics}, a brief introduction to genetics is given and in particular, DNA sequences and Single-Nucleotide Polymorphisms are explained. Genome Wide Association Studies, from which the genetic dataset is collected, are introduced in Section \ref{gwas}, and some past work in the literature involving supervised learning based on these studies are surveyed in Section \ref{refgwas}.

\subsection{Coronary Artery Disease}\label{cad}

It is a well-known fact that in high-income countries, such as Canada and the United States of America, the number one leading cause of death is from cardiovascular diseases. This is a class of diseases involving the heart and its surrounding blood vessels, including arteries, and the most common type of cardiovascular disease is {\em coronary artery disease} (CAD). It occurs when cholesterol and other fatty substances build up and clog the arteries of an individual's heart, slowing down the flow of blood. Significant clogs in the arteries can often lead to heart attacks and even death. As a result, preventing and understanding CAD are important problems in health care today, especially in countries with aging populations as the prevalence of CAD increases with age \cite{cadweb}.

Medical research has shown that age, sex, smoking, diabetes, hypertension, and high cholesterol levels are important physical risk factors for developing CAD \cite{robbie12}, while exercising regularly, reducing stress, and eating low-fat and low-salt diet can prevent CAD \cite{cadweb}. It is also widely accepted in biology that genetics play a role in the prevalence of CAD; however, the extent is heavily debated today, see e.g. \cite{cadroberts} for a discussion regarding genetics and CAD. It is not known whether coronary artery disease can be predicted with high probability purely based on the genetic information of an individual.  This thesis attempts to answer this question in terms of the genetic information of Single-Nucleotide Polymorphisms, as explained in Section \ref{genetics}.

\subsection{Genetics}\label{genetics}

{\em Genetics} is the biological study of heredity, the process of passing traits from a parent to an offspring, also known as trait inheritance. At the molecular level, this process involves {\em Deoxyribonucleic Acid} (DNA), a molecule that encodes information for the trait development and functionality of any living organism and is passed down to its offsprings. For humans, (double-stranded) DNA is a pair of molecules, containing repeated units of four possible nucleotides, {\em Cytosine} (C), {\em Guanine} (G), {\em Adenine} (A), and {\em Thymine} (T). The pair of molecules are tightly held together and have complementary nucleotides, called base pairs: the corresponding pairs are Cytosine with Guanine and Adenine with Thymine \cite{genehandbook}. Commonly, DNA is visualized as a pair of strings containing the base pairs, and often the second string is omitted as it is completely determined by the first:
\begin{align*}
\,\ldots\mathrm{A\,T\,G\,C\,T\,T\,C\,G\,G\,C\,A\,A\,G\,A\,C\,T\,C\,A\,A\,A\,A}\ldots\,\,\\
(\ldots\mathrm{T\,A\,C\,G\,A\,A\,G\,C\,C\,G\,T\,T\,C\,T\,G\,A\,G\,T\,T\,T\,T}\ldots)
\end{align*}

An individual has 46 double-stranded DNA paired together to form 23 thread-like structures called {\em chromosomes}, where 23 double-stranded DNA come from the mother and the other 23 from the father. There are an estimated 3.2 billion total base pairs in human DNA, with Chromosome 1 having the most number of base pairs (approximately 240 million) and Chromosome 22 having the least (approximately 40 million) \cite{dnacount}. The level of organization for a person's genetic information can be visualized as in Figure \ref{dnapic}, where for each chromosome, genetic information is given as two double-stranded DNA strings.

As genetic information is passed from a parent to an offspring, variations or mutations sometimes occur in certain segments of DNA. These DNA differences result precisely in trait variations among individuals \cite{genehandbook}. For instance, the following pairs of doubled-stranded DNA for two individuals in a chromosome have a variation at the bolded nucleotide:
\begin{align*}
\textnormal{Individual 1:}  \quad\ldots\mathrm{A\,T\,G\,C\,T\,T\,C\,G\,{\bf T}\,C\,A\,A\,G\,A\,C\,T\,C\,A\,A\,A\,A}\ldots\\
\quad\ldots\mathrm{A\,T\,G\,C\,T\,T\,C\,G\,{\bf G}\,C\,A\,A\,G\,A\,C\,T\,C\,A\,A\,A\,A}\ldots\\
\textnormal{Individual 2:}  \quad\ldots\mathrm{A\,T\,G\,C\,T\,T\,C\,G\,{\bf G}\,C\,A\,A\,G\,A\,C\,T\,C\,A\,A\,A\,A}\ldots\\
\quad\ldots\mathrm{A\,T\,G\,C\,T\,T\,C\,G\,{\bf G}\,C\,A\,A\,G\,A\,C\,T\,C\,A\,A\,A\,A}\ldots
\end{align*}
When a variation occurs at a base pair position in 1\% or more of a population, it is known as a {\em Single-Nucleotide Polymorphism} (SNP); normally, only one possible variation at a position can occur \cite{cadroberts}. The nucleotide at a SNP position that occurs more often in a population is called the major allele, while the less frequent one is termed the minor allele. For a SNP, since a person has two DNA in each chromosome, there are three possible combinations, or {\em genotypes}, of alleles. The genotype is known as {\em homozygous major} if two copies of the major allele are present, {\em homozygous minor} if two copies of the minor allele are present, and {\em heterozygous} if one copy of each allele is present \cite{robbiethesis}.

Continuing the DNA example above, suppose that the variation at the bolded base pair occurs in 5\% of a population and that Thymine (T) is the most frequent nucleotide. Then, this variation is a SNP and the SNP genotype for Individual 1 is heterozygous while it is is homozygous minor for Individual 2. Across all 23 chromosomes, there are over an estimated 3 million SNPs \cite{robbiethesis}. Since SNPs record all genetic variation among humans at the DNA base pair level, they are extremely important to study biologically and may be useful in the genetic prediction of certain diseases, such as coronary artery disease. Section \ref{gwas} below explains Genome Wide Association Studies which collect SNP information from individuals in order to study possible genetic associations to physical traits.

\begin{figure}
\begin{center}
\includegraphics[scale = 0.5]{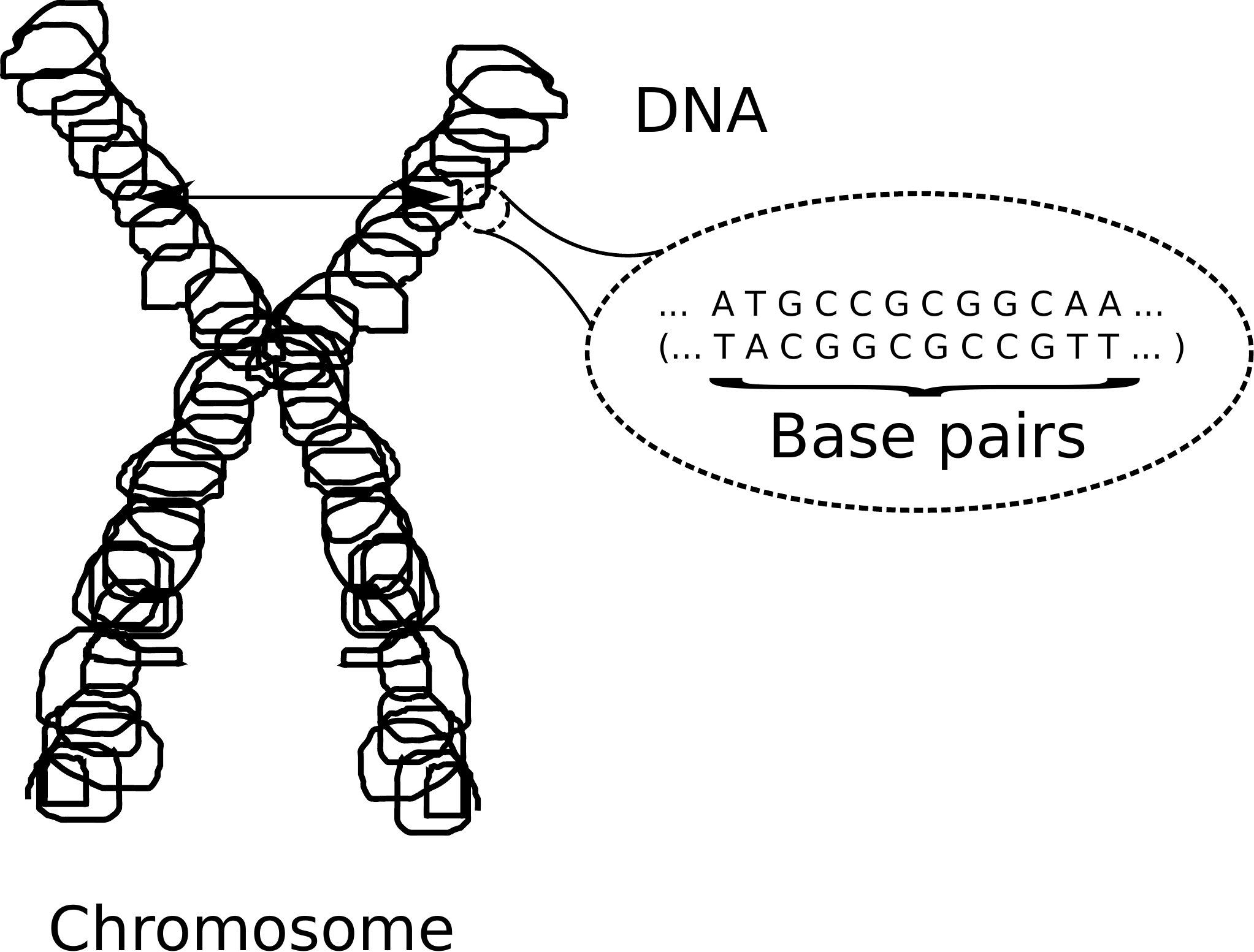}
\caption[Shape of a chromosome]{Two double-stranded DNA are present in a chromosome.}
\label{dnapic}
\end{center}
\end{figure}

\subsection{Genome Wide Association Studies}\label{gwas}

In genetics, a {\em Genome Wide Association} (GWA) {\em Study} is a study that examines Single-Nucleotide Polymorphisms and their associations to a certain trait, such as coronary artery disease. Such a study commonly considers SNPs from two groups of randomly selected individuals, those with the trait, known as {\em cases}, and those without it, known as {\em controls}. For each individual of the study, a blood sample is drawn and a DNA based microarray chip is used to determine the individual's genotype at each SNP position from the blood sample. In addition, information on other physical traits of interest, such as sex, age, and cholesterol level, could be collected for case-control sampling corrections or for future association studies \cite{robbiethesis}.

A specific GWA Study is the {\em Ontario Heart Genomics Study} (OHGS) which collects SNP information, mostly from individuals residing in Ottawa, Canada to study genetic association to coronary artery disease. Genotypes at approximately 900,000 SNPs (the coordinates of the dataset) from around 4000 individuals, labeled as controls or cases, are collected for this study. The microarray chip used to determine these genotypes is the Affymetrix GeneChip 6.0. However, due to microarray limitations, the exact genotype at a SNP cannot be determine by this chip, rather a probability is assigned to each of the three possible genotypes. A good reference, which explains GWA Studies and the OHGS dataset in much more detail, including a section on the precise genotyping procedures, is the Master's thesis \cite{robbiethesis}. 

This thesis considers the genetic dataset from OHGS, with the main goal of predicting coronary artery disease based on individuals' SNP genotypes using data science techniques. See Section \ref{inforaw} for the exact size and format of the genetic dataset used in the thesis. In general, applying data science methods for GWA Studies, with the goal of predicting a physical trait, is an extremely new research direction, where only a handful of results have been published in the literature. Section \ref{refgwas} below briefly surveys some of these past results in this field of research.

\subsection{Past Work on GWA Studies}\label{refgwas}

Past research done on the OHGS dataset for predicting coronary artery disease (CAD) include \cite{robbiethesis} and \cite{robbie12}, where the respective authors considered the Naive Bayes and Logistic Regression classifiers. The current best classification performance, evaluated according to the area under the Receiver Operating Characteristic (ROC) curve (see Section \ref{rocarea} for an explanation of this curve) on this dataset, is 0.608 obtained in \cite{robbie12}. Davies et al. in this paper considered the Logistic Regression classifier using 12 SNPs that had been previously shown to be associated with CAD.

Other than coronary artery disease, in 2009, Wei et al. published one of the first papers \cite{weisvm} of applying supervised learning algorithms on a GWA Study. The authors compared the predictive performance of the Logistic Regression and Support Vector Machines (SVM) classifiers for predicting type 1 diabetes, and were able to obtain an area under the ROC curve of approximately 0.84 with SVM and 409 SNPs.

 Goldstein et al., in 2010, considered a GWA Study involving a multiple sclerosis (MS) dataset, with 325807 SNPs and 3362 observations, and applied the Random Forest classifier to predict MS. An accuracy of 0.65 was obtained by Random Forest \cite{rfms}. Then, in 2011, the paper \cite{rfasthma} reported the application of Random Forest on a GWA Study, with 417 observations, to predict asthma exacerbations in children. With 320 SNPs, Random Forest was able to achieve an area under the ROC curve of 0.66 according to an independent sample for evaluation.

Mao and Kelly in \cite{maosnps} compared the ability of a modified version of Random Forest against common learning algorithms in literature, including the nearest neighbour classifier and Support Vector Machines (SVM), on two genetic datasets to predict Crohn's disease and autoimmune disorder. On the genetic dataset of 103 SNPs and 387 observations for predicting Crohn's disease, the modified Random Forest classifier was able to achieve an accuracy of 0.744; on the dataset of 108 SNPs and 1036 observations for autoimmune disorder, this classifier was able to obtain an accuracy of 0.721. Both accuracies were considerably higher than those from the other classifiers. Kooperberg et al. in \cite{crohnagain} also studied the prediction of Crohn's disease on two genetic datasets, using the Logistic Regression classifier, and obtained a best area under the ROC curve of 0.637 with 177 SNPs and 4686 observations.

In summary, the applications of learning algorithms, such as Random Forest, Logistic Regression, and SVM, on GWA Studies in the past have been fairly straightforward. At the same time, the selection of SNPs for classification, since considering all possible SNPs is prohibitively expensive, has been based on statistical approaches or on previous findings in the genetics community. It can be argued that, thus far, trait predictions using learning algorithms based on GWA Studies have been primarily investigated by geneticists and statisticians. This thesis, on the other hand, aims to study the prediction of coronary artery disease from a complete data science perspective, with the main focus of introducing certain supervised learning algorithms and dimensionality reduction techniques and applying them on the high-dimensional OHGS genetic dataset. Chapters \ref{supervisedalg} and \ref{decisiontreesect} explain the $k$-Nearest Neighbour and Random Forest classifiers this thesis considers for predicting artery disease.

\cleardoublepage

\chapter{The $k$-Nearest Neighbour Classifier}\label{supervisedalg}

Chapter \ref{supervisedalg} explains a well known classifier in data science, called the $k$-Nearest Neighbour ($k$-NN) classifier, and provides a complete proof that it is universally consistent in any finite dimensional normed vector space. In particular, Section \ref{knndef} introduces the $k$-NN classifier and Section \ref{consistknn} states and proves Stone's Theorem and a generalized version of Stone's Lemma, and the universal consistency of $k$-NN follows as a corollary.

Recall that, from e.g. \cite{normref}, for the finite dimensional vector space $\mathbb{R}^m$, a norm $||\cdot ||:\mathbb{R}^m\to \mathbb{R}$ satisfies
\begin{multicols}{2}
\begin{enumerate}
\item $||x|| = 0$ implies $x = 0$
\item $||ax|| = |a|\!\cdot\!||x||$
\item $||x + y|| \leq ||x|| + ||y||$,
\end{enumerate}
\end{multicols}
\noindent for all $x,y\in\mathbb{R}^m$ and $a \in\mathbb{R}$. For example, a commonly used norm is the $\ell_p$ norm $||\cdot||_p$ defined by
\[
||x||_p = \begin{cases}
\left(\displaystyle\sum_{i = 1}^m |x_i|^p\right)^{1/p} & \quad\textnormal{ if $ 1 \leq p < \infty$}\\
\displaystyle\max_{i = 1}^m |x_i| &\quad\textnormal{ if $p = \infty$},
\end{cases}
\]
for $x = (x_1,x_2,\ldots,x_m)\in\mathbb{R}^m$. Equipped with a norm, $(\mathbb{R}^m,||\cdot||)$ is a metric space with distance defined by $|| x- y||$ for $x,y\in\mathbb{R}^m$. 

\section{Definition of the $k$-NN Classifier}\label{knndef}

The $k$-Nearest Neighbour ($k$-NN) classifier \cite{knnorig} is a well-known learning rule in literature, based on searching for $k$-nearest neighbours in a metric space. Section \ref{knndef} defines the $k$-NN classifier for the domain being a finite dimensional normed vector space $\Omega = (\mathbb{R}^m,||\cdot||)$ (although in general, it can be defined in the same manner for any metric space).
 
Suppose the domain is a finite dimensional normed vector space $\Omega = (\mathbb{R}^m,||\cdot||)$ and a random pair $(X,Y)$ takes values in $\mathbb{R}^m\times\{0,1\}$, with distribution $\mu$ and regression function $\eta$. Given a training sample $\mathcal{S}_\mathrm{lab} = \{(X_1,Y_1),(X_2,Y_2),\ldots,(X_n,Y_n)\}$, and a new observation $X = x\in\mathbb{R}^m$, known as the {\em query}, the $k$-NN classifier arranges the training observations in the order of increasing distances to $x$:
\begin{equation}\label{sortdist}
(X_{(1)},Y_{(1)}),\ldots,(X_{(n)},Y_{(n)}),
\end{equation}
where $||X_{(1)} - x||\leq ||X_{(2)} - x||\leq \ldots \leq ||X_{(n)} - x||$. Then, it considers only the first $k$ observations $(X_{(1)},Y_{(1)}),\ldots,(X_{(k)},Y_{(k)})$, which are the $k$-nearest neighbours of $x$, and predicts the label of $x$ to be $\mathrm{mode}\{Y_{(1)},\ldots,Y_{(k)}\}$ \cite{knnorig}. The value $k$ is normally taken to be odd to avoid ties, and multiple values are used in real-life applications in order to determine the optimal one for classification. Figure \ref{knnpic} is an illustration of the $k$-NN classifier for $k = 5$. 

\begin{figure}
\begin{center}
\includegraphics[scale = 0.75]{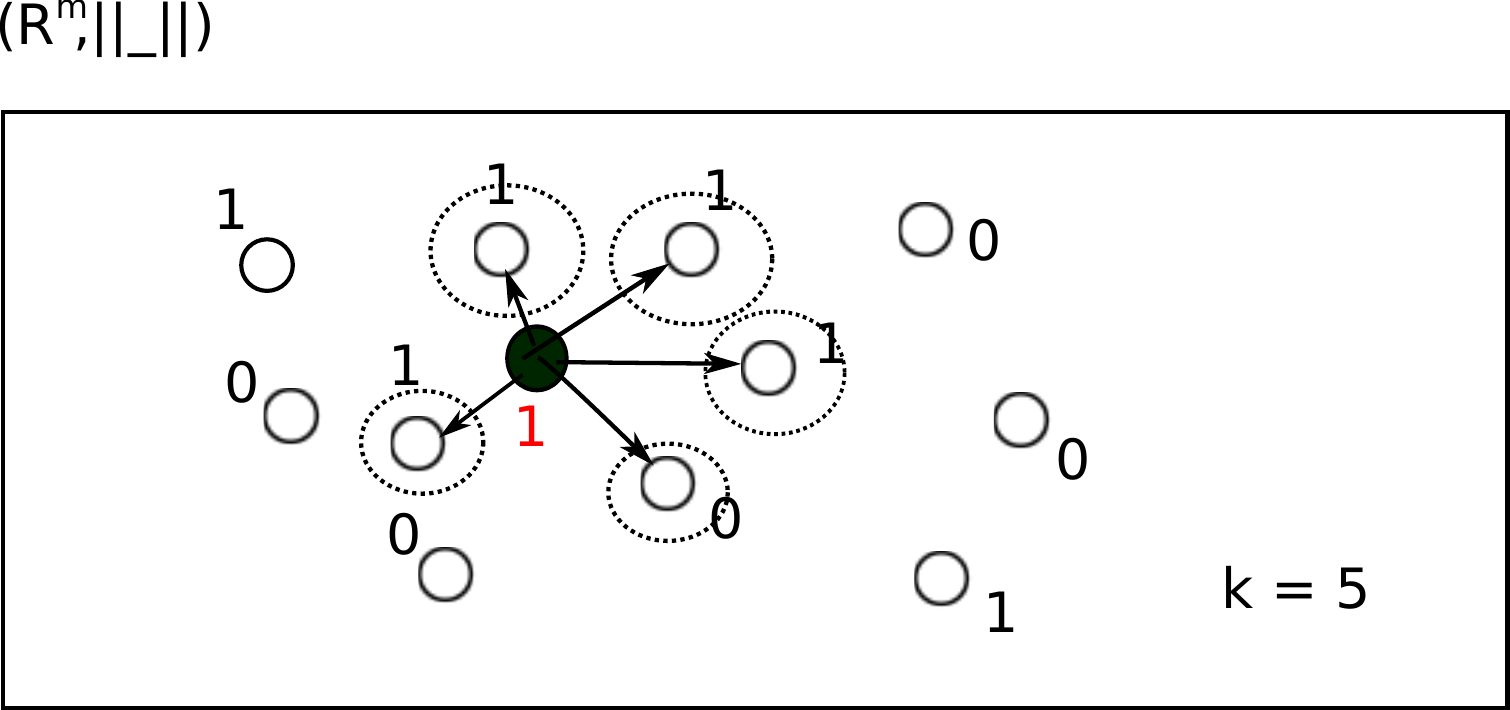}
\caption[Diagram of $k$-nearest neighbours]{The mode of the labels for the $k = 5$ closest neighbours (open, with dotted circles) to the query point (black) is $Y = 1$; thus, this query point is predicted to have label 1.}
\label{knnpic}
\end{center}
\end{figure}

More generally, according to \cite{devroyeprob}, the $k$-NN classifier belongs to a family of learning rules with the common approach of defining an estimate $\eta_n:\mathbb{R}^m\to\left[0,1\right]$, for the true regression function $\eta$, by
\[
\eta_n(x) = \sum_{i = 1}^n Y_i W_{ni}(x),
\]
where $W_{ni}(x) = W_{ni}(x,\mathcal{S}_\mathrm{lab})$ are some weights that are non-negative and add up to one. Then, a classifier $g_n$ is defined by
\begin{equation}\label{gndef}
g_n(x) = \begin{cases}
1 &\quad\textnormal{ if $\eta_n(x)=\displaystyle\sum_{i = 1}^n Y_i W_{ni}(x)>1/2$}\\
0 &\quad\textnormal{ otherwise.}
\end{cases}
\end{equation}
For the $k$-NN classifier, from the sorted training observations in (\ref{sortdist}), the weights are given as
\[
W_{ni}(x) = 1/k \quad\textnormal{ if and only if } \quad X_i \in\{X_{(1)},X_{(2)},\ldots,X_{(k)}\},
\]
and as a result, an equivalent definition for the $k$-NN classifier, denoted here as $g_n$, is
\begin{align*}
g_n(x) &=\begin{cases}
1 &\quad\textnormal{ if $\eta_n(x)= \displaystyle\frac{1}{k}\sum_{i = 1}^k Y_{(i)} >1/2$}\\
0 &\quad\textnormal{ otherwise.}
\end{cases}
\end{align*}

The $k$-NN classifier, introduced in 1967 by Cover and Hart \cite{knnorig}, is one of the first, and most intuitive, supervised learning algorithms to be developed. Stone in 1977 published the famous result that this classifier is universally consistent in the finite dimensional Euclidean space \cite{stone77}. The proof is based on Stone's Theorem, which lists three conditions that together imply universal consistency, and Stone's Lemma, which is used to prove that $k$-NN satisfies one of these conditions in the Euclidean case. More generally, in any finite dimensional normed vector space, this classifier is still universally consistent, a strong property that justifies its use in data science. Section \ref{consistknn} explains a proof of this property for the $k$-NN classifier. In particular, the classical Stone's Theorem and a generalized version of Stone's Lemma, whose proof has not been found in the literature before, are stated and proved.

\section{Universal Consistency of $k$-NN}\label{consistknn}

The goal of Section \ref{consistknn} is to prove Theorem \ref{consistencyknn} found below. The training sample $\mathcal{S}_\mathrm{lab} = \{(X_1,Y_1),(X_2,Y_2),\ldots,(X_n,Y_n)\}$ of size $n$, drawn from distribution $\mu$ and regression function $\eta$, exists but may not be explicitly mentioned throughout this section.

\begin{theo}[Consistency of the $k$-NN classifier]\label{consistencyknn}
If $k\to\infty$ and $k/n\to 0$, then the $k$-NN Classifier is universally consistent in any finite dimensional normed vector space $(\mathbb{R}^m,||\cdot||)$.
\end{theo}
\noindent The proof of this result requires two crucial theorems: Stone's Theorem and a generalized version of Stone's Lemma, and consequently, it is structured as follows.
\begin{enumerate}
\item Stone's Theorem, listing three conditions that together imply universal consistency, is stated and proved. 
\item A generalized version of Stone's Lemma, required to demonstrate that the $k$-NN classifier satisfies one of the conditions in Stone's Theorem, and its proof are given. 
\item The universal consistency of the $k$-NN classifier follows as a corollary from these two theorems.
\end{enumerate}

\subsubsection{1. Stone's Theorem}

Stone proved the following theorem in 1977 which gives sufficient conditions for a learning rule to be universally consistent \cite{stone77}.

\begin{theo}[Stone's Theorem \cite{stone77}]\label{stonetheorem}
Let a learning rule $\{g_n\}$ be defined as in (\ref{gndef}). Suppose the following for any distribution $\mu$ and regression function $\eta$:
\begin{enumerate}
\item There exists $c\in\mathbb{R}$ such that for every non-negative measurable function $f$ on $\mathbb{R}^m$ with finite expectation,
\[
\mathbb{E}\left( \sum_{i = 1}^n W_{ni}(X) f(X_i)        \right) \leq c \mathbb{E}(f(X))
\]
\item For all $a>0$,
\[
\mathbb{E}\left(\sum_{i = 1}^n W_{ni}(X) 1_{\{ ||X_i  - X||>a\}}      \right)\to 0,
\]
where $1_{\{ ||X_i  - X||>a\}} = 1$ if and only if $||X_i  - X||>a$
\item
\[
\mathbb{E}\left(\max_{i = 1}^n W_{ni}(X)      \right)\to 0.
\]
\end{enumerate}
Then $\{g_n\}$ is universally consistent in $(\mathbb{R}^m,||\cdot||)$.
\end{theo}

\subsubsection{1. Proof of Stone's Theorem}

The proof of Stone's Theorem requires the notion of convexity and the finite version of Jensen's inequality, see e.g. \cite{jensenfinite}, along with a few lemmas. Note that the proof for Stone's Theorem presented here is entirely based on \cite{devroyeprob}. For a random pair $(X,Y)\in\mathbb{R}^m\times\{0,1\}$ with distribution $\mu$ and regression function $\eta$, Lemma \ref{regestimate} relates the difference in the learning error for a classifier defined according to the estimated regression function $\eta_n$ and the Bayes error to the expectation of $(\eta(X) - \eta_n(X))^2$. Lemmas \ref{squarecorollary} and \ref{squareinequality} are two quick observations about scalars in $\mathbb{R}$.

\begin{defn}
A function $\varphi:\mathbb{R}\to\mathbb{R}$ is {\em convex} if for all $t\in\left[0,1\right]$ and for all $x,y\in\mathbb{R}$,
\[
\varphi(tx + (1-t)y)\leq t\varphi(x) + (1-t)\varphi(y).
\]
\end{defn}

\begin{theo}[Jensen's inequality - finite version]\label{jensen}
For a convex function $\varphi$, for weights $a_1,a_2,\ldots,a_n\in\mathbb{R}^+$ such that $\sum_{i = 1}^n a_i = 1$, and for any real numbers $b_1,\ldots,b_n$,
\[
\varphi\left(      \sum_{i = 1}^n a_ib_i  \right)\leq \sum_{i = 1}^n a_i \varphi(b_i).
\]
\end{theo}
\begin{proof}
We proceed by induction. For $n = 1$, the inequality holds trivially since $a_1 = 1$ by assumption. Suppose the inequality holds for $n$ and we must show that
\[
\varphi\left(      \sum_{i = 1}^{n+1} a_ib_i  \right)\leq \sum_{i = 1}^{n+1} a_i \varphi(b_i).
\]
Indeed,
\begin{align*}
\varphi\left(      \sum_{i = 1}^{n+1} a_ib_i  \right) &= \varphi\left(      \sum_{i = 1}^{n} a_ib_i +a_{n+1}b_{n+1} \right)\\
&=\varphi\left(      \sum_{i = 1}^{n} a_ib_i +\left(  1 - \sum_{i = 1}^n a_i   \right) b_{n+1} \right)\\
& =\varphi\left(    \left( \sum_{i = 1}^n a_i    \right)  \left(\frac{\sum_{i = 1}^{n} a_ib_i}{\sum_{i = 1}^n a_i}\right) +\left(  1 - \sum_{i = 1}^n a_i   \right) b_{n+1} \right)\\
\textnormal{(by convexity of $\varphi$)}\quad&\leq \left( \sum_{i = 1}^n a_i  \right)\varphi\left(  \frac{\sum_{i = 1}^{n} a_ib_i}{\sum_{i = 1}^n a_i} \right) + \left(  1 - \sum_{i = 1}^n a_i   \right) \varphi(b_{n+1})\\
& = \left( \sum_{i = 1}^n a_i  \right)\varphi\left( \sum_{i = 1}^{n} \left(\frac{a_i}{\sum_{i = 1}^n a_i}\right)b_i \right) + \left(  1 - \sum_{i = 1}^n a_i   \right) \varphi(b_{n+1})\\
\textnormal{(by the induction hypothesis)}\quad&\leq\left( \sum_{i = 1}^n a_i  \right)\left( \sum_{i = 1}^{n} \left(\frac{a_i}{\sum_{i = 1}^n a_i}\right)\varphi(b_i) \right) + \left(  1 - \sum_{i = 1}^n a_i   \right) \varphi(b_{n+1})\\
& = \sum_{i = 1}^n a_i\varphi(b_i) + a_{n +1}\varphi(b_{n+1})\\
&=\sum_{i = 1}^{n+1} a_i \varphi(b_i).
\end{align*}
\end{proof}
\begin{lem}\label{regestimate}
Given any estimate $\eta_n$ of the regression function $\eta$ of $Y$, if a classifier $g_n:\mathbb{R}^m\to\{0,1\}$ is defined by $\eta_n$, i.e.
\[
g_n(x) = \begin{cases}
1 &\quad\textnormal{ if $\eta_n(x)=\displaystyle\sum_{i = 1}^n Y_i W_{ni}(x)>1/2$}\\
0 &\quad\textnormal{ otherwise,}
\end{cases}
\]
then the following holds:
\[
\mathrm{Pr}(g_n(X)\neq Y) - L^*\leq 2\sqrt{\mathbb{E}((\eta(X) - \eta_n(X))^2)}
\]
\end{lem}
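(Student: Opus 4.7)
The plan is to bound the excess risk $\mathrm{Pr}(g_n(X)\neq Y) - L^*$ by a pointwise inequality involving $|\eta(X) - \eta_n(X)|$, and then apply Jensen's inequality with the convex function $\varphi(t) = t^2$ to pass to the $L^2$-norm on the right.

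First, I would establish a clean pointwise formula for the excess conditional error. Reusing the calculation done in the proof of Theorem \ref{lowesterror}, for any classifier $g$ and any $x \in \mathbb{R}^m$,
\[
\mathrm{Pr}(g(X)\neq Y \mid X = x) - \mathrm{Pr}(g^*(X)\neq Y \mid X = x) = |2\eta(x) - 1|\cdot 1_{\{g(x)\neq g^*(x)\}},
\]
since the difference was already computed there to be $0$ when $g(x) = g^*(x)$, $2\eta(x) - 1$ when $g^*(x)=1,\ g(x)=0$, and $1-2\eta(x)$ when $g^*(x)=0,\ g(x)=1$. Taking expectations over $X$ yields
\[
\mathrm{Pr}(g_n(X)\neq Y) - L^* = \mathbb{E}\left(|2\eta(X) - 1|\cdot 1_{\{g_n(X)\neq g^*(X)\}}\right).
\]

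The main substantive step (and the only one requiring any insight) is the observation that on the event $\{g_n(x)\neq g^*(x)\}$, the two numbers $\eta(x)$ and $\eta_n(x)$ must lie on opposite sides of the threshold $1/2$. Indeed, $g^*(x)$ and $g_n(x)$ are obtained by thresholding $\eta$ and $\eta_n$ respectively at $1/2$, and they disagree exactly when one of $\eta(x), \eta_n(x)$ exceeds $1/2$ while the other does not. Consequently $|\eta(x) - 1/2| \leq |\eta(x) - \eta_n(x)|$ on this event, which rearranges to
\[
|2\eta(x) - 1|\cdot 1_{\{g_n(x)\neq g^*(x)\}} \leq 2|\eta(x) - \eta_n(x)|\cdot 1_{\{g_n(x)\neq g^*(x)\}} \leq 2|\eta(x) - \eta_n(x)|.
\]
This is the pointwise inequality I need; in a plan like this the key is just to notice that thresholding at a common value controls the sign structure.

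Finally, combining the two displays and integrating gives
\[
\mathrm{Pr}(g_n(X)\neq Y) - L^* \leq 2\,\mathbb{E}\left(|\eta(X) - \eta_n(X)|\right).
\]
To convert the $L^1$-norm into the $L^2$-norm appearing in the claim, I would apply Jensen's inequality (Theorem \ref{jensen}) to the convex function $\varphi(t) = t^2$, treating the expectation as a weighted average; this yields $\mathbb{E}(|\eta(X)-\eta_n(X)|)^2 \leq \mathbb{E}((\eta(X)-\eta_n(X))^2)$, so taking square roots produces the desired bound $2\sqrt{\mathbb{E}((\eta(X)-\eta_n(X))^2)}$. I do not foresee any serious obstacle: the identification of $\mathrm{Pr}(g(X)\neq Y|X=x) - \mathrm{Pr}(g^*(X)\neq Y|X=x)$ with $|2\eta(x)-1|\cdot 1_{\{g\neq g^*\}}$ is already essentially done in the earlier proof, and the opposite-sides-of-$1/2$ observation is the only place where a small argument is required.
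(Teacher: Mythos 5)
Your proposal is correct and follows essentially the same route as the paper's proof: the pointwise identity from Theorem \ref{lowesterror}, the observation that on the disagreement event $\eta(x)$ and $\eta_n(x)$ straddle $1/2$ so that $|\eta(x)-1/2|\leq|\eta(x)-\eta_n(x)|$, and a final application of Jensen's inequality to pass from $\mathbb{E}(|\eta(X)-\eta_n(X)|)$ to $\sqrt{\mathbb{E}((\eta(X)-\eta_n(X))^2)}$. The only cosmetic difference is that, like the paper, you invoke the finite-form Theorem \ref{jensen} where strictly the measure-theoretic version (or Cauchy--Schwarz) is what is used, which the paper itself acknowledges by citing the measure-theoretic statement.
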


\begin{proof}
Based on the proof of Theorem \ref{lowesterror},
\[
\mathrm{Pr}(g_n(X)\neq Y|X = x) - \mathrm{Pr}(g^*(X)\neq Y|X = x) =  |2\eta(x) - 1|1_{\{g_n(x) \neq g^*(x)\}}.
\]
Hence,
\begin{align*}
\mathrm{Pr}(g_n(X)\neq Y) - L^* & = \mathrm{Pr}(g_n(X)\neq Y) - \mathrm{Pr}(g^*(X)\neq Y)\\
& = 2\int_{\mathbb{R}^m} |\eta(x) - 1/2|1_{\{g_n(x)\neq g^*(x)\}} d\mu(x).
\end{align*}
Note that if $g_n(x) = 1$ and $g^*(x) = 0$, then $\eta(x) \leq 1/2$ and $\eta_n(x)> 1/2$, so
\begin{equation}\label{etahalf}
|\eta(x) - 1/2| \leq |\eta(x) - \eta_n(x)| 
\end{equation}
and similarly, if $g_n(x) = 0$ and $g^*(x) = 1$, the same inequality as in (\ref{etahalf}) holds. As a result,
\begin{align*}
\mathrm{Pr}(g_n(X)\neq Y) - L^* & \leq  2\int_{\mathbb{R}^m} |\eta(x) - \eta_n(x)| d\mu(x)\\
& = 2\mathbb{E}(|\eta(X) - \eta_n(X)|)\\
& \leq 2 \sqrt{\mathbb{E}((\eta(X) - \eta_n(X))^2)}.
\end{align*}
The last inequality follows from the measure-theoretic version of Jensen's inequality (Theorem \ref{jensen}), referenced from e.g. \cite{rudin}.
\end{proof}
\begin{lem}\label{squarecorollary}
For weights $a_1,a_2,\ldots,a_n\in\mathbb{R}^+$ such that $\sum_{i = 1}^n a_i = 1$, and for any real numbers $b_1,b_2,\ldots,b_n$,
\[
\left(      \sum_{i = 1}^n a_ib_i  \right)^2\leq \sum_{i = 1}^n a_i b_i^2.
\]
\end{lem}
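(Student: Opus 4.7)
The plan is to recognize this lemma as a direct application of the finite Jensen's inequality (Theorem \ref{jensen}) to the squaring function. Specifically, I will set $\varphi(x) = x^2$ and verify that $\varphi$ is convex on $\mathbb{R}$, which is a standard fact (the second derivative is $\varphi''(x) = 2 > 0$, or equivalently, expanding $(tx + (1-t)y)^2 - tx^2 - (1-t)y^2 = -t(1-t)(x-y)^2 \leq 0$ for $t \in [0,1]$ shows the defining inequality directly).

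Once convexity of $\varphi(x) = x^2$ is established, the lemma follows by plugging $\varphi$, the weights $a_1,\ldots,a_n$, and the values $b_1,\ldots,b_n$ into the conclusion of Theorem \ref{jensen}:
\[
\left(\sum_{i=1}^n a_i b_i\right)^2 = \varphi\!\left(\sum_{i=1}^n a_i b_i\right) \leq \sum_{i=1}^n a_i \varphi(b_i) = \sum_{i=1}^n a_i b_i^2.
\]
The hypotheses of Theorem \ref{jensen} are exactly the hypotheses assumed in the lemma (non-negative weights summing to one, arbitrary real numbers $b_i$), so no additional justification is required.

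There is no real obstacle here; the only minor point worth spelling out is the convexity of the squaring function, which I would verify in one line using the elementary expansion above rather than invoking calculus, keeping the argument self-contained. The whole proof should be essentially two or three lines of display math.
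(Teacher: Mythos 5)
Your proposal is correct and matches the paper's proof exactly: the paper disposes of this lemma with the single line ``Again, by Jensen's inequality, Theorem \ref{jensen},'' which is precisely your application of the finite Jensen inequality to $\varphi(x) = x^2$. Your one-line verification of convexity via the expansion $(tx+(1-t)y)^2 - tx^2 - (1-t)y^2 = -t(1-t)(x-y)^2 \leq 0$ is a small, sound addition that the paper leaves implicit.
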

\begin{proof}
Again, by Jensen's inequality, Theorem \ref{jensen}.
\end{proof}
\begin{lem}\label{squareinequality}
For any $a,b,c\in\mathbb{R}$, the following holds.
\begin{enumerate}
\item $(a+b)^2 \leq 2(a^2 + b^2)$
\item $(a+b+c)^2 \leq 3(a^2 + b^2 + c^2)$
\end{enumerate}
\end{lem}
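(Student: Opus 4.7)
The plan is to recognize both inequalities as immediate specializations of Lemma \ref{squarecorollary}, which has just been established. Applying that lemma with the uniform weights $a_i = 1/n$ (non-negative and summing to one) to arbitrary real numbers $b_1, \ldots, b_n$ yields
\[
\left(\frac{1}{n}\sum_{i=1}^n b_i\right)^2 \leq \frac{1}{n}\sum_{i=1}^n b_i^2,
\]
and multiplying through by $n^2$ gives the general bound $\left(\sum_{i=1}^n b_i\right)^2 \leq n \sum_{i=1}^n b_i^2$. Part 1 is then the case $n=2$ with $(b_1, b_2) = (a, b)$, and part 2 is the case $n=3$ with $(b_1, b_2, b_3) = (a, b, c)$.

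Alternatively, I would give a completely self-contained proof by routine algebra. For part 1, expanding shows $2(a^2 + b^2) - (a+b)^2 = a^2 - 2ab + b^2 = (a-b)^2 \geq 0$. For part 2, I would verify the identity
\[
3(a^2 + b^2 + c^2) - (a + b + c)^2 = (a-b)^2 + (b-c)^2 + (a-c)^2,
\]
whose right-hand side is manifestly non-negative as a sum of squares.

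There is no genuine obstacle here; the statement is an elementary corollary of the convexity of $x \mapsto x^2$, and its role in the chapter is merely to package the two concrete cases that will be needed when bounding mean-squared terms involving three summands later in the proof of Stone's Theorem. Given how short the expansion argument is, I would most likely present the direct algebraic proof rather than invoking Lemma \ref{squarecorollary}, both for brevity and to keep the proof accessible without any appeal to Jensen's inequality.
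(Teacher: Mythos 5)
Your proposal is correct; in fact the paper states Lemma \ref{squareinequality} with no proof at all, so there is nothing for you to diverge from. Both of your routes are valid: specializing Lemma \ref{squarecorollary} with the uniform weights $a_i = 1/n$ gives $\left(\sum_{i=1}^n b_i\right)^2 \leq n\sum_{i=1}^n b_i^2$, whose cases $n=2,3$ are exactly the two claims, and this is presumably the derivation the paper intends, given that the lemma is placed immediately after the Jensen specialization. Your direct algebraic check is also right, since $2(a^2+b^2)-(a+b)^2=(a-b)^2$ and $3(a^2+b^2+c^2)-(a+b+c)^2=(a-b)^2+(b-c)^2+(a-c)^2$, and it has the minor advantage of being self-contained; either version would be a fine way to fill the gap the paper leaves implicit.
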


With these lemmas in hand, Stone's Theorem can now be proved.

\begin{proof}[Stone's Theorem]
By Lemma \ref{regestimate}, it suffices to prove that
\[
\mathbb{E}((\eta(X) - \eta_n(X))^2)\to 0.
\]
Let $\epsilon>0$. Denote
\[
{\hat \eta_n(x)} = \sum_{i = 1}^n \eta(X_i) W_{ni}(x)
\]
and we have that
\begin{align*}
\mathbb{E}((\eta(X) - \eta_n(X))^2) & = \mathbb{E}((\eta(X) - {\hat \eta_n(X)}+ {\hat \eta_n(X)}- \eta_n(X))^2)\\
\textnormal{(by Lemma \ref{squareinequality})}\quad& \leq 2\mathbb{E}((\eta(X) - {\hat \eta_n(X)})^2) + 2\mathbb{E}(    ({\hat \eta_n(X)}- \eta_n(X))^2   )\\
&=2\mathbb{E}\left(\left(\sum_{i = 1}^nW_{ni}(X)(\eta(X) - \eta(X_i))\right)^2\right) + 2\mathbb{E}(    ({\hat \eta_n(X)}- \eta_n(X))^2   )\\
\textnormal{(by Lemma \ref{squarecorollary})}\quad& \leq 2\mathbb{E}\left(\sum_{i = 1}^nW_{ni}(X)(\eta(X) - \eta(X_i))^2\right) + 2\mathbb{E}(    ({\hat \eta_n(X)}- \eta_n(X))^2   )\\
\end{align*}

We would now like to show that both terms of the right-hand expression can be made arbitrarily small. Since continuous functions with bounded support are uniformly continuous and dense in $L_2(\mu)$, there exists some uniformly continuous function $\eta^*$ such that
\[
\mathbb{E}(     (\eta(X) - \eta^*(X))^2  )<\epsilon.
\]
Thus,
\begin{align*}
&\hspace{-30mm}\mathbb{E}\left(\sum_{i = 1}^nW_{ni}(X)(\eta(X) - \eta(X_i))^2\right)\\
&\hspace{-3mm} = \mathbb{E}\left(   \sum_{i = 1}^n W_{ni}(X) ( \eta(X) - \eta^*(X) + \eta^*(X) - \eta^*(X_i) +  \eta^*(X_i) - \eta(X_i)             )^2           \right)\\
\textnormal{(by Lemma \ref{squareinequality})}\quad& \hspace{-3mm}\leq 3 \mathbb{E}\left(  \sum_{i = 1}^n W_{ni}(X) ( \eta(X) - \eta^*(X))^2  \right)\\
&\hspace{-3mm}+3 \mathbb{E}\left(  \sum_{i = 1}^n W_{ni}(X) (\eta^*(X) - \eta^*(X_i))^2  \right)\\
&\hspace{-3mm}+3 \mathbb{E}\left(   \sum_{i = 1}^n W_{ni}(X) (\eta^*(X_i) - \eta(X_i))^2 \right)\\
\textnormal{(by assumption 1)}\quad&\hspace{-3mm}<3\epsilon + 3 \mathbb{E}\left(  \sum_{i = 1}^n W_{ni}(X) (\eta^*(X) - \eta^*(X_i))^2  \right) + 3c\epsilon.
\end{align*}
For the middle term, there exists $a>0$ such that if $||X - X_i||\leq a$, then $|\eta^*(X)- \eta^*(X_i)|<\sqrt{\epsilon}$ because $\eta^*$ is uniformly continuous. Consequently,
\begin{align*}
 \mathbb{E}\left(  \sum_{i = 1}^n W_{ni}(X) (\eta^*(X) - \eta^*(X_i))^2  \right) & =  \mathbb{E}\left(  \sum_{i = 1}^n W_{ni}(X) (\eta^*(X) - \eta^*(X_i))^2 1_{\{||X_i - X||>a\}}\right) \\
 &+  \mathbb{E}\left(  \sum_{i = 1}^n W_{ni}(X) (\eta^*(X) - \eta^*(X_i))^2 1_{\{||X_i - X||\leq a\}} \right)\\
 &< \mathbb{E}\left(  \sum_{i = 1}^n W_{ni}(X) 1_{\{||X_i - X||>a\}}\right) + \epsilon\\
\textnormal{(by assumption 2)}\quad & < \epsilon + \epsilon = 2\epsilon.
\end{align*}
The first inequality above is due to the fact that $|\eta^*(X) - \eta^*(X_i)|\leq 1$. As a result,
\begin{align*}
\mathbb{E}\left(\sum_{i = 1}^nW_{ni}(X)(\eta(X) - \eta(X_i))^2\right)&<3\epsilon + 3 \mathbb{E}\left(  \sum_{i = 1}^n W_{ni}(X) (\eta^*(X) - \eta^*(X_i))^2  \right) + 3c\epsilon\\
&<3\epsilon + 6\epsilon + 3c\epsilon = 3\epsilon(3+c)
\end{align*}

For the second term,
\begin{align*}
\mathbb{E}(    (   {\hat \eta_n(X)} - \eta_n(X)     )^2       ) & = \mathbb{E}\left(           \left(   \sum_{i = 1}^n\eta(X_i)W_{ni}(X) - \sum_{i = 1}^n Y_i W_{ni}(X)         \right)^2     \right)\\
& = \mathbb{E}\left(           \left(   \sum_{i = 1}^n W_{ni}(X) ( \eta(X_i) - Y_i)         \right)^2     \right)\\
& = \mathbb{E}\left(    \sum_{i = 1}^n \sum_{j = 1}^n W_{ni}(X)W_{nj}(X)(   \eta(X_i) - Y_i)  (   \eta(X_j) - Y_j)     \right)\\
& =     \sum_{i = 1}^n \sum_{j = 1}^n \mathbb{E}(W_{ni}(X)W_{nj}(X)(   \eta(X_i) - Y_i)  (   \eta(X_j) - Y_j))  \\ 
& =\mathbb{E}\left( \sum_{i = 1}^n ( W_{ni}(X))^2 (\eta(X_i) - Y_i)^2\right),
\end{align*}
since the sample $\mathcal{S}_\mathrm{lab} = \{(X_1,Y_1),(X_2,Y_2),\ldots,(X_n,Y_n)\}$ is assumed to be independent. As a result,
\begin{align*}
\mathbb{E}(    (   {\hat \eta_n(X)} - \eta_n(X)     )^2       ) & = \mathbb{E}\left( \sum_{i = 1}^n ( W_{ni}(X))^2 (\eta(X_i) - Y_i)^2\right)\\
& \leq \mathbb{E}\left(  \sum_{i = 1}^n ( W_{ni}(X)   )^2    \right)\\
& \leq \mathbb{E}\left(  \sum_{i = 1}^n W_{ni}(X)  \left(\max_{i = 1}^n W_{ni}(X)\right)    \right)\\
& = \mathbb{E}\left(    \max_{i = 1}^n W_{ni}(X)  \right)<\epsilon,
\end{align*}
by assumption 3.

Altogether, we have that
\begin{align*}
\mathbb{E}((\eta(X) - \eta_n(X))^2) &\leq 2\mathbb{E}\left(\sum_{i = 1}^nW_{ni}(X)(\eta(X) - \eta(X_i))^2\right) + 2\mathbb{E}(    ({\hat \eta_n(X)}- \eta_n(X))^2   )\\
&<6\epsilon(3+c) + 2\epsilon \to 0,
\end{align*}
which concludes this proof.
\end{proof}

\subsubsection{2. A Generalized Version of Stone's Lemma}

To prove the universal consistency of the $k$-NN classifier, by Stone's Theorem, it is now sufficient to prove that this classifier satisfies the three conditions in Theorem \ref{stonetheorem}. However, its first condition requires a generalized version of Stone's Lemma. The original Stone's Lemma was first stated and proved by Stone in \cite{stone77} for the Euclidean space $(\mathbb{R}^m,||\cdot||) = (\mathbb{R}^m,||\cdot||_2)$, and was used to prove that the $k$-NN classifier is universally consistency in this space. The generalized version of this lemma, whose proof has not been found in literature before, is required when the domain is assumed to be any finite dimensional normed vector space, not necessarily Euclidean.

\begin{theo}[Generalized version of Stone's Lemma]
In a finite dimensional normed vector space $(\mathbb{R}^m,||\cdot||)$, let $f:\mathbb{R}^m\to\mathbb{R}$ be any measurable function and let $k\leq n$. Suppose $X,X_1,X_2,\ldots,X_n$ are independent and identically distributed from $\mu$. Then,
\[
\frac{1}{k}\sum_{i = 1}^k \mathbb{E}(|f(X_{(i)})|)\leq c\mathbb{E}(|f(X)|),
\]
where $X_{(1)},\ldots,X_{(n)}$ are ordered with respect to the distance induced by the norm $||\cdot||$ from $X = x$, as in (\ref{sortdist}), and $c$ is an absolute constant, which does not depend on $n$ nor $k$ (but may depend on the norm).
\end{theo}

\subsubsection{2. Proof of the Generalized Version of Stone's Lemma}

A covering lemma for $(\mathbb{R}^m,||\cdot||)$ is required in the proof of the generalized version of Stone's Lemma, and some additional notations used in this section are given below. For $x \in \mathbb{R}^m$ and radius $r\geq 0$, define respectively the closed ball, sphere, and open ball, centred at $x$ with radius $r$, by
\begin{align*}
B_{x,r} &= \{y \in \mathbb{R}^m: ||x - y||\leq r\}\\
S_{x,r} &= \{y \in \mathbb{R}^m: ||x - y|| = r\}\\
U_{x,r} &= \{y \in \mathbb{R}^m: ||x - y|| < r\}.
\end{align*}

\begin{lem}\label{finitedimcovering}
Let $(\mathbb{R}^m,||\cdot||)$ be a finite dimensional normed vector space. Then, we can write
\[
\mathbb{R}^m = \bigcup_{i = 1}^c A_i,
\]
so that for every $i = 1,2,\ldots,c$ and for all $x,y\in A_i$,
\[
||y||> ||x|| \Rightarrow ||y - x|| < ||y||.
\]
\end{lem}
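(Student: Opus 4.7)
The plan is to cover $\mathbb{R}^m$ by finitely many narrow cones emanating from the origin, together with the singleton $\{0\}$, exploiting the compactness of the unit sphere $S_{0,1}$ in the finite-dimensional normed space $(\mathbb{R}^m,||\cdot||)$. Geometrically, if $x$ and $y$ lie in a sufficiently thin cone and $||y||>||x||$, then $y-x$ points roughly from $0$ toward $y$ but is shortened by the nonzero ``radial piece'' contributed by $x$, which should make $x$ strictly closer to $y$ than the origin is.

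First, because $(\mathbb{R}^m,||\cdot||)$ is finite-dimensional, all norms on it are equivalent, so $S_{0,1}$ is compact. I would fix any $\varepsilon\in(0,1/2)$, for instance $\varepsilon=1/3$, and use compactness to choose finitely many points $z_1,\ldots,z_{c-1}\in S_{0,1}$ with $S_{0,1}\subseteq\bigcup_{j=1}^{c-1}U_{z_j,\varepsilon}$. I would then define the cones
\[
A_j=\{x\in\mathbb{R}^m\setminus\{0\}:x/||x||\in U_{z_j,\varepsilon}\},\qquad j=1,\ldots,c-1,
\]
and set $A_c=\{0\}$. Every nonzero $x$ has $x/||x||\in S_{0,1}$, so it lies in some $A_j$, and $0\in A_c$, giving $\mathbb{R}^m=\bigcup_{i=1}^c A_i$.

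Second, I would verify the implication on each $A_i$. On $A_c=\{0\}$ the only pair is $x=y=0$, so the hypothesis $||y||>||x||$ fails and the implication holds vacuously. For $1\leq j\leq c-1$ and $x,y\in A_j$ with $||y||>||x||$ (in particular both nonzero), set $u=x/||x||$ and $v=y/||y||$, so that $||u-v||\leq||u-z_j||+||z_j-v||<2\varepsilon$. Writing $y-x=(||y||-||x||)v+||x||(v-u)$ and applying the triangle inequality yields
\[
||y-x||\leq(||y||-||x||)+2\varepsilon\,||x||=||y||-(1-2\varepsilon)||x||<||y||,
\]
since $1-2\varepsilon>0$ and $||x||>0$.

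The main subtlety is the exceptional role of the origin: if any single $A_j$ contained $0$ together with a nonzero $y$, then $||y||>||x||=0$ would hold but $||y-x||=||y||$ would fail to be strictly smaller than $||y||$. Isolating $0$ as its own piece $A_c$ sidesteps this obstruction, and the choice $\varepsilon<1/2$ is precisely what turns the triangle-inequality bound into the required strict inequality, so the lemma follows.
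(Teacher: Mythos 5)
Your proof is correct and follows essentially the same route as the paper's: cover the compact unit sphere $S_{0,1}$ by finitely many small open balls, pull the cover back to cones via normalization, and control $||y-x||$ by splitting off the radial part of $y$ at length $||x||$. Indeed, your identity $y-x=(||y||-||x||)v+||x||(v-u)$ is exactly the paper's auxiliary vector $y||x||/||y||=||x||v$ made algebraically explicit, and your radius $\varepsilon<1/2$ plays the role of the paper's fixed radius $1/2$ (both yield $||u-v||<1$, which is all the strict inequality needs). The one substantive difference is your treatment of the origin, and there you actually improve on the paper: the paper's construction stipulates $0\in A_i$ for \emph{every} $i$, but then taking $x=0$ and any nonzero $y\in A_i$ gives $||y||>||x||$ while $||y-x||=||y||$, so the stated implication fails for those pairs. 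Your choice to isolate $\{0\}$ as its own piece $A_c$, on which the hypothesis $||y||>||x||$ is vacuous, repairs this edge case at the harmless cost of increasing $c$ by one, and leaves the covering-number argument in the generalized Stone's Lemma untouched. So your write-up is not merely equivalent to the paper's proof but is the corrected form of it.
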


\begin{proof}
Since $(\mathbb{R}^m,||\cdot||)$ has finite dimension, the unit sphere $S_{0,1} = \{y\in\mathbb{R}^m: ||y||= 1\}$ is compact. Therefore, we can cover $S_{0,1}$ by finitely many open balls:
\[
S_{0,1} \subseteq \bigcup_{i = 1}^c U_{x_i,1/2}.
\]
For each open ball $U_{x_i,1/2}$, define the set $A_i$ as follows: if $x\in\mathbb{R}^m\setminus\{0\}$, then
\[
x\in A_i \Leftrightarrow \frac{x}{||x||} \in U_{x_i,1/2},
\]
and we require that $0\in A_i$ for all $i = 1,2,\ldots,c$. It is clear that $\mathbb{R}^m = \displaystyle\bigcup_{i = 1}^c A_i$. Figure \ref{spherecover} gives a visualization for the construction of $A_i$ from $U_{x_i,1/2}$, but note that for a general finite dimensional normed space, the unit sphere and open balls need not be ``circular" (as in Euclidean space).
\begin{figure}
\begin{center}
\includegraphics[scale = 0.75]{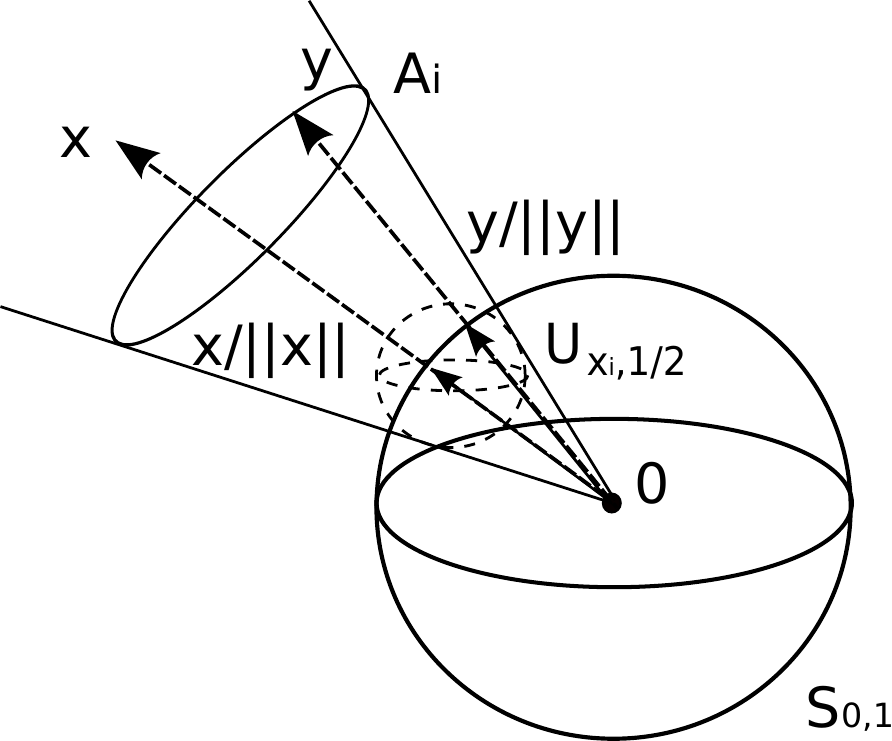}
\caption[Normalization of vectors in $(\mathbb{R}^m,||\cdot||)$]{Vectors $x,y\in\mathbb{R}^m$ are in $A_i$ if and only if their normalizations with respect to $||\cdot||$ are in $U_{x_i,1/2}$.}
\label{spherecover}
\end{center}
\end{figure}
For each $i = 1,2,\ldots,c$ and $x,y\in A_i$,
\[
\left|\left|\frac{x}{||x||} - \frac{y}{||y||}\right|\right|< 1,
\]
because if $x,y\in A_i$, then $x/||x||,y/||y||\in U_{x_i,1/2}$ and
\[
\left|\left|\frac{x}{||x||} - \frac{y}{||y||}\right|\right|\leq \left|\left|\frac{x}{||x||} -x_i\right|\right|+\left|\left| x_i-\frac{y}{||y||}\right|\right|<1/2 + 1/2 = 1.
\]
Now, we have to prove that for any $x,y\in A_i$, $||y|| > ||x||$ implies $||y - x|| < ||y||$. Suppose $||y||> ||x||$ and consider the vector $y||x||/||y||$. We have that
\begin{align*}
\left|\left| \frac{y||x||}{||y||} - x       \right|\right| &= \left|\left|||x||\left( \frac{y}{||y||} - \frac{x}{||x||}\right)    \right|\right|\\
&= ||x||\left|\left|   \frac{x}{||x||} - \frac{y}{||y||}\right|\right|\\
&< ||x||,
\end{align*}
since $||(x/||x|| - y/||y||)||<1$. Therefore,
\begin{align*}
||y - x|| &= \left|\left| y - \frac{y||x||}{||y||} + \frac{y||x||}{||y||} - x\right|\right|\\
&\leq \left|\left| y - \frac{y||x||}{||y||}\right|\right| + \left|\left|\frac{y||x||}{||y||} - x\right|\right|\\
&< \left|\left| y - \frac{y||x||}{||y||}\right|\right| + ||x||\\
& = \left|\left| y - \frac{y||x||}{||y||}\right|\right| + \left|\left|\frac{y||x||}{||y||}\right|\right|\\
& = ||y||,
\end{align*}
since $y||x||/||y||$ is on the line spanned by $y$. Figure \ref{triangulardia} provides a triangular diagram for illustrating that $||y|| > ||x||$ implies $||y - x|| < ||y||$.

\begin{figure}
\begin{center}
\includegraphics[scale = 1]{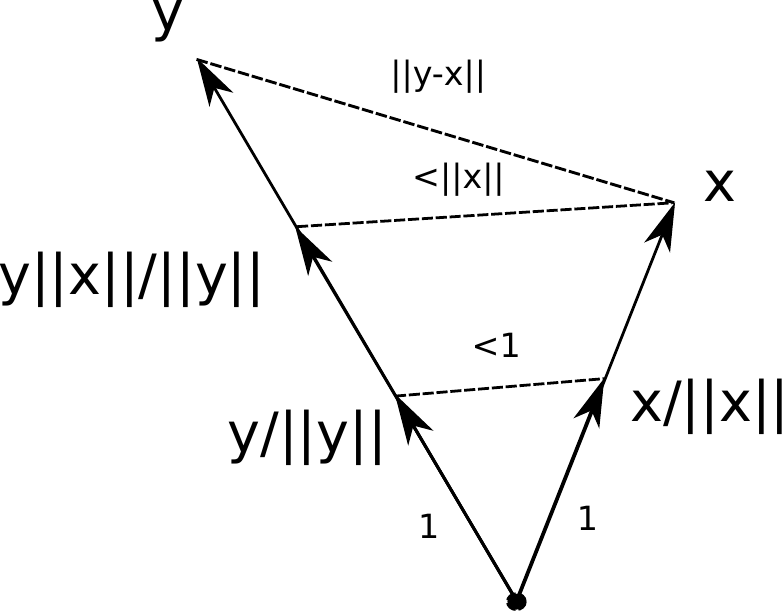}
\caption[Triangular illustration for a covering lemma for $(\mathbb{R}^m,||\cdot||)$]{This diagram illustrates the reason that $||y-x||< ||x||$ if $||y||>||x||$.}
\label{triangulardia}
\end{center}
\end{figure}

\end{proof}
The generalized version of Stone's Lemma now follows from Lemma \ref{finitedimcovering}.

\begin{proof}[Generalized version of Stone's Lemma]
First, define a function $\epsilon_{\textnormal{$k$-NN}}:\mathbb{R}^m \to \mathbb{R}^+$ by
\[
\epsilon_{\textnormal{$k$-NN}}(x) = \min\{r\geq 0: |\{i : X_i \in B_{x,r}\setminus\{x\}\}|\geq k\}.
\]
Let $k\in\mathbb{N}$ and let $X,X_1,X_2,\ldots,X_n\in\mathbb{R}^m$. By Theorem \ref{finitedimcovering}, we can write
\[
\mathbb{R}^m = \bigcup_{i = 1}^c A_i,
\]
such that for every $i = 1,2,\ldots,c$ and for all $x,y\in A_i$,
\[
||y||>||x|| \Rightarrow ||y - x|| < ||y||,
\]
where the constant $c$ does not depend on $n$ (since any two unit spheres are isometric between themselves). Translate the sets $A_i$ by $X$ and it is clear that $\{X+A_i: i = 1,\ldots,c\}$ still covers $\mathbb{R}^m$:
\[
\mathbb{R}^m = \bigcup_{i = 1}^c X + A_i.
\]

Furthermore, it is clear that for every $x,y\in X+A_i$, as $i = 1,2,\ldots,c$,
\begin{equation}\label{stonelemmaeq}
||y-X||> ||x-X|| \Rightarrow ||y - x||< ||y-X||
\end{equation}
The proof is exactly the same as Theorem \ref{finitedimcovering}, except with a translation by the vector $X$.

For each set $X+A_j$, mark the $k$-nearest neighbours of $X$ from the training sample $\{X_1,X_2,\ldots,X_n\}$ and if there are not enough such elements in $X+A_j$, mark all of them. Then, we have that
\[
|\{i: X\in B_{X_i,\epsilon_{\textnormal{$k$-NN}}(X_i)}\}|\leq |\{X_i:\textnormal{$X_i$ is marked}\}|\leq ck.
\]
The first inequality holds since if $X_i$ is not marked, then there are already at least $k$ elements in the set $\{X_1,\ldots,X_{i -1},X_{i+1},\ldots,X_n\}\cap(X +A_j)$ closer to $X$ than $X_i$. Hence, the distance between $X_i$ and each of those elements is less than the distance between $X_i$ and $X$ by (\ref{stonelemmaeq}). Then,
\begin{align*}
\sum_{i = 1}^k \mathbb{E}\left(|f(X_{(i)})|\right)&=\sum_{i = 1}^n \mathbb{E}\left(1_{\{X_i \textnormal{ is among the $k$ nearest neighbours of $X$ in $\{X_1,\ldots,X_n\}$}\}}|f(X_i)|\right)\\
& = \sum_{i = 1}^n \mathbb{E}\left(1_{\{X \textnormal{ is among the $k$ nearest neighbours of $X_i$ in $\{X_1,\ldots,X_{i-1},X,X_{i+1},\ldots,X_n\}$}\}}|f(X)|\right)\\
& \leq \sum_{i = 1}^n \mathbb{E}\left(1_{\{X \in B_{X_i,\epsilon_{\textnormal{$k$-NN}}(X_i)}\}}|f(X)|\right)\\
& = \mathbb{E}\left(|f(X)|\sum_{i = 1}^n 1_{\{X \in B_{X_i,\epsilon_{\textnormal{$k$-NN}}(X_i)}\}}\right)\\
& \leq \mathbb{E}\left(|f(X)| ck\right)\\
&=ck\mathbb{E}\left(|f(X)|\right).
\end{align*}
Hence,
\[
\frac{1}{k}\sum_{i = 1}^k \mathbb{E}\left(|f(X_{(i)})|\right) \leq c\mathbb{E}\left(|f(X)|\right).
\]
\end{proof}

\subsubsection{3. Proof of Consistency of the $k$-NN Classifier}

The universal consistency of the $k$-NN classifier follows from Stone's Theorem and the generalized version of Stone's Lemma.

\begin{proof}[Theorem \ref{consistencyknn}]
It suffices to prove that the assumptions for Stone's Theorem are valid for the $k$-NN classifier. Assumption 3 is true since $k\to \infty$:
\[
\max_{i = 1}^n W_{ni}(X) \leq 1/k \to 0.
\]
For assumption 2, we would like to prove that for all $a>0$,
\[
\mathbb{E}\left(  \frac{1}{k} \sum_{i = 1}^k 1_{\{||X_{(i)} - X||>a\}}     \right)\to 0.
\]
We have that
\begin{align*}
\mathbb{E}\left(  \frac{1}{k} \sum_{i = 1}^k 1_{\{||X_{(i)} - X||>a\}}     \right)& = \frac{1}{k}\sum_{i = 1}^k \mathbb{E}\left(1_{\{||X_{(i)} - X||>a\}} \right) \\
& = \frac{1}{k}\sum_{i = 1}^k \mathrm{Pr}(||X_{(i)} - X||>a)\\
& \leq \mathrm{Pr}(||X_{(k)} - X||>a).
\end{align*}
We note that $||X_{(k)} - X||>a$ if and only if
\[
\sum_{i = 1}^n 1_{\{X_i\in B_{X,a}\}} < k,
\]
or equivalently,
\[
\frac{1}{n}\sum_{i = 1}^n 1_{\{X_i\in B_{X,a}\}} < k/n\to 0,
\]
by assumption, but $\frac{1}{n}\sum_{i = 1}^n 1_{\{X_i\in B_{X,a}\}}$ converges to $\mu(B_{X,a})>0$ almost surely. As a result,
\[
\mathbb{E}\left(  \frac{1}{k} \sum_{i = 1}^k 1_{\{||X_{(i)} - X||>a\}}     \right)\leq \mathrm{Pr}(||X_{(k)} - X||>a)\to 0.
\]

Finally, assumption 1 is valid due to the generalized version of Stone's Lemma since
\begin{align*}
\mathbb{E}\left( \sum_{i = 1}^n W_{ni}(X)f(X_i)    \right) & = \left(\frac{1}{k}\right)\mathbb{E}\left( \sum_{i = 1}^k f(X_{(i)})    \right)\\
& \leq c\mathbb{E}(f(X)).
\end{align*}
\end{proof}

Altogether, Chapter \ref{supervisedalg} has introduced the $k$-NN classifier and demonstrated that it is universally consistent in any finite dimensional normed space. However, in general metric spaces, this classifier need not be consistent. For instance, C\'erou and Guyader in \cite{frenchguys} provide an example of a Gaussian distribution on an infinite-dimensional Hilbert space  where the $k$-NN classifier is not consistent. Chapter \ref{decisiontreesect} below explains another learning algorithm named the Random Forest classifier, based on Decision Trees.

\cleardoublepage

\chapter{The Random Forest Classifier}\label{decisiontreesect}

The goal of Chapter \ref{decisiontreesect} is to introduce Random Forest, one of the most popular supervised learning algorithms in data science \cite{rfbreiman}. Since Random Forest is a generalization of the Decision Tree classifier, the construction of a Decision Tree is explained as well. Section \ref{truedtree} discusses how Decision Tree trains on a labeled sample to predict labels for new observations. Section \ref{rfsection} then explains how the Random Forest classifier generalizes Decision Tree, by constructing multiples Decision Trees and building predictions based on the majority vote of these trees.

\section{The Decision Tree Classifier}\label{truedtree}

The Decision Tree classifier is a supervised learning algorithm, see e.g. \cite{machinelearn}, which builds a binary tree-like predictor based on a training sample. Section \ref{truedtree} explains this classifier in line with one of its most common implementations today, called the Classification and Regression Tree (CART) algorithm, developed by Breiman et al. in 1984 \cite{cart}.

 Given a labeled sample, the main idea of the Decision Tree classifier is to divide the sample into two disjoint subsets using an optimal binary splitting decision, at a specific coordinate (or feature) of the domain, according to some class homogeneity condition. This process then repeats for each of the two subsets in a recursive manner, and the recursion terminates when each divided subset contains only observations of the same class or when further divisions no longer improve class homogeneities. Each final subset is associated to a class label based on the mode of its observations' labels. For a new observation, the binary splitting decisions force it to one of the subsets, and its associated label becomes the observation's prediction.

Geometrically, the divisions of the training sample correspond to partitioning the domain into disjoint hyper-rectangles (possibly discrete or with sides of infinite length), parallel to the axes, based on the training sample, where each hyper-rectangle corresponds to a class label. The Decision Tree classifier predicts the label for a new observation by considering which hyper-rectangle the observation falls in.

\subsubsection{Formal Definition of a Decision Tree}

The following provides a complete and formal explanation of the Decision Tree classifier. First, the notion of class homogeneity is defined in terms of {\em entropy} and optimality of the binary splitting decision, at a coordinate $j$, is defined in terms of maximal {\em information gain}, see e.g. \cite{entropy}. Then, a complete procedure for the classifier is given, along with an example of a Decision Tree constructed from a real-life dataset. Because the Decision Tree classifier can be applied for observations from a domain with either discrete or continuous coordinates (or both), this section assumes that an observation $(X_i,Y_i)$ from a training sample $\mathcal{S}_\mathrm{lab} = \{(X_1,Y_1),(X_2,Y_2),\ldots,(X_n,Y_n)\}$ has coordinates
\[
X_i = (X_{i1},X_{i2},\ldots,X_{im}),
\]
where $X_{ij} \in  \mathbb{R}$ (continuous) or $X_{ij} \in \Omega$ for $|\Omega|<\infty$ (discrete). 

\begin{defn}
Given a set of labeled observations
\begin{align*}
\mathcal{S}_\mathrm{lab} &= \{(X_1,Y_1),(X_2,Y_2),\ldots,(X_n,Y_n)\},
\end{align*}
the {\em entropy $I_E$} of $\mathcal{S}_\mathrm{lab}$ is defined as
\begin{align*}
\quad\quad I_E(\mathcal{S}_\mathrm{lab}) &= -(f_0\log(f_0)+f_1\log(f_1)),
\end{align*}
where 
\begin{align*}
f_0 & = \frac{\mathrm{card}\{(X_i,Y_i)\in\mathcal{S}_\mathrm{lab} : Y_i = 0\}}{n}\\
f_1 & = \frac{\mathrm{card}\{(X_i,Y_i)\in\mathcal{S}_\mathrm{lab} : Y_i = 1\}}{n}.\\
\end{align*}
\end{defn}
Note that entropy is a measure of class homogeneity since it is minimal when a training sample only contains observations from a single class, and is maximal when the sample contains an equal number of observations from both classes. The Decision Tree classifier aims to recursively split a training sample into two subsets in a way that the weighted entropies of the two subsets are minimized, with respect to the entropy of the entire sample. The information gain for such a splitting condition measures exactly this change in entropy.

\begin{defn}
Given a set of labeled observations
\begin{align*}
\mathcal{S}_\mathrm{lab} &= \{(X_1,Y_1),(X_2,Y_2),\ldots,(X_n,Y_n)\},
\end{align*}
For $1 \leq j \leq m$ and $a\in\mathbb{R}$ (the continuous case) or $a\subseteq \Omega$ (the discrete case), write
\[
\mathcal{U}_{j,a} = \begin{cases}
\{(X_i,Y_i)\in\mathcal{S}_\mathrm{lab} : X_{ij} \leq a\} & \quad\textnormal{if $X_{ij}\in\mathbb{R}$}\\
\{(X_i,Y_i)\in\mathcal{S}_\mathrm{lab} : X_{ij} \in a\} &\quad\textnormal{if $X_{ij}\in\Omega$}
\end{cases}
\]
and
\[
\mathcal{V}_{j,a} = \begin{cases}
\{(X_i,Y_i)\in\mathcal{S}_\mathrm{lab} : X_{ij} > a\} & \quad\textnormal{if $X_{ij}\in\mathbb{R}$}\\
\{(X_i,Y_i)\in\mathcal{S}_\mathrm{lab} : X_{ij} \notin a\} &\quad\textnormal{if $X_{ij}\in\Omega$}.
\end{cases}
\]
Then, the {\em information gain $IG_{j,a}$} for $\mathcal{S}_\mathrm{lab}$, at the coordinate $j$ with splitting decision $a$, is defined by
 \[
IG_{j,a}(\mathcal{S}_\mathrm{lab}) =  I_E(\mathcal{S}_\mathrm{lab}) - \left[\left(\frac{|\mathcal{U}_{j,a}|}{n}\right)I_E(\mathcal{U}_{j,a}) + \left(\frac{|\mathcal{V}_{j,a}|}{n}\right)I_E(\mathcal{V}_{j,a})\right].
\]
\end{defn}

Here are the complete procedural steps of the Decision Tree classifier for both training and predicting:

\vspace{3.4mm}

\noindent{\em Training}
\vspace{-4mm}
\begin{enumerate}
\item Fix a minimal information gain threshold $\alpha\geq 0$.
\item Calculate the entropy $I_E(\mathcal{S}_\mathrm{lab})$ for a training sample $\mathcal{S}_\mathrm{lab}$.
\item Exhaustively determine the best splits $1\leq j \leq m$ and $a\in\mathbb{R}$ or $a\subseteq \Omega$, depending on whether the $j$'th coordinate (or feature) is continuous or discrete, such that $IG_{j,a}(\mathcal{S}_\mathrm{lab})$ is maximal.
\item Divide
\[
\mathcal{S}_\mathrm{lab} = \mathcal{S}_{\mathrm{lab},+} \cup \mathcal{S}_{\mathrm{lab},-}
\]
for $\mathcal{S}_{\mathrm{lab},+} = \mathcal{U}_{j,a}$ and $\mathcal{S}_{\mathrm{lab},-} = \mathcal{V}_{j,a}$.
\item Repeat Steps 2 to 5 for each of $\mathcal{S}_{\mathrm{lab},+}$ and $\mathcal{S}_{\mathrm{lab},-}$ recursively, until either termination conditions is met:
\begin{enumerate}
\item Both $\mathcal{S}_{\mathrm{lab},+}$ and $\mathcal{S}_{\mathrm{lab},-}$ respectively contain only observations from a single class.
\item The information gains $IG_{j,a}(\mathcal{S}_{\mathrm{lab},+})$ and $IG_{j,a}(\mathcal{S}_{\mathrm{lab},+})$ are less or equal to $\alpha$, for any $1\leq j\leq m$ and $a \in\mathbb{R}$ or $a\subseteq \Omega$.
\end{enumerate}

\item Upon termination, the training sample would have been partitioned into $w$ disjoint subsets
\[
\mathcal{S}_\mathrm{lab} = \mathcal{S}_{\mathrm{lab},1}\cup\mathcal{S}_{\mathrm{lab},2}\cup \ldots \cup \mathcal{S}_{\mathrm{lab},w},
\]
with best split conditions
\[
\{(j_1,a_1),(j_2,a_2),\ldots,(j_{w-1},a_{w-1})\}.
\]
Associate each subset $\mathcal{S}_{\mathrm{lab},i}$ to its most frequent class label.

\end{enumerate}
\noindent{\em Predicting}
\vspace{-4mm}
\begin{enumerate}
\setcounter{enumi}{6}
\item Given a new observation, determine which subset $\mathcal{S}_{\mathrm{lab},i}$ the observation would belong to, according to the split decisions $\{(j_1,a_1),(j_2,a_2),\ldots,(j_{w-1},a_{w-1})\}$ calculated in training, and predict it to have the subset's associated class label.
\end{enumerate}

Due to the recursive nature of the Decision Tree classifier, its procedure can be easily visualized and modelled as a tree, called the Decision Tree. The entire training sample starts at the parent node and is divided into two child nodes, corresponding to subsets of the training samples, based on the optimal binary split condition. Each child node is then divided into two further nodes and this process is repeated recursively until either of the termination conditions is satisfied for all the bottom nodes. Upon termination, each node at the final level becomes a leaf corresponding to the dominating class label.

For predicting the label of a new observation, the Decision Tree classifier simply passes the observation down the tree and the predicted class label is the one associated to the leaf this observation falls in. The next example is a Decision Tree constructed using the {\tt rpart} package \cite{rpart} with the statistical programming language R \cite{rcite} on a real-life voice recognition dataset.

\begin{figure}
\begin{center}
\includegraphics[scale = 1]{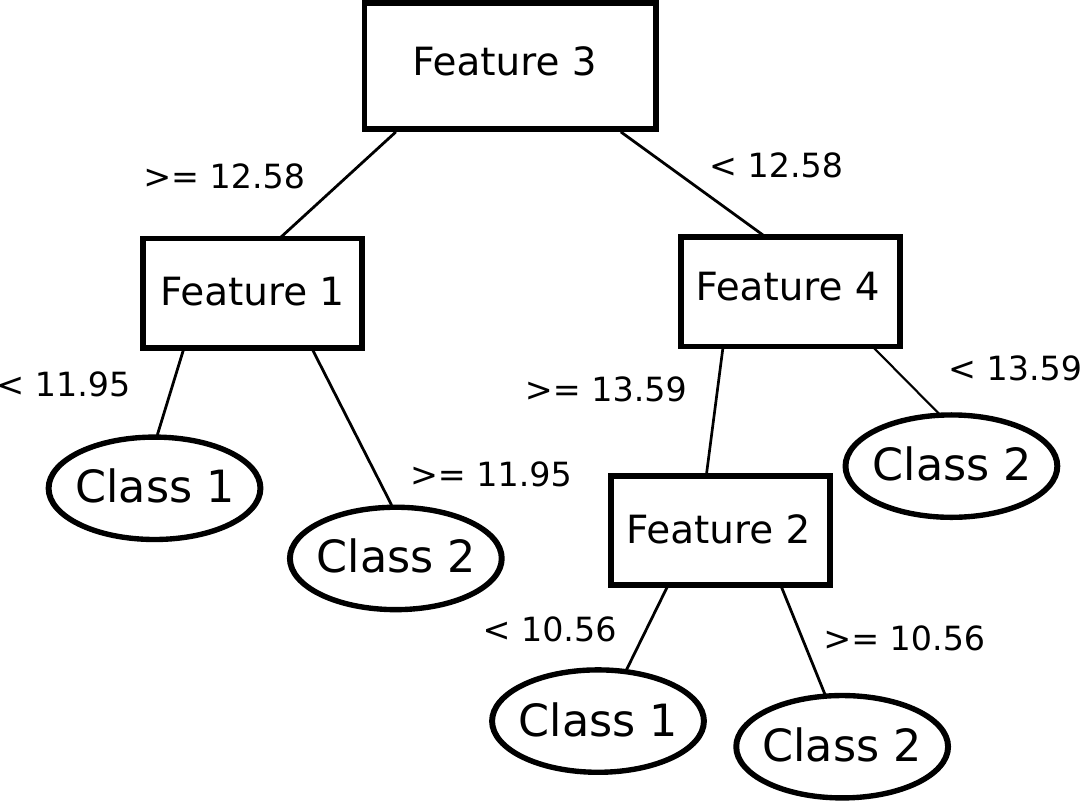}
\caption[Sample Decision Tree from phoneme dataset]{Decision Tree built from the example in Section \ref{truedtree}. The tree has 1 parent node (coordinate 3) with 3 child nodes (coordinates 1, 4, and 2) and 5 final leaves for the two possible classes.}
\label{dtdiag}
\end{center}
\end{figure}

\subsubsection{Example of a Decision Tree}

This example of constructing a Decision Tree is based on a subset of the real-life phoneme dataset studied in \cite{phoneme}. The dataset is used to predict phoneme sounds based on their discretized representations as vectors in $\mathbb{R}^{256}$. For simplicity of the example, only the first 6 dimensions of the representation vectors and 2 types of phoneme sounds, ``dcl" (Class 1) and ``sh" (Class 2), are considered for the example. Table \ref{dectreeexampletable} lists the class information for this dataset to build a Decision Tree, and Figure \ref{dtdiag} provides the actual Decision Tree constructed by {\tt rpart} in R. If a new observation is given by the vector
\[
(13.32, 2.31, 16.37, 18.20, 9.13, 11.00),
\]
it would be predicted to be from Class 2, corresponding to the ``sh" sound, because its third feature is greater than 12.58 while its first feature is greater than 11.95.

\begin{table}
\begin{center}
\caption[Phoneme dataset for Decision Tree]{Phoneme dataset's class information for constructing a Decision Tree, with two possible labels: ``dcl" and ``sh".}
\vspace{4mm}
\begin{tabular}{|c|c|}
\hline
Class & \# of observations\\
\hline\hline
1 (``dcl") & 757\\
\hline
2 (``sh") & 872\\
\hline
\end{tabular}
\label{dectreeexampletable}
\end{center}
\end{table}

\section{Random Forest from Multiple Decision Trees}\label{rfsection}

The Random Forest classifier, developed by Breiman in 1994 \cite{rfbreiman}, generalizes the Decision Tree classifier in two important steps. First, Random Forest uses the method of {\em bootstrap aggregation}, or {\em bagging} \cite{bagging}, by taking $t$ bootstrap samples with replacement of the training sample $\mathcal{S}_\mathrm{lab} = \{(X_1,Y_1),(X_2,Y_2),\ldots,(X_n,Y_n)\}$ and constructing a Decision Tree for each bootstrap sample:
\begin{align*}
\mathcal{S}_{\mathrm{boot},1} & = \{(X_1^{*1},Y_1^{*1}),(X_2^{*1},Y_2^{*1}),\ldots,(X_n^{*1},Y_n^{*1})\} &\longmapsto \quad\quad\textnormal{Decision Tree 1}\\
\mathcal{S}_{\mathrm{boot},2} & = \{(X_1^{*2},Y_1^{*2}),(X_2^{*2},Y_2^{*2}),\ldots,(X_n^{*2},Y_n^{*2})\}&\longmapsto \quad\quad\textnormal{Decision Tree 2}\\ 
&\ldots\\
\mathcal{S}_{\mathrm{boot},t} & = \{(X_1^{*t},Y_1^{*t}),(X_2^{*t},Y_2^{*t}),\ldots,(X_n^{*t},Y_n^{*t})\}&\longmapsto \quad\quad\textnormal{Decision Tree $t$}
\end{align*}

\begin{figure}
\begin{center}
\includegraphics[scale = 0.5]{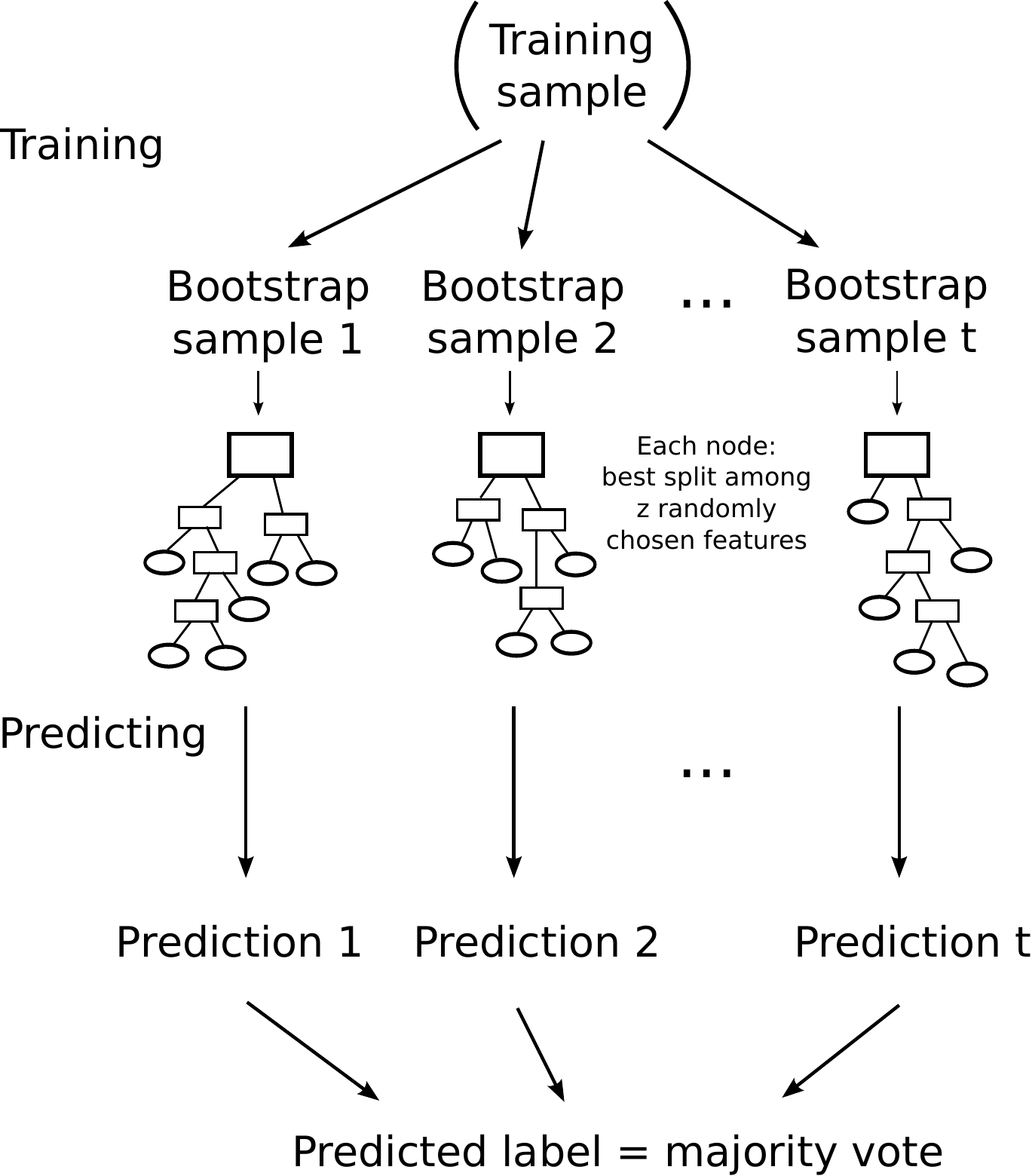}
\caption[Visualization of the Random Forest classifier]{This diagram visualizes the training step for the Random Forest classifier, of taking bootstrap samples to generate multiple Decision Trees, and the predicting step for a new observation.}
\label{rfdiag}
\end{center}
\end{figure}

Second, for the construction of each Decision Tree, at each iteration or node, only a random selection of $z$ coordinates (instead of all the coordinates), for some fixed $z$, are considered for finding the best split of the training sample from the previous step. In other words, regarding Step 3 of the procedure for building a Decision Tree, a subset $A\subseteq \{1,2,\ldots,m\}$ of cardinality $|A| = z$ is randomly selected and the best split among $j\in A$ (instead of $1 \leq j \leq m$) and $a\in \mathbb{R}$ or $a\subseteq\Omega$ are chosen. For a new observation $X$, each Decision Tree makes a prediction, denoted as $T_i(X)$, and the overall prediction for Random Forest is the mode of these individual predictions:
\[
\mathrm{label}\,(X) = \mathrm{mode}\{T_i(X):i = 1,2,\ldots,t\}.
\]
Figure \ref{rfdiag} illustrates the Random Forest classifier for both training and predicting. In total, the Random Forest classifier depends on two main parameters in training: the number $t$ of Decision Trees to generate and the number $z$ of randomly selected features, whose default value is $z=\sqrt{m}$, used for each splitting decision. Optimal values of these parameters are usually found through a validation process explained in Section \ref{crossvalid}.

The Random Forest classifier, in practice, has excellent predictive abilities and has demonstrated its effectiveness in many fields and applications. Moreover, it has the ability to easily handle training observations from continuous or discrete (or a mixture of both) domains, since it is a generalization of the Decision Tree classifier. Theoretically, however, Random Forest's predictive abilities are not justified. In fact, Devroye et al. in \cite{rfconsistent} showed that this classifier is, in general, not universally consistent. This seemingly bizarre fact illustrates a common distinction between theory and practice for supervised learning algorithms. Often, a classifier with proven theoretical properties, such as being universally consistent, may not actually produce accurate predictions in real-life, or its computational complexity may be too high in practice. On the other hand, a simple and intuitive classifier, with absolutely no theory supporting it, may have excellent predictive abilities and can run efficiently.

In short, Chapter \ref{decisiontreesect} has explained two supervised learning algorithms, the Decision Tree and Random Forest, which generalizes the former, classifiers in detail. The next chapter introduces dimensionality reduction, which is an important field of study in data science as big data can be extremely high-dimensional and learning algorithms often struggle to train and predict efficiently on such data.

\cleardoublepage

\chapter{Dimensionality Reduction}\label{dimred}

Chapter \ref{dimred} discusses the important concept of dimensionality reduction for a dataset prior to classification. Section \ref{featintro} introduces this concept and explains the difference between feature selection and extraction. Sections \ref{rpsect} and \ref{mtdsect} discuss two such techniques, Random Projections, a known method based on the popular Johnson-Lindenstrauss Lemma, and MTD Feature Selection, a novel method developed from the theory of Mass Transportation Distance.

\section{Introduction to Feature Extraction and Selection}\label{featintro}

Big data today often have high dimensions; for instance, the genetic dataset considered for this thesis includes 3907 observations, each having 865688 coordinates, corresponding to the number of Single-Nucleotide Polymorphisms. Running a supervised learning algorithm on a high-dimensional dataset is computationally expensive. Even worse, such an algorithm may not be capable of running at all, due to memory and storage constraints. Techniques that reduce dimension are often run on a high-dimensional dataset prior to classification, to simplify complexity and save computational costs, while at the same time attempt to preserve the predictive information of the original dataset \cite{dmreview}. In other words, when the dimension (i.e. the number of coordinates or features) $m$ of the domain $\Omega$ is high, a function $T:\Omega \to \Omega'$, called a {\em dimensionality reduction}, is defined from $\Omega$ to another domain $\Omega'$ with a much lower dimension $m' << m$. A classifier $g:\Omega'\to\{0,1\}$ is then applied on the reduced space:
\[
g\circ T: \Omega \to \Omega' \to \{0,1\}.
\]
For a training sample $\mathcal{S}_\mathrm{lab} = \{(X_1,Y_1),(X_2,Y_2),\ldots,(X_n,Y_n)\}$ and a new observation $X$, where $X,X_i \in \Omega$, the classifier $g$ would consider $T(X)$ and the image of $\mathcal{S}_\mathrm{lab}$ under $T$,
\[
\mathcal{S}_\mathrm{lab}^\mathrm{red} = \{(X_1',Y_1),(X_2',Y_2),\ldots,(X_n',Y_n)\},
\]
for $X_i' = T(X_i) \in\Omega'$, in order to predict the label for $X$:
\[
\left(\mathcal{S}_\mathrm{lab}, X\right)\longmapsto\left(\mathcal{S}_\mathrm{lab}^\mathrm{red}, T(X)\right) \longmapsto g(T(X)) = \textnormal{label of $X$}.
\]

There are generally two approaches for defining the function $T$, called {\em feature selection} and {\em feature extraction} \cite{dmreview}. For feature selection, a score of importance $S(j)$ is first assigned to each coordinate $j$ of the domain $\Omega$. The dimension reduction map $T$ then projects observations from $\Omega$ onto the highest scored coordinates, according to some threshold $\alpha$. If $\Omega$ has $m$ dimensions and coordinates $J \subseteq \{1,2,\ldots,m\}$ are determined to be important, for $J = \{j \,| \,S(j)\geq \alpha\}$, the map $T: \Omega\to\Omega'$, where $\Omega'$ has $|J|$ coordinates, would be defined by
\[
\Omega \ni X = (X_1,X_2,\ldots,X_m) \longmapsto (X_j \,|\, j \in J) \in \Omega',
\]
For instance, if $m = 5$ and $J = \{1,4,5\}$, then $X = (X_1,X_2,X_3,X_4,X_5) \longmapsto (X_1,X_4,X_5)$. The identifying property of feature selection is that the coordinates in the reduced domain form a subset of the original coordinates.

On the other hand, a feature extraction map $T: \Omega \to \Omega'$ reduces dimension by transforming observations from $\Omega$ in more complicated ways than simple coordinate projections. For example, the map $T: (0,1)^2 \to (0,1)$ defined by interchanging decimal expansions,
\[
T(0.a_1a_2a_3\ldots,0.b_1b_2b_3\ldots) = (0.a_1b_1a_2b_2\ldots),
\]
is a feature extraction method, known as the {\em Borel Isomorphic Dimensionality Reduction Method}, introduced in \cite{knncurse} and further studied in \cite{stan}. This method can be easily generalized to a feature extraction map from $(0,1)^m$ to $(0,1)^{m'}$. Other feature extraction techniques include Principal Component Analysis (PCA) and Linear Discriminant Analysis (LDA), which are dimension reduction methods based on linear transformations \cite{dmreview}.

Section \ref{rpsect} explains a feature extraction method called Random Projections where the reduction map is a linear function from $\mathbb{R}^m$ to $\mathbb{R}^{m'}$ defined by matrix multiplication. Section \ref{mtdsect} introduces a new feature selection method for a discrete domain $\Omega$, where each coordinate is assigned an importance score based on the Mass Transportation Distance.

\section{Random Projections}\label{rpsect}

This section explains Random Projections, which is a known feature extraction method, see e.g. \cite{rpcite}, based on the following theorem proved by Johnson and Lindenstrauss in 1984 \cite{jllemmaorig}.

\begin{theo}[Johnson-Lindenstrauss Lemma \cite{jllemmaorig}]\label{origjl}
Let $0<\epsilon<1/2$ and let $S\subseteq \mathbb{R}^m$ be any finite subset with $|S| = n$. If $m' \geq C\ln(n)/\epsilon^2$ is any integer, for some sufficiently large absolute constant $C$, there exists a linear map $T:\mathbb{R}^m\to\mathbb{R}^{m'}$ such that
\[
(1-\epsilon)||x - y||_2 \leq ||T(x) - T(y)||_2 \leq (1+\epsilon)||x - y||_2
\]
for all $x,y\in S$.
\end{theo}

\begin{figure}
\begin{center}
\includegraphics[scale =0.75]{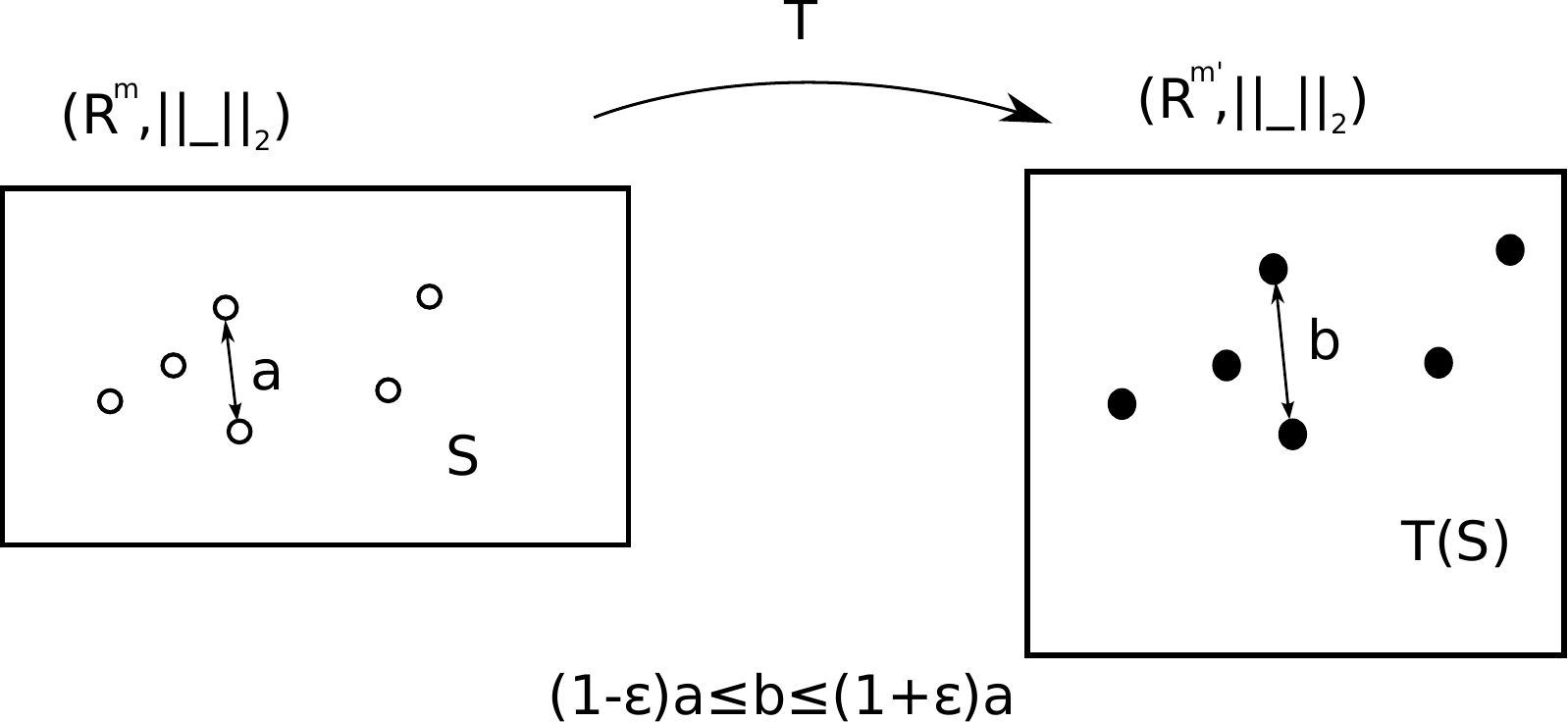}
\caption[Preservation of pairwise distances in $\mathbb{R}^m$]{This figure illustrates a map $T$ that can only distort distances between elements of $S$ up to $1\pm\epsilon$.}
\label{jllemmapic}
\end{center}
\end{figure}

Figure \ref{jllemmapic} provides a visualization of such a linear map and its preservation of distances. In the context of learning theory, distance-based classifiers, such as $k$-NN, have high computational costs for calculating distances between observations in $\mathbb{R}^m$, if $m$ is very large. A linear map that could project these observations to a lower dimensional space $\mathbb{R}^{m'}$, while still preserves their pairwise $\ell_2$ distances up to a factor of $1\pm \epsilon$ as guaranteed by Theorem \ref{origjl}, would allow these classifiers to run more efficiently on the simpler space. In fact, a probabilistic proof of Theorem \ref{origjl}, found in  \cite{variantjl}, provides a constructive method of finding such a map $T$, defined by multiplication via a randomly generated matrix $\frac{1}{\sqrt{m'}}(R_{ij})$, whose entries $R_{ij}$ follow a binary distribution, taking on values in $\{-1,1\}$ with equal probabilities.

Altogether, Random Projections as a feature extraction method works as follows.  Consider the domain $(\mathbb{R}^m,||\cdot||_2)$, pick some lower dimension $m'<< m$, and generate a random matrix $\frac{1}{\sqrt{m'}}(R_{ij})$ from the required binary distribution. Define the dimension reduction map $T: \mathbb{R}^m\to\mathbb{R}^{m'}$ by matrix multiplication by $\frac{1}{\sqrt{m'}}(R_{ij})$:
\[
T(x) = \frac{1}{\sqrt{m'}}(R_{ij}) \cdot x.
\]
Then, map a training sample $\mathcal{S}_\mathrm{lab} = \{(X_1,Y_1),(X_2,Y_2),\ldots,(X_n,Y_n)\}$, along with any new observation $X$, from $\mathbb{R}^m$ to the reduced space $\mathbb{R}^{m'}$, and denote their images by $\mathcal{S}_\mathrm{lab}^\mathrm{red}$ and $T(X)$, respectively. A distance-based classifier for $(\mathbb{R}^{m'},||\cdot||_2)$ can now train on $\mathcal{S}_\mathrm{lab}^\mathrm{red}$ and predict a label $g(T(X))\in\{0,1\}$ for the original new observation $X\in\mathbb{R}^m$.

As found in \cite{variantjl}, Section \ref{jllemmaoutline} concentrates on the outline of a proof for the Johnson-Lindenstrauss Lemma, which is included in the thesis to justify the use of Random Projections via random matrix multiplication in supervised learning theory and data science.

\subsection{Probabilistic Proof of the Johnson-Lindenstrauss Lemma}\label{jllemmaoutline}

Section \ref{jllemmaoutline} outlines a probabilistic proof of the Johnson-Lindenstrauss Lemma, as provided by Matousek in \cite{variantjl} as the main result. The importance of this proof is that it provides a constructive method of finding the linear map $T$, as multiplication by a randomly generated matrix, for dimensionality reduction. In his paper, Matousek first defines the notion of {\em sub-gaussian tails}.

\begin{defn}
Let $X$ be a real random variable with $\mathbb{E}(X) = 0$.
\begin{enumerate}
\item The variable $X$ has a {\em sub-gaussian upper tail} if there exists a constant $a>0$ such that for all $\lambda>0$,
\begin{equation}\label{subgtail}
\mathrm{Pr}(X>\lambda) \leq e^{-a\lambda^2}.
\end{equation}
\item The variable $X$ has a {\em sub-gaussian upper tail up to $\lambda_0$} if (\ref{subgtail}) holds for all $\lambda\leq \lambda_0$.
\item The variable $X$ has a {\em sub-gaussian tail} if $X$ and $-X$ both have sub-gaussian upper tails.
\end{enumerate}
A collection $X_1,X_2,\ldots,X_m$ has a {\em uniform sub-gaussian tail} if each random variable $X_i$ has a sub-gaussian tail with the same constant $a$.
\end{defn}

The main result of Matousek's paper \cite{variantjl}, implying the Johnson-Lindenstrauss Lemma, is that the randomly generated matrix for defining the distance-preservation map $T$ can have entries from a random variable with uniform sub-gaussian tail.

\begin{theo}\label{probjltheo}
Let $m\in\mathbb{N}$, $0<\epsilon<1/2$, $0<\delta<1$, and $m' \geq C\ln(\delta/2)/\epsilon^2$, where $C$ is some absolute constant. Define a random linear map $T:\mathbb{R}^m\to\mathbb{R}^{m'}$ by
\begin{align*}
T(x) = \frac{1}{\sqrt{m'}}(R_{ij})\cdot x,
\end{align*}
where $(R_{ij})$ consists of independent binary random variables $R_{ij}$ such that 
\begin{multicols}{2}
\begin{enumerate}
\item Every $R_{ij}$ has a uniform sub-gaussian tail;
\item $\mathbb{E}(R_{ij}) = 0$;
\item $\mathrm{Var}(R_{ij}) = 1$.
\end{enumerate}
\end{multicols}
Then, for each $x\in\mathbb{R}^m$,
\[
\mathrm{Pr}[(1-\epsilon)||x||_2\leq ||T(x)||_2\leq (1+\epsilon)||x||_2]\geq 1 - \delta.
\]
\end{theo}
\noindent In particular, random variables $R_{ij}$ taking binary values $\{-1,1\}$, with equal probabilities, satisfy the three requirements in Theorem \ref{probjltheo}. From this probabilistic result, the Johnson-Lindenstrauss Lemma follows as an easy corollary.

\begin{proof}[Johnson-Lindenstrauss Lemma]
Suppose $T$ is any random linear map defined as matrix multiplication by $(\frac{1}{\sqrt{m'}})(R_{ij})$, whose entries satisfy the conditions from Theorem \ref{probjltheo}. Let $\delta = 1/n^2$ and for any $x,y\in S$, the probability, over all such possible maps $T$, that the next equation does not hold is at most $1/n^2$:
\[
(1-\epsilon)||x-y||_2\leq ||T(x-y)||_2= ||T(x) - T(y)||_2\leq (1+\epsilon)||x-y||_2.
\]
Since there are a total of ${\bigg(}{\small \begin{array}{c}n\\2\end{array}}{\bigg)}$ distinct choices of $x$ and $y$ from $S$, the probability that $T$  preserve all pairwise distances in $S$ is at least
\[
1 - (1/n^2){\bigg(}{\small \begin{array}{c}n\\2\end{array}}{\bigg)} > 1/2 > 0,
\]
so such a linear map $T$ exists.
\end{proof}

Because the lower dimension $m'$ depends on the distance preservation constant $\epsilon$ and a sufficiently large constant $C$ in Theorems \ref{origjl} and \ref{probjltheo}, it can of course be hard to calculate in practice. In addition, multiplication by a randomly generated matrix $(\frac{1}{\sqrt{m'}})(R_{ij})$ is only guaranteed to preserve all pairwise distances up to $1\pm\epsilon$ with some high enough probability (greater than $1/2$), so not all such matrices would be satisfactory. Multiple values of $m'$ and generated matrices are hence usually considered in practice, and a validation process on the training sample is run to determine the optimal value $m'$ and matrix $(\frac{1}{\sqrt{m'}})(R_{ij})$ for classification purposes. See Section \ref{crossvalid} for more details on the validation of a classifier and parameter selection. The remaining part of Section \ref{jllemmaoutline} is devoted to outlining the proof of Theorem \ref{probjltheo}, as done by Matousek in \cite{variantjl}.

To prove Theorem \ref{probjltheo}, Matousek first relates the concept of sub-gaussian tails with the moment-generating function.

\begin{lem}\label{momimplysub}
Let $X$ be a random variable with $\mathbb{E}(X) = 0$.
\begin{enumerate}
\item If
\begin{equation}\label{momentgen}
\mathbb{E}(e^{uX}) \leq e^{Cu^2} 
\end{equation}
for some constant $C$ and all $u>0$, then $X$ has a sub-gaussian upper tail.
\item If (\ref{momentgen}) holds for all $0< u\leq u_0$, then $X$ has a sub-gaussian upper tail up to $2Cu_0$.
\end{enumerate}
Conversely, if $X$ is a random variable with $\mathbb{E}(X) = 0$ and $\mathrm{Var}(X) = \mathbb{E}(X^2) = 1$, and suppose $X$ has a sub-gaussian upper tail, with constant $a$. Then,
\[
\mathbb{E}(e^{uX}) \leq e^{Cu^2} 
\]
for all $u>0$, where $C$ is some constant depending only on $a$.
\end{lem}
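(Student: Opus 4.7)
The plan is as follows. Parts (1) and (2) are standard Chernoff deductions, whereas part (3)---the converse direction---is the substantive piece and requires additional care because only a one-sided tail bound is assumed.

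\textbf{Parts (1) and (2).} Apply Markov's inequality to $e^{uX}$: for any $u, \lambda > 0$,
\[
\mathrm{Pr}(X > \lambda) = \mathrm{Pr}(e^{uX} > e^{u\lambda}) \leq e^{-u\lambda}\mathbb{E}(e^{uX}) \leq e^{Cu^2 - u\lambda}.
\]
Optimize the right-hand side in $u$ by choosing $u = \lambda/(2C)$, which yields $\mathrm{Pr}(X > \lambda) \leq e^{-\lambda^2/(4C)}$ and gives the sub-gaussian upper tail with constant $a = 1/(4C)$. For part (2), the same optimization is valid provided the optimal choice $u = \lambda/(2C)$ lies in $(0, u_0]$, which is precisely the condition $\lambda \leq 2Cu_0$.

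\textbf{Part (3).} The key device is the elementary inequality $e^t \leq 1 + t + (t^2/2)e^{t_+}$, valid for every real $t$ (where $t_+ = \max(t,0)$), which I would verify separately on $t \leq 0$ (where it reduces to $e^t \leq 1 + t + t^2/2$) and on $t \geq 0$ (by comparing power series term by term). Applying it at $t = uX$ and using $\mathbb{E}(X) = 0$ gives
\[
\mathbb{E}(e^{uX}) \leq 1 + \frac{u^2}{2}\,\mathbb{E}(X^2 e^{uX_+}).
\]
Split this expectation as $\mathbb{E}(X^2 \mathbf{1}_{\{X \leq 0\}}) + \mathbb{E}(X^2 e^{uX}\mathbf{1}_{\{X > 0\}})$. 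The first summand is at most $\mathbb{E}(X^2) = 1$ by hypothesis. For the second, integrate by parts---the boundary terms vanish because of the sub-gaussian tail---to rewrite it as $\int_0^\infty (2\lambda + u\lambda^2)e^{u\lambda}\mathrm{Pr}(X > \lambda)\,d\lambda$, insert the bound $\mathrm{Pr}(X > \lambda) \leq e^{-a\lambda^2}$, and complete the square $u\lambda - a\lambda^2 = -a(\lambda - u/(2a))^2 + u^2/(4a)$ to obtain a bound of the form $Q(u)e^{u^2/(4a)}$, where $Q$ is a polynomial in $u$ with coefficients depending only on $a$. Combining these estimates gives
\[
\mathbb{E}(e^{uX}) \leq 1 + \frac{u^2}{2}\bigl(1 + Q(u)\,e^{u^2/(4a)}\bigr).
\]
To pass to $\mathbb{E}(e^{uX}) \leq e^{Cu^2}$, observe that $\psi(u) = u^{-2}\log\bigl(1 + (u^2/2)(1 + Q(u)e^{u^2/(4a)})\bigr)$ extends continuously to $u = 0$ (with limit $(1 + Q(0))/2$ by Taylor expansion) and tends to $1/(4a)$ as $u \to \infty$, so it is uniformly bounded on $(0,\infty)$ by some $C = C(a)$.

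\textbf{Main obstacle.} The delicate point in part (3) is that the hypothesis constrains only the upper tail of $X$. A direct Taylor-series attack on $\mathbb{E}(e^{uX}) = \sum_k u^k\mathbb{E}(X^k)/k!$ would require bounds on $\mathbb{E}(X_-^k)$ for every $k$, which the hypothesis does not provide; indeed the lower tail of $X$ could be arbitrarily heavy consistent with $\mathbb{E}(X) = 0$ and $\mathrm{Var}(X) = 1$. The inequality $e^t \leq 1 + t + (t^2/2)e^{t_+}$ is tailored to this situation: it does not exponentiate the negative part of $t$, so the contribution from $\{X \leq 0\}$ to the right-hand side is controlled solely by $\mathbb{E}(X^2\mathbf{1}_{\{X \leq 0\}}) \leq \mathrm{Var}(X) = 1$, which is exactly the point at which the variance hypothesis enters the proof.
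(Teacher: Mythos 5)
Your proposal cannot be checked against an in-paper argument, because the thesis states Lemma \ref{momimplysub} without proof: it is quoted from Matousek \cite{variantjl}, and only Lemma \ref{sumlemma} and the deduction of Theorem \ref{probjltheo} are proved in the text. Taken on its own terms, your proof is correct and is essentially the standard (Matousek-style) argument. Parts (1) and (2) are handled exactly right: the Chernoff bound $\mathrm{Pr}(X>\lambda)\leq e^{Cu^2-u\lambda}$ optimized at $u=\lambda/(2C)$ gives the tail constant $a=1/(4C)$, and the admissibility condition $\lambda/(2C)\leq u_0$ is precisely why the tail bound in (2) is guaranteed only up to $\lambda_0=2Cu_0$, matching the statement. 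In part (3) you correctly identify the real difficulty --- the hypothesis gives no control of the lower tail beyond $\mathbb{E}(X^2)=1$, so a termwise moment expansion of $\mathbb{E}(e^{uX})$ is unavailable --- and the pointwise inequality $e^t\leq 1+t+(t^2/2)e^{t_+}$ (verified on $t\leq 0$ via $e^t\leq 1+t+t^2/2$ and on $t\geq 0$ by comparing coefficients, where $2(k-2)!\leq k!$ for $k\geq 2$) is exactly the right device: after taking expectations the linear term dies by $\mathbb{E}(X)=0$, the contribution of $\{X\leq 0\}$ is absorbed by the variance, and the positive part is controlled by the layer-cake identity $\mathbb{E}(g(X)1_{\{X>0\}})=\int_0^\infty g'(\lambda)\mathrm{Pr}(X>\lambda)\,d\lambda$ (which, done via Fubini for the nondecreasing $g(\lambda)=\lambda^2e^{u\lambda}$ with $g(0)=0$, needs no boundary-term discussion at all), followed by completing the square to get $Q(u)e^{u^2/(4a)}$ with $Q$ depending only on $a$. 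The closing step is also sound: $\psi(u)=u^{-2}\log\bigl(1+(u^2/2)(1+Q(u)e^{u^2/(4a)})\bigr)$ is continuous on $(0,\infty)$ with finite limits $(1+Q(0))/2$ at $0$ and $1/(4a)$ at $\infty$, hence bounded, yielding $C=C(a)$. One small wording caveat: the lower tail cannot be ``arbitrarily heavy'' --- Chebyshev caps it via $\mathrm{Var}(X)=1$ --- but your substantive point, that no exponential-type control of the negative part is available and so the negative part must not be exponentiated, is exactly right and is where the variance hypothesis enters.
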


By Lemma \ref{momimplysub}, certain linear combinations of random variables with a uniform sub-gaussian tail also have sub-gaussian tails.

\begin{lem}\label{sumlemma}
Let $X_1,X_2,\ldots,X_m$ be independent random variables such that $\mathbb{E}(X_i) = 0$ and $\mathrm{Var}(X_i) = 1$, for all $i = 1,2,\ldots,m$, and suppose $X_1,X_2,\ldots,X_m$ have a uniform sub-gaussian tail. Suppose $\alpha_1^2 + \alpha_2^2 + \ldots + \alpha_m^2 = 1$ are real constants, then
\[
Y = \alpha_1X_1 + \alpha_2X_2 + \ldots + \alpha_mX_m
\]
has $\mathbb{E}(Y) = 0$, $\mathrm{Var}(Y) = 1$, and a sub-gaussian tail.
\end{lem}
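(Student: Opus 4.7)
The plan is to reduce the sub-gaussian tail condition on $Y$ to a bound on its moment-generating function, and then exploit independence to turn a product of MGF bounds into a single MGF bound with the right constant. The mean and variance claims are routine: $\mathbb{E}(Y) = \sum_{i=1}^m \alpha_i \mathbb{E}(X_i) = 0$ by linearity, and $\mathrm{Var}(Y) = \sum_{i=1}^m \alpha_i^2 \mathrm{Var}(X_i) = \sum_{i=1}^m \alpha_i^2 = 1$ by independence combined with $\mathbb{E}(X_i) = 0$. I would dispose of these first in a single line.

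For the sub-gaussian tail of $Y$, I would first invoke the converse direction of Lemma~\ref{momimplysub}. Since each $X_i$ satisfies $\mathbb{E}(X_i) = 0$, $\mathrm{Var}(X_i) = 1$, and has a sub-gaussian upper tail with the same constant $a$, there exists a single constant $C = C(a)$ such that $\mathbb{E}(e^{uX_i}) \leq e^{Cu^2}$ for every $u > 0$ and every $i$. The same argument applied to the variables $-X_i$ (which also have a sub-gaussian upper tail with the same constant, since $X_i$ has a full sub-gaussian tail) yields $\mathbb{E}(e^{-uX_i}) \leq e^{Cu^2}$ for every $u > 0$, and together these give an MGF bound valid for all real $u$.

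Next I would combine these bounds using independence. For $u > 0$,
\begin{align*}
\mathbb{E}(e^{uY}) = \mathbb{E}\!\left(\prod_{i=1}^m e^{u\alpha_i X_i}\right) = \prod_{i=1}^m \mathbb{E}(e^{u\alpha_i X_i}) \leq \prod_{i=1}^m e^{C u^2 \alpha_i^2} = e^{C u^2 \sum_{i=1}^m \alpha_i^2} = e^{Cu^2},
\end{align*}
where the inequality applies the single-variable MGF bound with parameter $u\alpha_i$ (for the $\alpha_i < 0$ terms I would use the corresponding bound on $-X_i$). By the forward direction of Lemma~\ref{momimplysub} (part~1), this implies $Y$ has a sub-gaussian upper tail with some constant depending only on $a$. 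Repeating the identical argument with $-Y$ in place of $Y$ gives a sub-gaussian upper tail for $-Y$, so $Y$ has a (full) sub-gaussian tail.

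The argument is largely mechanical given Lemma~\ref{momimplysub}; the only subtle point to be careful about is that the constant $a$ in the uniform sub-gaussian tail hypothesis must produce a single constant $C$ that works for every $X_i$ simultaneously, so that the telescoping in the product step collapses cleanly into $e^{Cu^2 \cdot 1}$. This uniformity is precisely what allows $\sum \alpha_i^2 = 1$ to eat the dependence on the individual coefficients, and it is the one place where the hypothesis beyond ``each $X_i$ is sub-gaussian'' is genuinely used.
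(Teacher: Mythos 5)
Your proof is correct and takes essentially the same route as the paper's: both use the converse direction of Lemma \ref{momimplysub} to get the uniform bound $\mathbb{E}(e^{uX_i}) \leq e^{Cu^2}$, factor $\mathbb{E}(e^{uY})$ by independence, and let $\sum_i \alpha_i^2 = 1$ collapse the product into $e^{Cu^2}$ before invoking the forward direction. If anything, you are slightly more careful than the paper, which leaves implicit the handling of negative coefficients $\alpha_i$ and the symmetric argument for $-Y$ needed for the full two-sided tail.
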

\begin{proof}
It is clear that $Y$ satisfies $\mathbb{E}(Y) = 0$ and $\mathrm{Var}(Y) = 1$, since $X_1,X_2,\ldots,X_m$ do and are independent. As $X_1,X_2,\ldots,X_m$ have a uniform sub-gaussian tail, $\mathbb{E}(e^{uX_i}) \leq e^{Cu^2}$ for each $i = 1,2,\ldots,m$, by Lemma \ref{momimplysub}; hence,
\[
\mathbb{E}(e^{uY}) = \prod_{i = 1}^m \mathbb{E}(e^{u\alpha_iX_i})\leq \prod_{i = 1}^m e^{Cu^2\alpha_i^2} = e^{Cu^2},
\]
so $Y$ has a sub-gaussian tail, also by Lemma \ref{momimplysub}.
\end{proof}

Then, Matousek proves the following, which implies Theorem \ref{probjltheo}.
\begin{lem}\label{mainlemma}
Suppose $m'\geq 1$ and $Y_1,Y_2,\ldots,Y_{m'}$ are independent random variables, where $\mathbb{E}(Y_i) = 0$, $\mathrm{Var}(Y_i) = 1$, and each $Y_i$ has a uniform sub-gaussian tail. Then
\[
Z = \frac{1}{\sqrt{m'}}(Y_1^2 + Y_2^2 + \ldots + Y_{m'}^2 - m')
\]
has a sub-gaussian tail up to $\sqrt{m'}$.
\end{lem}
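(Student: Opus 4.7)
The plan is to bound the moment generating function $\mathbb{E}(e^{uZ})$ on a range of $u$ proportional to $\sqrt{m'}$ and then appeal directly to part 2 of Lemma \ref{momimplysub}. Writing $W_i = Y_i^2 - 1$, the variance assumption gives $\mathbb{E}(W_i) = 0$, and $Z = m'^{-1/2}\sum_i W_i$. By independence,
\[
\mathbb{E}(e^{uZ}) = \prod_{i=1}^{m'} \mathbb{E}\!\left(e^{(u/\sqrt{m'})W_i}\right),
\]
so the whole problem reduces to a single-variable moment generating function bound of the form $\mathbb{E}(e^{tW_i}) \leq e^{Ct^2}$ valid for $|t|$ below some absolute constant (the constant depending only on the uniform sub-gaussian constant $a$ of the $Y_i$). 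With $t = u/\sqrt{m'}$, such a bound is compatible with $u \leq u_0\sqrt{m'}$, which is precisely the range I need.

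To obtain that single-variable bound, I would first pass from the sub-gaussian tail of $Y_i$ to an exponential tail on $W_i$. Since $\mathrm{Pr}(|Y_i|>\lambda)\leq 2e^{-a\lambda^2}$, one gets $\mathrm{Pr}(Y_i^2>s)\leq 2e^{-as}$, and hence an absolute constant $C_1$ (depending only on $a$) with $\mathrm{Pr}(|W_i|>s)\leq C_1 e^{-as}$ for every $s>0$ (the extra constant just absorbs the crude estimate for $s\leq 1$, where $|W_i|\leq 1$ automatically). Integrating tails then yields a factorial moment bound: there exists an absolute constant $B$ such that
\[
\mathbb{E}(|W_i|^k) \leq B^k\, k! \qquad \text{for all } k\geq 2.
\]
This is the key combinatorial ingredient.

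With that moment bound in hand, the Taylor expansion of $e^{tW_i}$ gives, using $\mathbb{E}(W_i)=0$ to kill the linear term,
\[
\mathbb{E}(e^{tW_i}) = 1 + \sum_{k\geq 2}\frac{t^k\mathbb{E}(W_i^k)}{k!} \leq 1 + \sum_{k\geq 2}(B|t|)^k = 1 + \frac{(Bt)^2}{1-B|t|}.
\]
For $|t|\leq 1/(2B)$ the right-hand side is at most $1+2B^2 t^2 \leq e^{2B^2 t^2}$. Plugging $t = u/\sqrt{m'}$ into the product formula, the exponents add to $2B^2 u^2$, so
\[
\mathbb{E}(e^{uZ}) \leq e^{2B^2 u^2} \qquad \text{for all } 0<u\leq \frac{\sqrt{m'}}{2B}.
\]
Applying Lemma \ref{momimplysub} part 2 to $Z$ yields a sub-gaussian upper tail up to $4B^2\cdot\sqrt{m'}/(2B) = 2B\sqrt{m'}$, which (after absorbing constants into $a$ in the definition of sub-gaussian tail) is a sub-gaussian upper tail up to $\sqrt{m'}$. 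Repeating the argument with $-Z$, noting that $-W_i$ still satisfies $\mathbb{E}(-W_i)=0$ and the same moment bound, completes the proof.

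I expect the main obstacle to be the moment bound $\mathbb{E}(|W_i|^k)\leq B^k k!$ with a constant $B$ independent of the distribution of $Y_i$ (beyond its uniform sub-gaussian constant $a$). The care required is in handling the shift by $-1$ and the piecewise nature of $|Y_i^2-1|$, and in choosing $B$ large enough to dominate all $k\geq 2$ simultaneously; once this is done, the remainder of the argument is a clean Taylor expansion plus the independence product.
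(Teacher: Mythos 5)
Your proposal is correct in substance, but there is nothing in the paper to compare it against: the thesis states Lemma \ref{mainlemma} without proof, quoting it from Matousek \cite{variantjl} and using it as a black box in the proof of Theorem \ref{probjltheo}. Your argument therefore supplies material the thesis omits, and it follows the standard sub-exponential (Bernstein-type) route: an exponential tail for $W_i = Y_i^2 - 1$, the factorial moment bound $\mathbb{E}(|W_i|^k)\leq B^k k!$, a Taylor expansion of the moment generating function on $|t|\leq 1/(2B)$, independence to multiply the bounds, and part 2 of Lemma \ref{momimplysub}. This is essentially the same mechanism as Matousek's original proof, which bounds the same one-variable moment generating function directly from the tail integral rather than passing through moments; neither route buys much over the other here.

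Two small points need tightening. First, your parenthetical justification for the tail bound on $W_i$ --- ``for $s\leq 1$, where $|W_i|\leq 1$ automatically'' --- is garbled: only the one-sided bound $W_i \geq -1$ is automatic (since $Y_i^2\geq 0$), while the upper tail for $s\leq 1$ is handled by the crude estimate $\mathrm{Pr}(|W_i|>s)\leq 1 \leq e^{a}e^{-as}$, which is exactly what the constant $C_1$ absorbs; the inequality you assert is correct, but the stated reason is not. Second, the closing step is misjustified: a sub-gaussian upper tail up to $2B\sqrt{m'}$ does not yield one up to $\sqrt{m'}$ by ``absorbing constants into $a$'' when $2B<1$, because shrinking or changing $a$ never extends the range $\lambda_0$ on which the tail bound holds. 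The correct repair is trivial: the moment bound $\mathbb{E}(|W_i|^k)\leq B^k k!$ is monotone in $B$, so you may assume $B\geq 1/2$ from the outset, whence $2B\sqrt{m'}\geq \sqrt{m'}$ and the conclusion is immediate. You should also record, for completeness, that the term-by-term expansion of $\mathbb{E}(e^{tW_i})$ is licensed by $\mathbb{E}(e^{|t|\,|W_i|})<\infty$ for $|t|\leq 1/(2B)$, which holds because your construction gives $B\geq 1/a$, so that $1/(2B)\leq a/2 < a$. With these repairs the proof is complete and the constant in the resulting sub-gaussian tail depends only on the uniform sub-gaussian constant $a$ of the $Y_i$, as required.
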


\begin{proof}[Theorem \ref{probjltheo}]
Suppose $x\in\mathbb{R}^m$ has unit Euclidean length, $\sum_{i = 1}^m x_i^2 = 1$, and write
\[
Y_i = \sum_{j = 1}^m R_{ij}x_j.
\]
By Lemma \ref{sumlemma}, $Y_i$ satisfies $\mathbb{E}(Y_i) = 0$, $\mathrm{Var}(Y_i) = 1$, and each $Y_i$ has a sub-gaussian tail. Then, by Lemma \ref{mainlemma}, $Z = \frac{1}{\sqrt{m'}}(Y_1^2 + Y_2^2 + \ldots + Y_{m'}^2 - m')$ has a sub-gaussian tail up to $\sqrt{m'}$. Hence,
\begin{align*}
\mathrm{Pr}[||T(x)||_2 \geq 1 + \epsilon] &\leq \mathrm{Pr}[||T(x)||^2_2 \geq 1 + 2\epsilon]\\
& \leq \mathrm{Pr}[Z \geq 2\epsilon \sqrt{m'}]\\
& \leq e^{-4a\epsilon^2 C\epsilon^{-2}\log(2/\delta)}  \leq \delta / 2,
\end{align*}
as long as $C$ is sufficiently large. Similarly, $\mathrm{Pr}[||T(x)||_2 \leq 1 - \epsilon]\leq \delta/2$ and Theorem \ref{probjltheo} is proved.
\end{proof}

In summary, Random Projections is a feature extraction technique for the domain $(\mathbb{R}^m,||\cdot||_2)$, where a training sample and any new observations for classification are transformed to a lower dimensional space $(\mathbb{R}^{m'},||\cdot||_2)$ via matrix multiplication. A distance-based classifier can then be efficiently applied on the reduced space to save computational costs. Analogous results of the Johnson-Lindenstrauss Lemma and Theorem \ref{probjltheo} exist as well for the $\ell_1$ norm in $\mathbb{R}^{m'}$, also proved by Matousek in \cite{variantjl}. Thus, a distance-based classifier in the reduced space can use either the $\ell_1$ or $\ell_2$ norm for classification. 

Section \ref{mtdsect} below introduces a new feature selection method for discrete domains.

\section{Mass Transportation Distance}\label{mtdsect}

This section explains a new feature selection technique called {\em Mass Transportation Distance Feature Selection}, or {\em MTD Feature Selection}. The method is based on the popular {\em Mass Transportation Distance}, also known as the {\em Earth Mover's distance} and the {\em Wasserstein distance}, originally introduced by Kantorovich in 1942 \cite{mtdorig}. Sections \ref{mtdsect} and \ref{mtdlip} introduce this distance in a general context and then explain its relevance as a feature selection method in data science.

Given a metric space $(\Omega,d)$, the Mass Transportation Distance is a metric defined on the space of finitely-supported probability measures on $\Omega$.  If $\mu$ and $\mu'$ are any two such probability measures, then the Mass Transportation Distance ${\hat d}$ of $\mu$ and $\mu'$ is defined as follows \cite{supestov}:
\begin{equation}\label{mtdequ}
{\hat d}(\mu,\mu') = \inf_{\nu}\int_{\Omega\times \Omega} d(x,y) \, d\nu,
\end{equation} 
where the infimum is taken over all probability measures $\nu$ on $\Omega\times \Omega$ such that the marginals of $\nu$ are $\mu$ and $\mu'$ respectively. The distance ${\hat d}(\mu,\mu')$ can be thought of as the minimal cost of moving the supported ``masses" of the probability measure $\mu$ to $\mu'$, with respect to the underlying distance $d$. 

In general, the infimum in (\ref{mtdequ}) is extremely difficult to compute exactly, but in the specific case where $\Omega = \{\omega_1,\omega,\ldots,\omega_w\}$ is a finite set with $w$ elements and $d$ is the discrete $\{0,1\}$-distance, where $d(x,y) = 0$ if $x= y$ and $d(x,y) =1 $ else, the equation (\ref{mtdequ}) has a much simpler form. On such a space $\Omega$, known as a {\em discrete (metric) space}, any probability measure $\mu$ is discrete:
\[
\mu = \sum_{i = 1}^w p_i\delta_{\omega_i},
\]
where
\[
\delta_{\omega_i}(C) = \begin{cases}
1 & \quad\textnormal{ if $\omega_i \in C$;}\\
0 & \quad\textnormal{ otherwise,}
\end{cases} 
\]
for $C\subseteq \Omega$, and $p_i\geq 0$ satisfying $\sum_{i = 1}^w p_i = 1$. Then, the Mass Transportation Distance between two probability measures $\mu,\mu'$ simplifies to the $\ell_1$ distance:
\[
{\hat d}(\mu,\mu') = || \mu - \mu'||_1 = \sum_{i = 1}^w |p_i - q_i|
\]
for $\mu = \sum_{i = 1}^w p_i\delta_{\omega_i}$ and $\mu' = \sum_{i = 1}^w q_i\delta_{\omega_i}$, see e.g. \cite{supestov}.

As a result of this simplification, the Mass Transportation Distance can be used as a feature selection technique in data science, when the domain in question is a product of finite discrete spaces. Suppose $\Omega^m$, where $(\Omega = \{\omega_1,\omega_2,\ldots,\omega_w\},d)$ is a finite discrete space, is the domain with some product distance (e.g. the Hamming distance). Two probability measures $\mu_0$ and $\mu_1$ model the theoretical distributions of observations $X\in\Omega^m$ with label 0 and 1, respectively, but note that these measures only exist in theory and cannot be computed exactly in practice. For a fixed coordinate $ 1 \leq j \leq m$ and a training sample $\mathcal{S}_\mathrm{lab} = \{(X_1,Y_1),(X_2,Y_2),\ldots,(X_n,Y_n)\}$, containing observations 
\[
X_i  = (X_{i1},X_{i2},\ldots,X_{im})\in\Omega^m
\]
for $X_{ij} \in\Omega$ with label $Y_i \in \{0,1\}$, the projection map $\pi_j:\Omega^m\to \Omega$ maps each $X_i$ to its $j$'th coordinate:
\[
\pi_j(X_i) = X_{ij} \in \Omega.
\]
Consequently, the projection of the training sample onto the $j$'th coordinate induces two empirical probability measures ${\hat \mu_0^j}$ and ${\hat \mu_1^j}$ on $\Omega$ defined by
\[
{\hat \mu_{0}^j} = \sum_{i = 1}^w p_0(\omega_i)\delta_{\omega_i} \quad \textnormal{and} \quad {\hat \mu_{1}^j} = \sum_{i = 1}^w p_1(\omega_i)\delta_{\omega_i},
\]
where
\begin{align*}
p_0(\omega) &= \frac{\mathrm{card}\{i \,| \,(\pi_j(X_i), Y_i) = (\omega, 0)\}}{\mathrm{card}\{i \, | \, Y_i = 0\}} = \frac{\mathrm{card}\{i \,| \,(X_{ij}, Y_i) = (\omega, 0)\}}{\mathrm{card}\{i \, | \, Y_i = 0\}}\\
p_1(\omega) &= \frac{\mathrm{card}\{i \,| \,(\pi_j(X_i), Y_i) = (\omega, 1)\}}{\mathrm{card}\{i \, | \, Y_i = 1\}}= \frac{\mathrm{card}\{i \,| \,(X_{ij}, Y_i) = (\omega, 1)\}}{\mathrm{card}\{i \, | \, Y_i = 1\}}.
\end{align*}
These probability measures ${\hat \mu_0^j}$ and ${\hat \mu_1^j}$ can be thought as estimations of $\mu_0$ and $\mu_1$, with respect to the $j$'th coordinate, and their Mass Transportation Distance can be calculated exactly:
\begin{align*}
{\hat d({\hat \mu_{0}^j},{\hat \mu_{1}^j})} &= || {\hat \mu_{0}^j} - {\hat \mu_{1}^j}||_1 \\
& = \sum_{i = 1}^w |p_0(\omega_i) - p_1(\omega_i)|.
\end{align*}

Figure \ref{mtdpic} provides an illustration of the Mass Transportation Distance when the two empirical probability measures are interpreted as histograms supported on elements of the finite discrete domain $(\Omega,d)$. The Mass Transportation Distance ${\hat d({\hat \mu_{0}^j},{\hat \mu_{1}^j})}$ provides an estimate of the separation between the true underlying distributions $\mu_0$ and $\mu_1$, projected to the $j$'th coordinate. Hence, only coordinates with high Mass Transportation Distances should be deemed as important, in terms of sufficient separation between the observations from class 0 and 1, and be used for classification.

\begin{figure}
\begin{center}
\includegraphics[scale = 1]{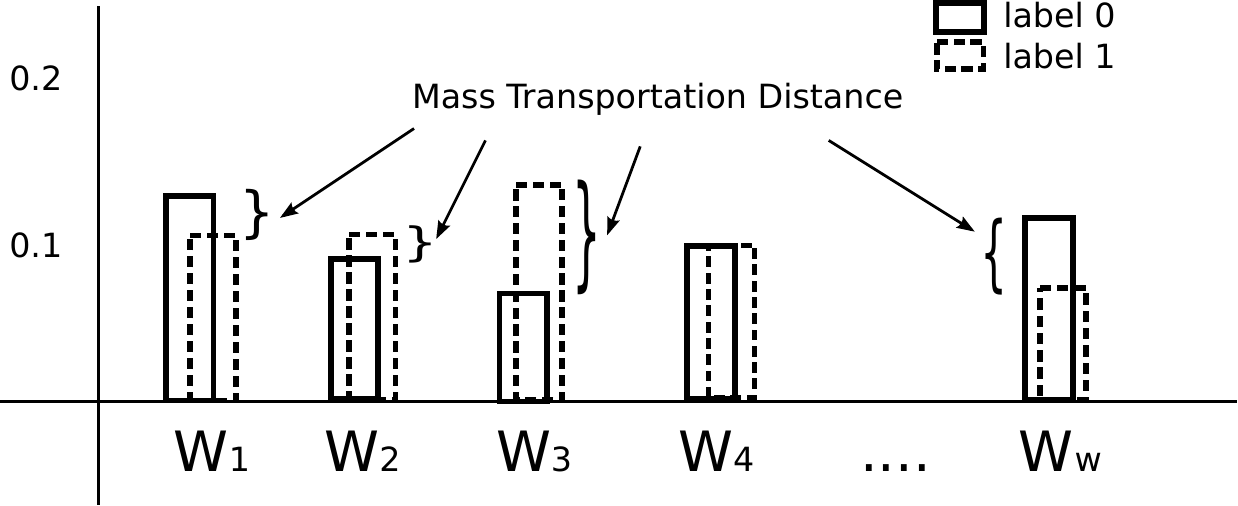}
\caption[Mass Transportation Distance for discrete space]{Visualization of the Mass Transportation Distance for a finite discrete space $\Omega = \{\omega_1,\omega_2,\ldots,\omega_w\}$. This distance is simply the sum of the absolute height differences between the bars of the two histograms.}
\label{mtdpic}
\end{center}
\end{figure}

Together, MTD Feature Selection works as follows. Given a training sample $\mathcal{S}_\mathrm{lab} = \{(X_1,Y_1),(X_2,Y_2),\ldots,(X_n,Y_n)\}$, for each coordinate (or feature) $1\leq j\leq m$, compute the importance score for $j$ as 
\[
S(j) = {\hat d({\hat \mu_{0}^j},{\hat \mu_{1}^j})}.
\]
Fix a threshold $\alpha$ and collect the coordinates with high importance scores: $J = \{j \,|\, S(j) \geq \alpha\}$. Define the dimension reduction map $T:\Omega^m \to  \Omega^{|J|}$ by
\[
(X_1,X_2,\ldots,X_m)\longmapsto (X_j\, | \,j \in J).
\]
Once $\mathcal{S}_\mathrm{lab}$ and any new observations are projected down to $\Omega^{|J|}$, a classifier can be applied for predictions in this reduced space.

The following is a concrete calculation of the Mass Transportation Distance for a small generated dataset.

\subsubsection{Concrete Example}

Suppose a randomly generated labeled dataset is given as in Table \ref{rangenmtd}, with 6 observations from the domain $\{A,B,C\}^5$, with possible labels $\{0,1\}$. 

\begin{table}
\begin{center}
\caption[Randomly generated dataset for MTD Feature Selection]{Randomly generated dataset for MTD Feature Selection, with 6 observations and 5 features.}
\label{rangenmtd}

\vspace{4mm}

\begin{tabular}{c|ccccc|c}
Observation & Feature 1 & Feature 2 & Feature 3 & Feature 4 & Feature 5 & Label\\
\hline
1 &C&B&B&B&C& 0 \\
2 &B&C&A&B&A& 1 \\
3 &C&C&C&A&A& 1 \\
4  &A&B&B&A&A&0 \\
5  &B&A&C&C&A&1 \\
6 &C&C&A&A&B& 1 \\
\end{tabular}
\end{center}
\end{table}
For Feature 1, the empirical probability measures are
\[
{\hat \mu_{0}^1} = (1/2) \delta_{A} + (0) \delta_{B} + (1/2) \delta_{C}
\]
and
\[
{\hat \mu_{1}^1} = (0) \delta_{A} + (2/4) \delta_{B} + (2/4) \delta_{C},
\]
so its Mass Transportation Distance is
\[
{\hat d({\hat \mu_{0}^1},{\hat \mu_{1}^1})}  = |1/2 - 0| + |0 - 2/4| + |1/2 - 2/4| = 1.
\]
The Mass Transportation Distances for Features 2 to 5 can be calculated in similar manners, and all 5 distances are given in Table \ref{mtdvalues}. For MTD Feature Selection, if the importance threshold $\alpha$ is 1.5, then only Features 2 and 3 would be used for classification.

\begin{table}
\begin{center}
\caption[Calculated Mass Transportation Distances]{Calculated Mass Transportation Distances for the randomly generated dataset.}
\label{mtdvalues}

\vspace{4mm}

\begin{tabular}{c|ccccc}
& ${\hat d({\hat \mu_{0}^1},{\hat \mu_{1}^1})}$ &${\hat d({\hat \mu_{0}^2},{\hat \mu_{1}^2})}$ & ${\hat d({\hat \mu_{0}^3},{\hat \mu_{1}^3})}$ & ${\hat d({\hat \mu_{0}^4},{\hat \mu_{1}^4})}$ & ${\hat d({\hat \mu_{0}^5},{\hat \mu_{1}^5})}$ \\
\hline
Distance & 1 & 2 & 2 & 1/2 & 1\\
\end{tabular}
\end{center}
\end{table}

Although the Mass Transportation Distance has been thoroughly studied in mathematics and computer science, it is used in data science as a feature selection method for the very first time in this thesis. Section \ref{mtdlip} provides a brief introduction to the relationship between this distance and a certain class of real-valued classifiers, in order to justify its use in supervised learning theory.

\subsection{MTD and Soft Margin Classification}\label{mtdlip}

This section provides some theory to justify using the Mass Transportation Distance in supervised learning. Recall that a binary classifier is simply any function from the domain $\Omega$ to $\{0,1\}$; consequently, any real-valued function $f:\Omega\to\mathbb{R}$ can be viewed as a classifier $C_f:\Omega\to\{0,1\}$ where
\[
C_f(x) = \begin{cases}
1 &\quad\textnormal{ if $f(x)\geq 0$}\\
0 &\quad\textnormal{ if $f(x) < 0$}.
\end{cases}
\]
Given any metric space $(\Omega,d)$, a function $f:\Omega\to\mathbb{R}$ is {\em 1-Lipschitz} if $|f(x) - f(y)| \leq d(x,y)$, for all $x,y\in \Omega$ \cite{normref}. Such a function can be equivalently viewed as a classifier, known as a {\em 1-Lipschitz classifier}, through the relationship $f\longmapsto C_f$ defined above. 

A certain class of classifiers is the set of bounded 1-Lipschitz classifiers, denoted by $\mathrm{Lip}_1(\Omega)$:
\[
\mathrm{Lip}_1(\Omega) = \{f:\Omega \to\mathbb{R}: \textnormal{$f$ is a bounded 1-Lipschitz classifier}\}.
\]
This class is important in learning theory since 1-Lipschitz functions have small classification variance; in other words, observations which are close to each other are generally assigned the same label. Luxburg and Bousquet in \cite{lipclass} studied this set of classifiers in detail, by isometrically embedding a metric space $(\Omega,d)$ in a Banach space $\mathcal{B}$ and the class of 1-Lipschitz functions into the dual space $\mathcal{B}'$.  Then, classifying observations in the original metric space by 1-Lipschitz functions is equivalent to optimally dividing the embedded observations in $\mathcal{B}$ by some hyperplane, a linear form in $\mathcal{B}'$. 

The advantage of using isometric embeddings is that finding suitable hyperplanes for classification in a Banach space is a well-studied problem \cite{lipclass}, whereas constructing 1-Lipschitz classifiers with good classification performance in a general metric space can be extremely difficult. Therefore, an important problem in supervised learning theory is whether a 1-Lipschitz classifier with good classification performance exists for a given metric space. The Mass Transportation Distance relates to 1-Lipschitz classifiers, thus offering a possible solution to this problem, because it can be used to determine the existence of a high-performing 1-Lipschitz classifier in terms of classification margin explained below.

Let $\epsilon >0$ and $\delta\geq 0$. Call a 1-Lipschitz function $f:\Omega\to\mathbb{R}$ a {\em soft $(\epsilon,\delta)$-margin classifier} if
\begin{equation}\label{complexeq}
\mu\{X \in \Omega : (-1)^{T(X)+1}f(X)<\epsilon/2\}\leq \delta,
\end{equation}
where $T(X) = Y$ denotes the true class label for an observation $X\in\Omega$. The expression $(-1)^{T(X)+1}f(X)<\epsilon/2$ in (\ref{complexeq}) is equivalent to 
\[
T(X) = 1 \textnormal{ and } f(X) < \epsilon/2, \textnormal{ or } T(X) = 0 \textnormal{ and } f(X) > -\epsilon/2,
\]
which refers to observations wrongly, or almost wrongly, classified by $f$. Denote $\mu_0$ and $\mu_1$ as the theoretical distributions of observations $X\in\Omega$ with labels $Y = 0$ and $Y = 1$, respectively, and assume that the two class distributions are equally balanced: $\mu_0 = \mu_1 = 1/2$. 
The relationship between the Mass Transportation Distance and 1-Lipschitz classifiers is first given by the following theorem, the famous result regarding this distance by Kantorovich and Rubinstein, e.g. found in \cite{mtdmain}. 

\begin{theo}[Kantorovich Optimality Criterion \cite{mtdmain}]\label{mainmtd}
Let $(\Omega,d)$ be a metric space, then the Mass Transportation Distance between two finitely-supported probability measures $\mu$ and $\mu'$ can be written as
\[
{\hat d}(\mu,\mu') = \sup_f \int_\Omega f(x) d(\mu-\mu')(x),
\]
where the supremum is taken over all 1-Lipschitz functions $f:\Omega\to\mathbb{R}$.
\end{theo}

An equivalent statement to Theorem \ref{mainmtd} is that the infimum in the definition of the Mass Transportation Distance
\[
{\hat d}(\mu,\mu') = \inf_{\nu}\int_{\Omega\times \Omega} d(x,y) \, d\nu
\]
is achieved by a probability measure $\nu$ on $\Omega\times\Omega$, with marginals $\mu$ and $\mu'$, if and only if there exists a 1-Lipschitz function $f:\Omega\to\mathbb{R}$ such that
\[
f(x) - f(y) = d(x,y)
\]
for all $(x,y)$ in the support of $\nu$, see e.g. \cite{supestov}. As a result, in the context of supervised learning theory, suppose a metric space $(\Omega,d)$ has {\em diameter}
\[
\mathrm{diam}(\Omega) = \sup_{x,x'\in \Omega} d(x,x') \leq 1.
\]
If there exists a soft $(\epsilon,\delta)$-margin classifier, then ${\hat d}(\mu_0,\mu_1) \geq \epsilon (1-\delta)$. The reason is that, by Theorem \ref{mainmtd}, $d(X,X') \geq f(X)-f(X') \geq \epsilon/2 + \epsilon /2 = \epsilon$ for any $X,X'\notin \{X \in \Omega : (-1)^{T(X)+1}f(X)<\epsilon/2\}$ with $X$ having label $1$ and $X'$ label 0. Therefore, as the Mass Transportation Distance is defined by an infimum, where the inside integral is bounded below by $\epsilon(1-\delta)$,
\[
{\hat d}(\mu,\mu') = \inf_{\nu}\int_{\Omega\times \Omega} d(x,y) \, d\nu\geq \epsilon(1-\delta).
\]

Conversely, given $\epsilon >0$, $\gamma\geq0$, and a metric space $(\Omega,d)$, define a new $\epsilon$-bounded distance $d_\epsilon = \min\{d,\epsilon\}$ by
\[
d_\epsilon(x,x') = \begin{cases}
d(x,x') & \quad\textnormal{ if $d(x,x') \leq \epsilon$}\\
\epsilon & \quad\textnormal{ otherwise.}
\end{cases}
\]
If the Mass Transportation Distance with respect to $d_\epsilon$ satisfies
\[
{\hat d_\epsilon}(\mu_0,\mu_1) \geq \gamma,
\]
then by Theorem \ref{mainmtd}, there exists a 1-Lipschitz function $f:(\Omega,d_\epsilon)\to (\mathbb{R},|\cdot - \cdot|)$, where $d^{|-|} = |\cdot-\cdot|$ denotes the regular absolute value distance on $\mathbb{R}$, such that the Mass Transportation Distance of the push-forward measures of $\mu_0$ and $\mu_1$, with respect to $d^{|-|}$, is at least $\gamma$:
\[
{\hat d^{|-|}}(f_*\mu_0, f_*\mu_1)\geq \gamma.
\]
Note that $\mathrm{diam}(f(\Omega)) \leq \epsilon$ since $f$ is a 1-Lipschitz function and that $f_*\mu_0$ and $f_*\mu_1$ are measures defined on $(\mathbb{R},|\cdot - \cdot|)$. As a result, the Mass Transportation Distance has an exact form in this space, see e.g. \cite{mtdline}:
\[
{\hat d^{|-|}}(f_*\mu_0, f_*\mu_1) = \int_0^\epsilon F_0(x) - F_1(x) \, dx \geq \gamma,
\]
where $F_0$ and $F_1$ respectively denote the cumulative distribution functions of $\mu_0$ and $\mu_1$. Consider Figure \ref{mtdline} for some $\delta\geq 0$, and the goal now is to bound $\gamma$ in order to show that this particular $f$ is a soft margin classifier. Based on this figure, which is drawn without any loss of generality regarding $F_0$ and $F_1$, $\gamma$ is a lower bound on the area between $F_0$ and $F_1$. Since this area can be bounded above by three rectangles, the following holds:
\begin{align*}
\gamma &\leq \epsilon/3 \textnormal{\quad\quad\quad\quad\,\,\, (middle rectangle, with height 1)}\\
& + (1-\delta)\epsilon/3 \textnormal{\quad\quad (left rectangle, with height $(1-\delta)$)}\\
& + (1-\delta)\epsilon/3 \textnormal{\quad\quad (right rectangle, with height $(1-\delta)$)}\\
& = \epsilon (1 - 2\delta/3)
\end{align*}
As a result, $2\delta/3 \leq 1-\gamma/\epsilon$ so
\[
\delta \leq 3(1-\gamma/\epsilon)/2.
\]
Consequently, $f$ is a soft $(\epsilon/3,3(1-\gamma/\epsilon)/2)$-margin classifier.

\begin{figure}
\begin{center}
\includegraphics[scale = 0.75]{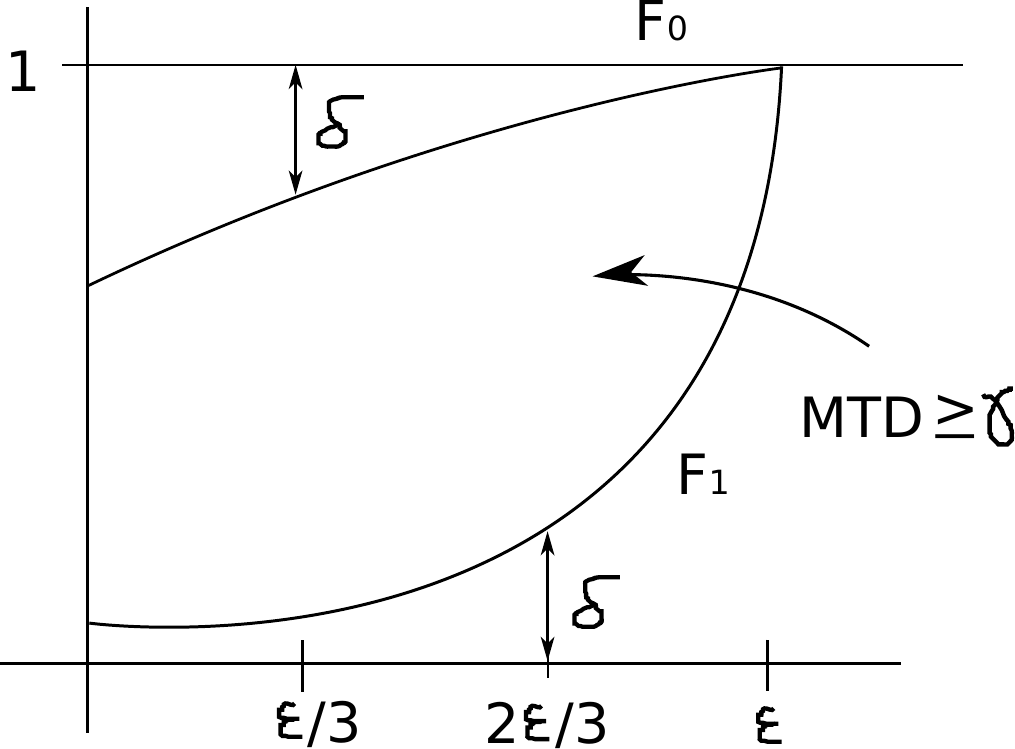}
\caption[Mass Transportation Distance on the real line]{Visualization of the Mass Transportation Distance on the real line, which is the area between the two cumulative distribution functions $F_0$ and $F_1$.}
\label{mtdline}
\end{center}
\end{figure}

In summary of the explanations above, the following theorem, as a corollary from Theorem \ref{mainmtd}, relates the Mass Transportation Distance to 1-Lipschitz functions and their classification margins.

\begin{theo}\label{newmtdthem}
Let $\epsilon>0$ and $\delta,\gamma \geq 0$. Then i) implies ii) and i') implies ii'):
\begin{itemize}
\item [i)] Suppose that $\mathrm{diam}(\Omega)\leq 1$ and that there exists a soft $(\epsilon,\delta)$-margin 1-Lipschitz classifier.
\vspace{-3.5mm}
\item [ii)] ${\hat d}(\mu_0,\mu_1) \geq \epsilon (1-\delta)$
\end{itemize}
\begin{itemize}
\item [i')] ${\hat d_\epsilon}(\mu_0,\mu_1) \geq \gamma$
\vspace{-3.5mm}
\item [ii')] There exists a soft $(\epsilon/3,3(1-\gamma/\epsilon)/2)$-margin classifier.
\end{itemize}
\end{theo}

Given a classification problem in some metric space, the Mass Transportation Distance should be calculated to determine whether 1-Lipschitz classifiers can be used for predictions. If an acceptable soft margin classifier exists by Theorem \ref{newmtdthem}, then the approach of Luxburg and Bousquet should be considered, in order to search for a suitable hyperplane which well separates the embedded observations, in a Banach space, of the two different labels. In real-life applications however, the Mass Transportation Distance can be extremely difficult to compute; therefore, for certain types of domains, the distance can be used as an effective feature selection method by assigning an importance score to each feature or coordinate individually. Only the highly scored features should be kept, and classifiers, including 1-Lipschitz functions, can then be used for predictions according to these features.

Chapter \ref{dimred} has introduced two techniques for dimensionality reduction, a known feature extraction method called Random Projections based on random matrix multiplication, and a new feature selection method called MTD Feature Selection based on the Mass Transportation Distance. The next chapter explains the process of validating a classifier and some measures of predictive performance.

\cleardoublepage

\chapter{Validation of a Classifier}\label{validclass}

This chapter explains the problem of validating a supervised learning classifier. Once a classifier is trained and predicts labels of new observations, possibly under some fixed classification parameters in training, it is important to know how accurate and precise the predictions are. In real-life applications however, actual labels for the new observations are seldom known and cannot be compared with the predictions. Consequently, standard evaluation methods for a learning algorithm are based on dividing the training sample into a training set and an evaluation set, since predictions for observations in the evaluation set can be compared with their actual known labels \cite{dmreview,crossvalid}. At the same time, these methods can also estimate the optimal training parameters, if applicable, that a classifier should use by iterating the evaluation process multiple times for various parameter values.

Section \ref{permeasure} details common measures of evaluating predictions, namely the accuracy, F-Measure, and area under the Receiver Operating Characteristic (ROC) curve. Section \ref{crossvalid} explains two techniques of dividing the training sample to allow for evaluating the performance of a classifier and determining its optimal classification parameters.

\section{Evaluation Measures}\label{permeasure}

The goal of this section is to detail a few measures of comparing and evaluating  predictions of observations from a classifier, against their actual labels. 

Suppose $\mathcal{Y}_\mathrm{pred} = \{{\tilde Y_1},{\tilde Y_2},\ldots,{\tilde Y_n}\}$ is a collection of predicted labels from a classifier, for $n$ observations, and $\mathcal{Y}_\mathrm{true}= \{Y_1,Y_2,\ldots,Y_n\}$ are their actual true labels, where ${\tilde Y_1},{\tilde Y_2},\ldots,{\tilde Y_n},Y_1,Y_2,\ldots,Y_n \in \{0,1\}$. The evaluation of predictions from a classifier is simply the comparison of the labels in $\mathcal{Y}_\mathrm{pred}$ against the respective labels in $\mathcal{Y}_\mathrm{true}$. A {\em confusion (error) matrix} is a $(2\times 2)$ matrix, see e.g. \cite{rpfmeasure}, which counts the occurrences of the 4 possible comparative outcomes:

\begin{minipage}{4cm}
\vspace{4mm}
\begin{center}
\vspace{13.5mm}
\begin{tabular}{lcc}
Predicted: ${\tilde Y_i}$ && Label 1\\
&& Label 0
\end{tabular}
\end{center}
\end{minipage}\begin{minipage}{10cm}
\vspace{4mm}
\begin{center}
\hspace{2mm}Actual: $Y_i$

\hspace{1mm}Label 1\quad\quad\quad\quad\quad\quad\quad Label 0

\vspace{1.5mm}

\begin{tabular}{|c|c|}
\hline
True Positives ($tp$) & False Positives ($fp$)\\
\hline
False Negatives ($fn$) & True Negatives ($tn$)\\
\hline
\end{tabular}
\end{center}
\end{minipage}
\vspace{3.4mm}

\noindent where
\begin{align*}
tp &= \mathrm{card}\{{\tilde Y_i} = 1\textnormal{ and } Y_i = 1\}\\
tn &= \mathrm{card}\{{\tilde Y_i} = 0\textnormal{ and } Y_i = 0\}\\
fp &= \mathrm{card}\{{\tilde Y_i} = 1\textnormal{ and } Y_i = 0\}\\
fn &= \mathrm{card}\{{\tilde Y_i} = 0\textnormal{ and } Y_i = 1\}.
 \end{align*}
As label 1 is often called the positive label and 0 is called the negative label, it is customary to refer to the four outcomes as {\em True Positive}, {\em True Negative}, {\em False Positive}, and {\em False Negative}, as indicated in the confusion matrix \cite{rpfmeasure}.

From the confusion matrix, mathematical scores which evaluate the predictions of a classifier can then be computed. The following subsections explain three common measures of evaluation.

\subsection{Accuracy}

{\em Accuracy} is the most basic and common score to compute from the confusion matrix, see e.g.  \cite{rpfmeasure}. It is simply the ratio of correct predictions and the number of total predictions:
\[
\mathrm{Accuracy} = \frac{\textnormal{Number of correct predictions}}{\textnormal{Number of predictions}} =  \frac{tp + tn}{tp + tn + fn + fp}
\]
The advantage of the accuracy measure is that it is easy to compute and gives a quick first indication of how well a classifier predicts. Although, if the two class sizes are known to not be approximately equal, accuracy values can be skewed by trivial predictions. For instance, suppose 20 observations have true label 1 and 5 observations have label 0. A classifier, which trivially predicts that all 25 observations have label 1, would have an accuracy of 80\%, even though it does not have any predictive powers.

\subsection{The F-Measure}

The {\em F-Measure} is a measure to evaluate predictions which is defined as the harmonic mean of {\em precision} and {\em recall}, e.g. see \cite{f1measure}. Based on the confusion matrix, they are defined as
\begin{align*}
\mathrm{Recall} & = \frac{tp}{tp + fn}\\
\mathrm{Precision} & = \frac{tp}{tp + fp}
\end{align*}
Recall is the proportion of observations with actual label 1 that are indeed labeled as 1, while precision is the proportion of observations with predicted label 1 that actually have label 1. In terms of recall and precision, the F-Measure is defined as
\[
\mathrm{F}\textnormal{-}\mathrm{Measure} = 2\left(\frac{\mathrm{Precision}\cdot\mathrm{Recall}}{\mathrm{Precision}+ \mathrm{Recall}}\right) = \frac{2\cdot tp}{(2\cdot tp) + fn + fp}.
\]

The F-Measure is commonly used in data science, especially in the area of information retrieval, where positive observations of label 1 are important \cite{f1measure}. A disadvantage is that the F-Measure does not take predictions of label 0 into account.

As a quick example, the following confusion matrix
\begin{center}
\begin{tabular}{|c|cc|}
\hline
&Actual label 1 & Actual label 0\\
\hline
Predicted label 1 & 25 & 12\\
Predicted label 0 & 5 & 49\\
\hline
\end{tabular}
\end{center}
would result in an accuracy score of
\[
\frac{25 + 49}{25 + 49 + 12 + 5} = \frac{74}{91}\approx 0.8132 
\]
and a F-Measure score of approximately
\[
\frac{2 \times 0.8333 \times 0.6757}{0.8333 + 0.6757} \approx 0.7463,
\]
since the recall is $25/30 \approx 0.8333$ and the precision is $25/37 = 0.6757$.

\subsection{Area under the Receiver Operating Characteristic Curve}\label{rocarea}

The {\em area under the Receiver Operating Characteristic (ROC) curve} is another measure of evaluating predictions from a classifier, which is based on decision parameter variation. Since most classifiers depend on parameters, either implicitly or explicitly defined, at the classification step, changes to these parameters would in turn affect predictions. Area under the ROC curve measures how well classifiers predict as the parameters are varied. Two references, which Section \ref{rocarea} is based on, regarding the ROC curve and its area are \cite{rocbrad} and \cite{rocorig}.

Formally, suppose a classifier $g = g_\alpha$ depends on a parameter $\alpha\in [a,b]\subseteq \mathbb{R}\cup\{\infty\}$ at the classification stage, which is increasingly varied from $a$ to $b$ in a finite number $w$ of increments:
\[
a = \alpha_1 < \alpha_2 < \ldots <  \alpha_{w-1} < \alpha_w = b.
\]
For each fixed value $\alpha\in\{\alpha_1,\alpha_2,\ldots,\alpha_w\}$, $g_\alpha$ will output the collection 
\[
\mathcal{Y}_{\mathrm{pred},\alpha} = \{{\tilde Y_{1,\alpha}},{\tilde Y_{2,\alpha}},\ldots,{\tilde Y_{n,\alpha}}\}
\]
of predictions for some $n$ observations. Each collection $\mathcal{Y}_{\mathrm{pred},\alpha}$ can be compared to the true labels in $\mathcal{Y}_\mathrm{true}$ of the observations, and a confusion matrix depending on $\alpha$ can be computed:
\[
\left(\begin{array}{cc}
\textnormal{True Positives } (tp_\alpha) & \textnormal{False Positives } (fp_\alpha)\\
\textnormal{False Negatives } (fn_\alpha) & \textnormal{True Negatives } (tn_\alpha)
\end{array}\right).
\] 

\begin{figure}
\begin{center}
\includegraphics[scale = 0.4]{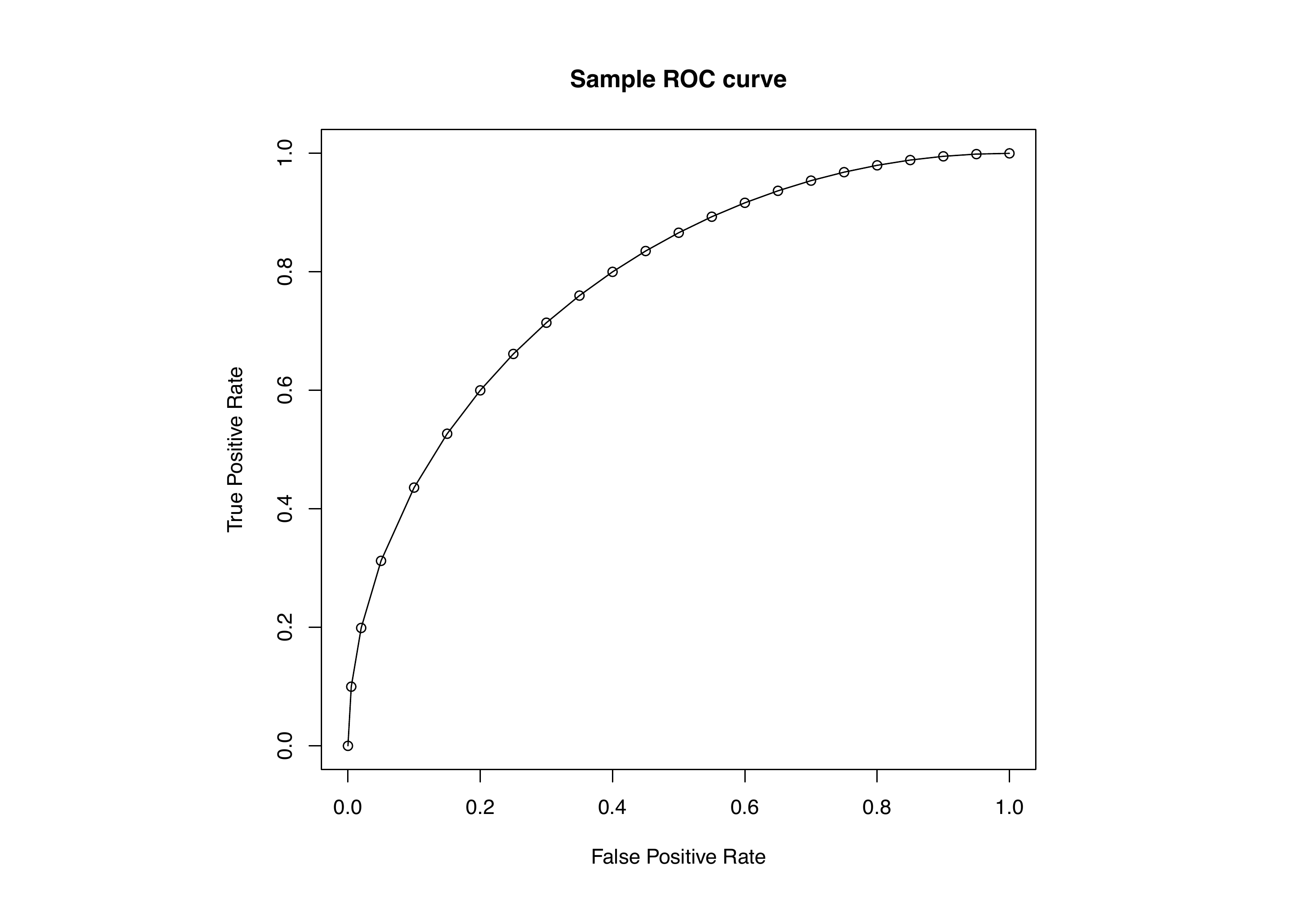}
\caption[Example of a ROC curve]{This is a sample ROC curve, where $\alpha\in[a,b]$ is increasingly varied from $a$ to $b$ in $w = 23$ increments. The pairs of vectors consisting of false and true positive rates are plotted in $\mathbb{R}^2$, and line segments connect these points to form the ROC curve.}
\label{samproc}
\end{center}
\end{figure}

From the confusion matrix, define the {\em true positive} and {\em false positive rates}, denoted by $tpr_\alpha$ and $fpr_\alpha$, as
\begin{align*}
tpr_\alpha  = \mathrm{Recall}_\alpha& = \frac{tp_\alpha}{tp_\alpha + fn_\alpha}\\
fpr_\alpha & = \frac{fp_\alpha}{fp_\alpha + tn_\alpha}.
\end{align*}
Then, the area under ROC curve is defined to be
\[
\mathrm{Area}_\mathrm{ROC} = \left|\sum_{i = 1}^{w-1} (fpr_{\alpha_i} - fpr_{\alpha_{i+1}})tpr_{\alpha_{i+1}}\right|.
\]
The interpretation for $\mathrm{Area}_\mathrm{ROC}$ is the following. Pairs of values $(fpr_\alpha,tpr_\alpha)$ can be plotted in $\mathbb{R}^2$, for $\alpha \in \{\alpha_1,\alpha_2,\ldots,\alpha_w\}$; the Receiver Operating Characteristic (ROC) curve is simply the curve consisting of line segments that join these points; and the area under the ROC curve is the integral, estimated using the rectangle method, of this curve. Figure \ref{samproc} provides an example of a ROC curve, where the area under the curve is approximately $\mathrm{Area}_\mathrm{ROC} = 0.785$.

\subsubsection{Example of Parameter Variation for the ROC Curve}

The following provides an example of using decision parameter variation to generate the ROC curve for the Random Forest classifier. Recall that the Random Forest classifier $g$ predicts the label for a new observation $X$ based on the votes $T_i(X)$ of $t$ Decision Trees, generated by bootstrap samples from the training observations:
\[
\textnormal{label of $X$} = g(X) = \mathrm{mode}\{T_i(X):i = 1,2,\ldots,t\}.
\]
An equivalent definition is
\begin{equation}
\label{rfvotealpha}
g(X) = \begin{cases}
1 &\quad \textnormal{if $\frac{\mathrm{card}\{T_i(X) = 1\}}{t} \geq \frac{1}{2}$};\\
0 &\quad \textnormal{else}.
\end{cases}
\end{equation}
From this definition, it is clear that the decision parameter $\alpha$ for the ROC curve can be the voting threshold, which in (\ref{rfvotealpha}) is $\alpha = 0.5$. In  other words, for $\alpha \in [0,1]$, the parametrized Random Forest classifier $g_\alpha$ is defined by $g_\alpha(X) = 1$ if $\mathrm{card}\{T_i(X) = 1\}/t \geq \alpha$ and $g_\alpha(X) = 0$ otherwise. A similar definition for the parameterized $k$-NN classifier, also based on the voting threshold, exists as well.

Note that if $\alpha = 0$, all predictions from $g_\alpha$ is 1 and if $\alpha = 1$, almost all predictions would be 0. Hence, the true positive and false positive rates would be $(tpr_0,fpr_0) = (1,1)$ and $(tpr_1,fpr_1) \approx (0,0)$ for predictions by Random Forest. The ROC curve, as $\alpha$ ranges from $0$ to $1$, would then be generated in a counterclockwise manner. 

Since the area under the ROC curve evaluates predictions according to parameter variation, one advantage is that the area provides a measure of classification performance for a classifier dependent on its decision parameter. The ROC curve itself also allows for a visual representation of how the true and false positive rates change as the parameter varies. The disadvantage is of course that not all classifiers have parametrical dependencies, or that the parameters are difficult or impossible to determine and alter.

Section \ref{crossvalid} below explains two methods of dividing a training sample to allow for a classifier to train and make predictions; these predictions can then be evaluated based on measures defined above.

\section{Validation on Training Set}\label{crossvalid}

This section discusses two methods of dividing a training set to evaluate a classifier and estimate optimal classification parameters. As usual, the training sample will be denoted as
\[
\mathcal{S}_\mathrm{lab}= \{(X_1,Y_1),(X_2,Y_2),\ldots,(X_n,Y_n)\}.
\]

\subsection{Holdout Method}

The first method, known as the {\em Holdout Method}, of dividing a training sample is to simply randomize the sample and divide it into an evaluation-testing set and an evaluation-training set \cite{crossvalid}:
\[
\mathcal{S}_\mathrm{lab} = \mathcal{S}_\mathrm{test}\cup \mathcal{S}_\mathrm{train},
\]
where
\begin{align*}
\mathcal{S}_\mathrm{test}& = \{(X_1,Y_1),(X_2,Y_2),\ldots,(X_{n'},Y_{n'})\}\\
\mathcal{S}_\mathrm{train}&=\{(X_{n'+1},Y_{n'+1}),(X_{n'+2},Y_{n'+2}),\ldots,(X_{n},Y_{n})\}
\end{align*}
for some $1 < n' < n$. A classifier would train on $\mathcal{S}_\mathrm{train}$  and predict labels for observations $X_{1},X_{2},\ldots,X_{n'}$ in $\mathcal{S}_\mathrm{test}$. Since the true labels for these observations are known, the predictions can be evaluated according to measures in Section \ref{permeasure}. In addition, a learning algorithm often depends on one, or more, classification parameter in training, such as the value $k$ for $k$-NN or the number $t$ of generated Decision Trees for Random Forest. In such a case, once a training sample has been divided, the classifier under different parameter values would be run and evaluated, and the value that results in the best classification performance, according to the same training and testing split, would be selected as optimal.
 
In practice, the number $n'$ of testing observations is usually taken to be approximately 1/3 of the size $n$ of the entire training set. The Holdout Method is often repeated many times and the average of the evaluation measures, across the repetitions, is considered as an estimate for classification performance \cite{crossvalid}.

\subsection{Cross Validation}

The method of {\em $t$-fold cross validation} divides a training sample $\mathcal{S}_\mathrm{lab}$ into $t$ groups and trains and predicts on the groups in a sequential manner, explained as follows \cite{crossvalid}:
\begin{enumerate}
\item Randomize $\mathcal{S}_\mathrm{lab}$ and partition it into $t$ disjoint groups, each of roughly equal size:
\[
\mathcal{S}_\mathrm{lab} = \bigcup_{i = 1}^t \mathcal{S}_i,
\]
where $|\mathcal{S}_i| \approx n/t$.
\item For $i = 1,2,\ldots,t$, train a classifier with all the groups, except for the $i$-th one: $\mathcal{S}_\mathrm{lab}\setminus \mathcal{S}_i$.
\item Allow the trained classifier to predict labels for observations in $\mathcal{S}_i$.
\item Compare the prediction labels against the actual labels and assign a score $s_i$ of performance, e.g. based on the evaluation measures defined in Section \ref{permeasure}.
\end{enumerate}

\begin{figure}
\begin{center}
\includegraphics[scale = 0.5]{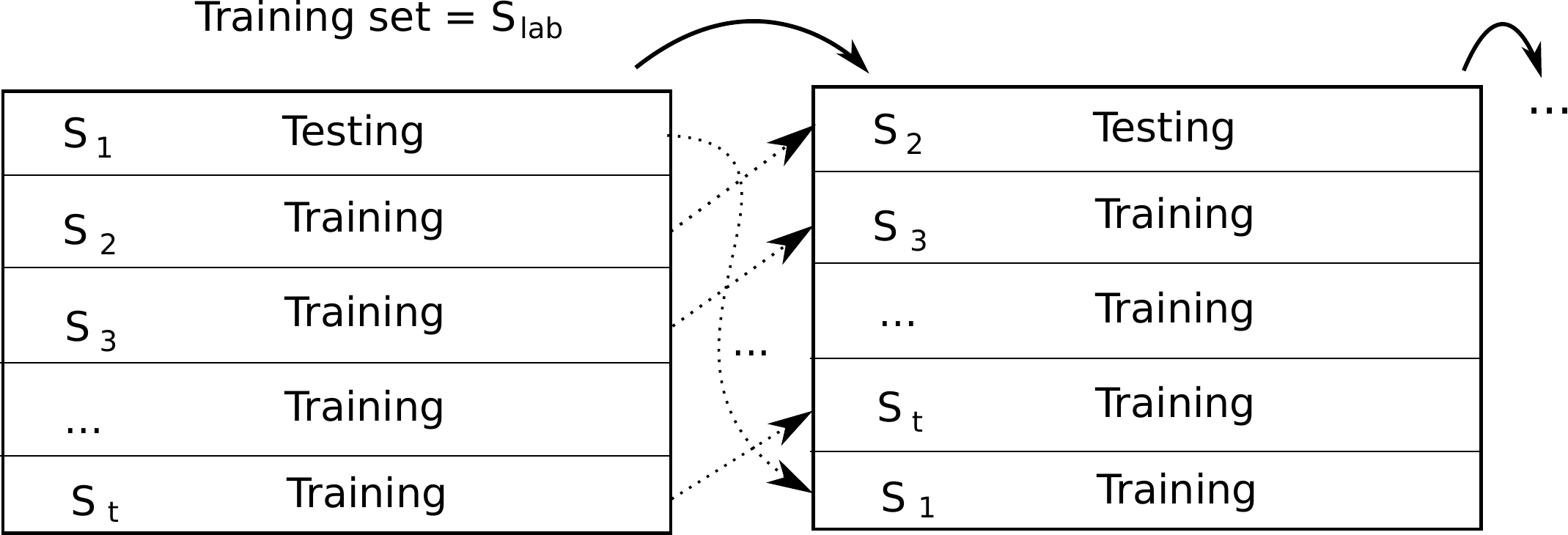}
\caption[Visualization of the cross validation process]{This diagram gives a visualization of the cross validation process. A training set is divided into $t$ groups, or stacks, $\mathcal{S}_1,\mathcal{S}_2,\ldots,\mathcal{S}_t$; after a classifier is tested on $\mathcal{S}_1$, it is put below the other stacks, and $\mathcal{S}_2$ becomes the new testing set. The process iterates until $\mathcal{S}_1$ is on top of the stacks again.}
\label{crossval}
\end{center}
\end{figure}

\noindent The mean ${\bar s}$ of the scores $s_1,s_2,\ldots,s_t$,
\[
{\bar s} = \sum_{i = 1}^t s_i,
\]
is taken to be an estimate of the classifier's performance. 

In particular, the method of $n$-fold cross validation, where $n$ is the total number of observations in $\mathcal{S}_\mathrm{lab}$, is called {\em leave-one-out cross validation}, because in each repetition, all but one instance in the training set are used to train a classifier \cite{crossvalid}. Figure \ref{crossval} is an illustration of the cross validation process. If a learning algorithm depends on some classification parameter in training, once the sample has been partitioned into $t$ disjoint groups, the cross validation process would run multiple times for different parameter values for determining the optimal one for classification.

Both the Holdout Method and $t$-fold cross validation are commonly used in practice for dividing a training set to evaluate the performance of a classifier and to estimate optimal parameters used for classification. The Holdout Method is the simplest but suffers from possibly high variance, as the training set is divided at random and often more than once. Furthermore, not all observations are guaranteed to be in the evaluation-training set. Conversely, $t$-fold cross validation ensures that each observation is trained $t-1$ times and tested once. The disadvantage is that the evaluation process has to be repeated $t$ times and can be time-consuming. For more information on these methods, see \cite{crossvalid} and \cite{dmreview}.

In summary, Chapter \ref{validclass} has introduced three measures of evaluating the predictions of a classifier, namely accuracy, F-Measure, and the area under the ROC curve, and explained two methods, the Holdout Method and $t$-fold cross validation, of dividing a training sample to evaluate a classifier according to these measures. The next chapter explains the methodology of applying these evaluation techniques for the $k$-NN and Random Forest classifiers on the genetic dataset for predicting coronary artery disease.

\cleardoublepage

\chapter{Dataset and Methodology}\label{datasetchap}

Chapter \ref{datasetchap} describes the genetic dataset of Single-Nucleotide Polymorphism (SNP) information considered for this thesis, and the methodology of applying the dimensionality reduction steps and classification algorithms on this dataset. Section \ref{inforaw} explains the dataset that comes from the Ontario Heart Genomics Study. Section \ref{methosect} details the methodology of applying two approaches of dimensionality reduction and classification algorithms, which would be evaluated through the Holdout Method and cross validation, to predict coronary artery disease on this dataset.

\section{Information on Dataset}\label{inforaw}

Directly from the Ontario Heart Genomics Study (OHGS), the dataset, known as the {\em OHGS B2 file}, considered for this thesis contains genotype information for 865688 SNPs across 3907 patients (the observations), each labeled as {\em control} (Non CAD) with label 0 or as {\em case} (CAD) with label 1. These SNPs are based on the patients' DNA information from Chromosomes 1 to 22, except for Chromosome 16 because the file storing SNP genotype information for this chromosome is corrupt. Also, for simplicity, SNPs from the two sex chromosomes X and Y are not considered either.

Table \ref{datasetinfo} provides counts for the number of control and case patients in the OHGS dataset, and Table \ref{snpcount} lists the number of SNPs from each of Chromosomes 1 to 15 and 17 to 22. As mentioned in Section \ref{gwas}, due to DNA microarray genotyping limitations, the genotype for a patient at each SNP cannot be determined exactly; only probabilities for the three possible genotypes at the SNP are provided from the Ontario Heart Genomics Study. Consequently, the dataset in question is stored as
\[
M_\mathrm{raw} = \left(\begin{array}{ccccccccc}
p_{1,1}&q_{1,1}&r_{1,1} && \ldots && p_{1,865688}&q_{1,865688}&r_{1,865688}\\
p_{2,1}&q_{2,1}&r_{2,1} && \ldots && p_{2,865688}&q_{2,865688}&r_{2,865688}\\
\vdots & \vdots& \vdots && \ddots &&\vdots & \vdots& \vdots \\
p_{3907,1}&q_{3907,1}&r_{3907,1} && \ldots && p_{3907,865688}&q_{3907,865688}&r_{3907,865688}\\
\end{array}\right),
\]
where $p_{ij},q_{ij},r_{ij}\in[0,1]$ are respectively the corresponding probabilities of observation $i$ having genotype {\em homozygous major}, {\em heterozygous}, and {\em homozygous minor} for SNP $j$. Because there are only three possible genotypes, $p_{ij} + q_{ij} + r_{ij} = 1$.

\begin{table}
\begin{center}
\caption[OHGS dataset class information]{Information on the number of observations with labels 0 (control) and 1 (case) of the OHGS dataset considered for this thesis.}

\vspace{4mm}

\begin{tabular}{|c|c|c|}
\hline
Label & \# of observations & Dimension of dataset\\
\hline\hline
Control (Non CAD) & 1978 &-\\
\hline
Case (CAD) & 1929&-\\
\hline \hline
Total & 3907 & 865688\\
\hline
\end{tabular}
\label{datasetinfo}
\end{center}
\end{table}

\begin{table}
\begin{center}
\caption[OHGS dataset's SNP count from each chromosome]{Information on the number of SNPs, from each of Chromosomes 1 to 15 and 17 to 22, in the OHGS dataset.}

\vspace{4mm}

\begin{tabular}{|c||ccccccc|}
\hline
Chromosome & 1 & 2 & 3 & 4 & 5 & 6 & 7 \\
\hline
\# of SNPs & 73571 & 75918 & 62268 &57582 &57971&57687&48380\\
\hline \hline
Chromosome & 8 & 9 & 10 & 11 & 12 & 13 & 14\\
\hline
\# of SNPs  &50026&42785 &49600 & 45927 &43802 &34979 &28936\\
\hline \hline
Chromosome &  15 & 17 & 18 & 19 & 20 &21 & 22\\
\hline
\# of SNPs & 26907 & 21319 & 27212 & 12422 & 23488 &12924 & 11984\\
\hline
\end{tabular}
\label{snpcount}
\end{center}
\end{table}

As the entries of $M_\mathrm{raw}$ are all probabilities, belonging to $\mathbb{R}$, the observations reside in a finite dimensional normed vector space $\Omega = (\mathbb{R}^{865688\times 3},||\cdot||)$. As a result, Random Projections and the $k$-Nearest Neighbour classifier can be applied to this dataset directly for the prediction of coronary artery disease. The approach with MTD Feature Selection with Random Forest, on the other hand, would require that $M_\mathrm{raw}$ be transformed to a discrete dataset. Such a transformation can be defined as follows. Let $\Omega = \{\mathrm{HM}, \mathrm{He}, \mathrm{Hm}\}$ with three possible words, corresponding to the three genotypes for each SNP, homozygous major $(\mathrm{HM})$, heterozygous $(\mathrm{He})$, and homozygous minor $(\mathrm{Hm})$, be equipped with the discrete distance, and consider the map
\[
M_\mathrm{raw} \longmapsto M_\mathrm{cat} = \left(\begin{array}{cccc}X_{1,1} &X_{1,2} &\ldots & X_{1,865688}\\
X_{2,1} & X_{2,2}& \ldots & X_{2,865688}\\
\vdots & \vdots &\ddots & \vdots\\
X_{3907,1} & X_{3907,2} & \ldots & X_{3907,865688}
\end{array} \right),
\]
where
\[
X_{ij} = \begin{cases}
\mathrm{HM} &\quad \textnormal{if \quad$p_{ij}  = \max\{p_{ij},q_{ij},r_{ij}\}$}\\
\mathrm{He} &\quad \textnormal{if \quad$q_{ij}  = \max\{p_{ij},q_{ij},r_{ij}\}$}\\
\mathrm{Hm} &\quad \textnormal{if \quad$r_{ij}  = \max\{p_{ij},q_{ij},r_{ij}\}.$}\\
\end{cases}
\]
Then, the mentioned techniques, MTD Feature Selection and Random Forest, on the domain $\Omega^{865688}$ can be applied to $M_\mathrm{cat}$ for classification. Section \ref{methosect} discusses the methodology of applying the first approach, with Random Projections and $k$-NN, on $M_\mathrm{raw}$ and the second approach, with MTD Feature Selection and Random Forest, on $M_\mathrm{cat}$.

\section{Methodology}\label{methosect}

The goal of Section \ref{methosect} is to explain the methodology behind two approaches for the classification of coronary artery disease with the genetic dataset from the Ontario Heart Genomics Study, evaluated with the Holdout Method and cross validation:
\begin{description}
\item[Approach 1] (Random Projections and $k$-NN): Project the genetic dataset to a lower dimensional space with Random Projections and apply the $k$-NN classifier (Section \ref{approach1}).
\item[Approach 2] (MTD Feature Selection and Random Forest): Apply MTD Feature Selection to select important SNPs from the genetic dataset and use the Random Forest classifier (Section \ref{approach2}).
\end{description}

Due to the high-dimensionality of the genetic dataset, high-performance computing resources, namely the Enterprise M9000 servers from High Performance Computing Virtual Laboratory (HPCVL) \cite{hpcvl}, located in Eastern Ontario, Canada, are used to run the two dimensionality reduction methods, Random Projections and MTD Feature Selection, in parallel to save computational costs. The $k$-NN and Random Forest classifiers are then able to run, on the reduced datasets, through a personal computer with the R packages {\tt knnflex} \cite{knnflex} and {\tt randomForest} \cite{randomForest}.

\subsection{Approach 1: Random Projections and $k$-NN}\label{approach1}

For the initial experiment for the prediction of artery disease with the genetic dataset from the Ontario Heart Genomics Study, Approach 1 applies the Random Projections technique along with the $k$-Nearest Neighbour classifier using the real-valued dataset $M_\mathrm{raw}$. As each of the 865688 SNPs in the OHGS dataset requires 3 coordinates in $M_\mathrm{raw}$, for a total of $865688\times 3 = 2597064$ columns, this dataset is extremely high-dimensional. Therefore, for this initial experiment of Approach 1, only the 73571 SNPs from Chromosome 1 are used, so the considered dataset contains 3907 observations in $\Omega = (\mathbb{R}^{220713},||\cdot||)$, each with $73571\times 3 = 220713$ coordinates:
\[
M_\mathrm{raw,Chr 1} = \left(\begin{array}{ccccccccc}
p_{1,1}&q_{1,1}&r_{1,1} && \ldots && p_{1,73571}&q_{1,73571}&r_{1,73571}\\
p_{2,1}&q_{2,1}&r_{2,1} && \ldots && p_{2,73571}&q_{2,73571}&r_{2,73571}\\
\vdots & \vdots& \vdots && \ddots &&\vdots & \vdots& \vdots \\
p_{3907,1}&q_{3907,1}&r_{3907,1} && \ldots && p_{3907,73571}&q_{3907,73571}&r_{3907,73571}\\
\end{array}\right).
\]

From the method of Random Projections, the observations from $M_\mathrm{raw,Chr 1}$ are projected to the reduced space $(\mathbb{R}^{5000}, ||\cdot||)$ with dimension $m' = 5000$, via random matrix multiplication as explained in Section \ref{rpsect}. Due to the high-dimensionality of the dataset, only one value of $m'$ and one randomly generated matrix are considered. The $k$-NN classifier is run on the projected observations using both the $||\cdot||_1$ and $||\cdot||_2$ norms for the classification parameter $k = 1,3,5,7,9,11,13,15,17,19$. For validation of the $k$-NN classifier's performance, the Holdout Method, using 2880 observations for training and 1027 observations for testing (see Table \ref{holdoutinfo} for the class sizes), and  5-fold cross validation on the entire 3907 observations (from Table \ref{datasetinfo}) are run on the reduced dataset. The performance measures of accuracy, the F-Measure and area under the ROC curve are used; Section \ref{resultsknn} includes all the results from this validation process.

The following details a simple parallel framework for Random Projections, which is used for the genetic dataset $M_\mathrm{raw}$ due to its high-dimensional observations. Then, Section \ref{approach2} explains the methodology of classifying coronary artery disease based on Approach 2 with MTD Feature Selection and Random Forest.

\begin{table}
\begin{center}
\caption{OHGS dataset training and testing split for Holdout Method.}

\vspace{4mm}

\begin{tabular}{|c||c|c||c|}
\hline
& \# of controls  & \# of cases & Total\\
\hline\hline
Training & 1549 & 1331&2880\\
\hline
Testing &429&598&1027\\
\hline\hline
Total & 1978 & 1929 & 3907\\
\hline
\end{tabular}
\label{holdoutinfo}
\end{center}
\end{table}

\subsubsection{Parallel Random Projections}

This section describes a general parallel framework for applying Random Projections on a dataset with observations in a high-dimensional domain $\Omega = (\mathbb{R}^m,||\cdot||)$. Suppose such a dataset has $n$ observations, each with $m$ coordinates, represented as a matrix $M$ of size $n\times m$. For a lower dimension $m'<<m$, applying Random Projections to project observations to $(\mathbb{R}^{m'},||\cdot||)$ is of course equivalent to generating a random matrix $M_\mathrm{rand}$ (as in Theorem \ref{probjltheo}), of size $m\times m'$, and performing the matrix multiplication
\[
M_\mathrm{red} = \textnormal{the reduced dataset from $M$} = \frac{1}{\sqrt{m'}}M\cdot M_\mathrm{rand}.
\]

Both the generation of the random matrix $M_\mathrm{rand}$ and the multiplication $M\cdot M_\mathrm{rand}$ can be done in parallel. Fix $z$ as the number of jobs to run in parallel and partition $M$ into $z$ parts by columns, each of roughly equal size:
\[
M = \left[ \begin{array}{cccc}M_1&M_2&\ldots & M_z\end{array}\right],
\]
where each $M_i$ has the same number $n$ of rows as $M$ and has approximately $m/z$ columns, denoted by $\mathrm{col}(M_i)$. Generate $z$ random matrices
\[
M_\mathrm{rand,1},M_\mathrm{rand,2},\ldots,M_\mathrm{rand,z},
\]
each of size $\mathrm{col}(M_i)\times\ m'$. Then, the reduced matrix $M_\mathrm{red}$ is simply the sum 
\[
M_\mathrm{red} = \frac{1}{\sqrt{m'}}M\cdot M_\mathrm{rand}=\frac{1}{\sqrt{m'}} \sum_{i = 1}^{z} M_{i}\cdot M_{\mathrm{rand},i},
\]
where each matrix product $M_i\cdot M_{\mathrm{rand},i}$ can be calculated in parallel. Figure \ref{parallelrp} provides an illustration of this framework for Random Projections. 

\begin{figure}
\begin{center}
\includegraphics[scale = 0.5]{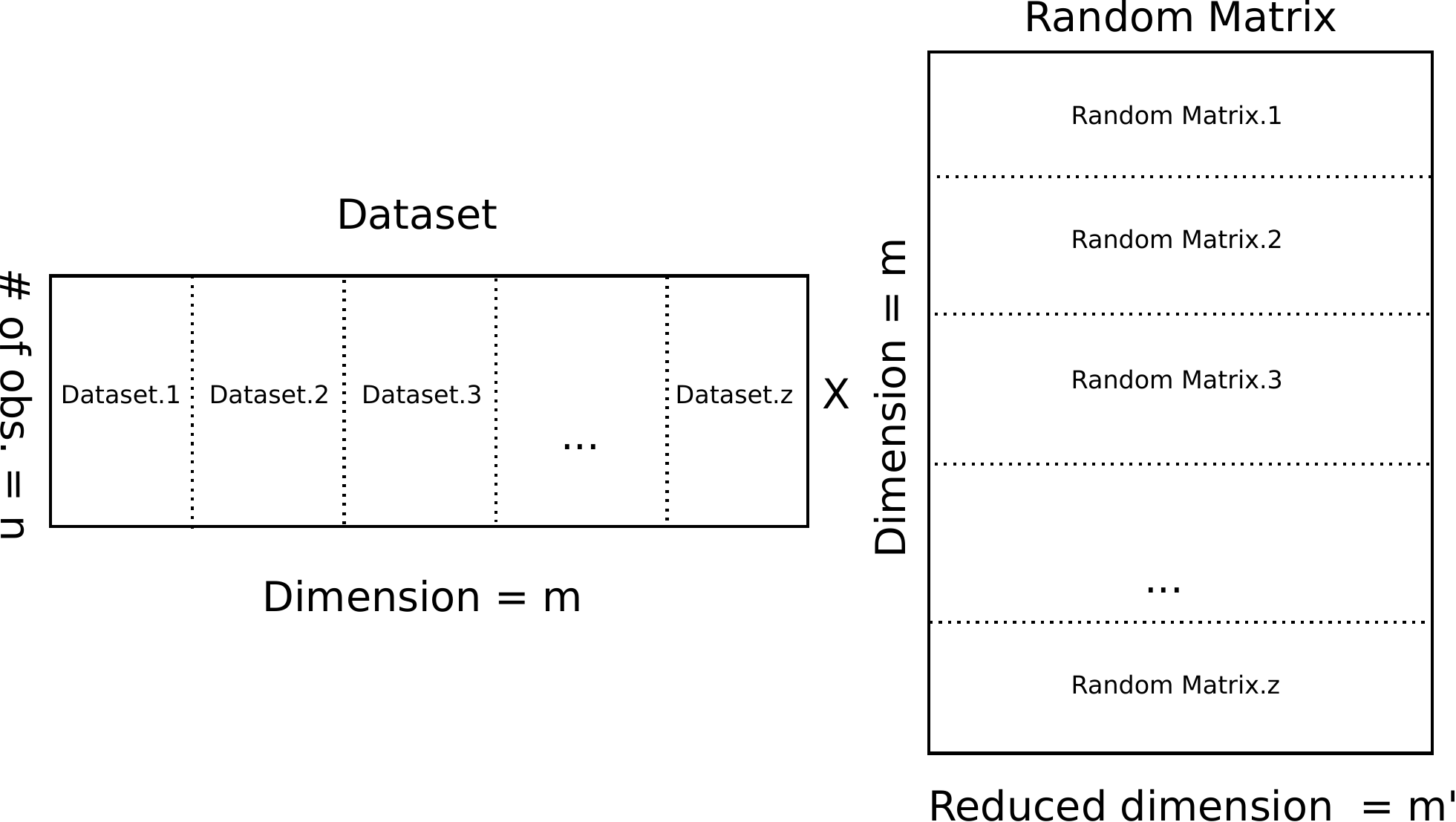}
\vspace{1mm}

{\large =}

 \includegraphics[scale = 0.75]{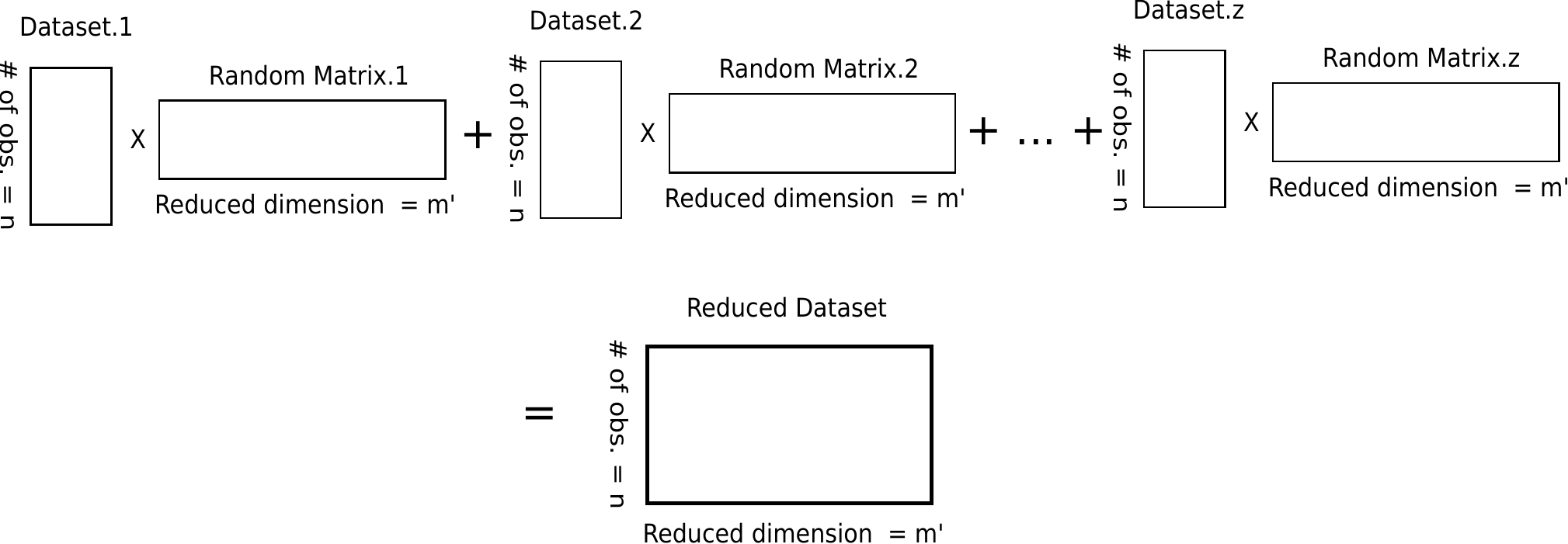}

\caption[Random Projections in parallel diagram]{Illustration of running Random Projections via multiplication by a randomly generated matrix in parallel. The dataset is partitioned into $z$ parts, each part is multiplied by a randomly generated matrix of the appropriate size, and then the products are summed together.}
\label{parallelrp}
\end{center}
\end{figure}

\begin{table}
\begin{center}
\caption[Number of SNPs selected with MTD Feature Selection from Chromosome 1]{Number of important SNPs selected with MTD Feature Selection from Chromosome 1, for the three various values of $\alpha$.}
\label{fsmtdchr1}

\vspace{4mm}

\centerline{\begin{tabular}{|c|ccc|}
\hline
 & $\alpha = 0.2$  & $\alpha = 0.3$ & $\alpha = 0.4$ \\
 \hline
\# of total $\to$ important SNPs & $73571 \to 287$ & $73571 \to 62$ & $73571 \to 12$  \\
\hline
\end{tabular}}
\end{center}
\end{table}

Regarding Approach 1, the dataset $M_\mathrm{raw}$ consists of $n = 3907$ observations, each with $220713$ coordinates. The reduced dimension is $m' = 5000$ and the parallel Random Projections framework is realized with the M9000K servers from HPCVL using $z = 21$ parallel jobs. The parallel computations take approximately 1 month to run.

\subsection{Approach 2: Mass Transportation Distance and Random Forest}\label{approach2}

Two experiments for Approach 2 are run for the prediction of coronary artery disease using the genetic dataset. For the first experiment, as a comparative study against Approach 1, the discrete dataset $M_\mathrm{cat,Chr1}$ (corresponding directly to $M_\mathrm{raw,Chr1}$) containing all 3907 observations with the 73571 SNP information from Chromosome 1 is used:
\[
M_\mathrm{cat,Chr1} = \left(\begin{array}{cccc}X_{1,1} &X_{1,2} &\ldots & X_{73571}\\
X_{2,1} & X_{2,2}& \ldots & X_{2,73571}\\
\vdots & \vdots &\ddots & \vdots\\
X_{3907,1} & X_{3907,2} & \ldots & X_{3907,73571}
\end{array} \right),
\]
where $X_{ij} \in \{\mathrm{HM},\mathrm{He},\mathrm{Hm}\}$. This dataset is similarly  divided into 2880 training observations and 1027 testing observations, as seen in Table \ref{holdoutinfo}, according to the Holdout Method. The MTD Feature Selection technique is run on the 2880 training observations and important SNPs are selected for each threshold value $\alpha = 0.2,0.3,0.4$. The computations take approximately 2 days on the M9000 supercomputers at HPCVL. Table \ref{fsmtdchr1} provides the number of selected SNPs for each value.

Random Forest, with default settings and $t = 500$ generated Decision Trees, is then trained on these 2880 observations using only the important SNPs and evaluated on the 1027 testing observations with these same SNPs. Note that since MTD Feature Selection uses class label information, it is run only on the training observations from the Holdout Method. Due to the high-dimensionality of the dataset and its associated computational costs, 5-fold cross validation is not considered in this experiment for Approach 2.

\begin{table}
\begin{center}
\caption[Number of SNPs selected with MTD Feature Selection from Chromosomes 1-15, 17-22]{Number of important SNPs selected with MTD Feature Selection from Chromosomes 1 to 15 and 17 to 22, for various values of $\alpha$.}
\label{fsmtdchrall}

\vspace{4mm}

\centerline{\begin{tabular}{|c|cccc|}
\hline
 & $\alpha = 0.2$  & $\alpha = 0.3$ & $\alpha = 0.4$ & $\alpha = 0.5$\\
 \hline
\# of total $\to$ important SNPs & $865688 \to 3335$ & $865688 \to 716$ & $865688 \to 99$ & $865688 \to 10$ \\
\hline
\end{tabular}}
\end{center}
\end{table}

For the second experiment of Approach 2, the entire discrete dataset $M_\mathrm{cat}$ of 3907 observations, with information on all 865688 SNPs from Chromosomes 1 to 15 and 17 to 22, is considered. The dataset is again divided in a training set and a testing set according to Table \ref{holdoutinfo}, and the MTD Feature Selection method is run on the training observations at importance thresholds $\alpha = 0.2,0.3,0.4,0.5$. It takes approximately 3 weeks to run the computations on the M9000 servers from HPCVL. Table \ref{fsmtdchrall} records the number of important SNPs selected for each threshold. Random Forest, with default settings and $t = 500$ generated Decision Trees, is trained on the 2880 observations and evaluated on the other 1027 observations, according to the selected SNPs. 

For both experiments of Approach 2, the predictive performance scores of accuracy, the F-Measure, and area under the ROC curve are considered. The results for this approach are found in Section \ref{mtdrfsec}.

\cleardoublepage

\chapter{Results and Discussion}\label{resultschap}

This chapter details the results of comparing the two approaches, as explained previously in Chapter \ref{datasetchap}, of predicting coronary artery disease based on the genetic dataset from the Ontario Heart Genomics Study. Section \ref{resultsknn} provides the performance scores for Approach 1 using Random Projections followed by the $k$-NN classifier, and Section \ref{mtdrfsec} provides the results from the new MTD Feature Selection and Random Forest. Section \ref{discsect} concludes the chapter with a discussion on the results.

\section{Random Projections and $k$-NN Results}\label{resultsknn}

Regarding the approach of applying Random Projections and the $k$-NN classifier on the dataset containing genotype information on 73571 SNPs from Chromosome 1 and 3907 observations, Tables \ref{foldfiveknn} and \ref{holdoutknn} respectively list the classification results from 5-fold cross validation and the Holdout Method on the reduced dataset of dimension $m' = 5000$. Based on 5-fold cross validation, the value $k = 9$ and the $\ell_1$ norm from the $k$-NN classifier resulted in the best accuracy of 0.5554 and the highest F-Measure of 0.6050; the value $k = 19$ with the $\ell_1$ norm resulted in the optimal area under the ROC curve of 0.5174.

According to the Holdout Method, the value $k = 7$ along with the $\ell_2$ norm, from $k$-NN, resulted in the highest accuracy of 0.5492; the value $k = 11$ and the $\ell_1$ norm scored the best F-Measure of 0.5764; and the value $k = 19$ and the $\ell_1$ norm resulted in the best area under the ROC curve of 0.4991. Figure \ref{rocknn} is a sample ROC curve from the Holdout Method with the $k$-NN classifier where $k = 19$ and the $\ell_1$ norm. Note that for $k = 1$, for both the Holdout Method and cross validation, the area under the ROC curve does not have much meaning since there would only be two pairs of true and false positive rates to estimate the area under the curve with.

\begin{table}
\begin{center}
\caption[Results for Approach 1 from 5-fold cross validation]{Accuracy, the F-Measure, and area under the ROC curve scores for the $k$-NN classifier on the Chromosome 1 SNP dataset, reduced to dimension $m' = 5000$, for difference values of $k$ under the $\ell_1$ and $\ell_2$ norms, from 5-fold cross validation.}
\label{foldfiveknn}
\vspace{3mm}

{\footnotesize
\centerline{\begin{tabular}{|c|cccccccccc|}
\hline
{\em Accuracy}&$k = 1$&$k = 3$&$k = 5$&$k = 7$&$k = 9$&$k = 11$&$k = 13$&$k = 15$&$k = 17$&$k = 19$\\
\hline
$\ell_1$ distance &0.5267     & 0.5342     & 0.5462     & 0.5472     & {\bf 0.5554}     & 0.5465     & 0.5467     & 0.5475     & 0.5490     & 0.5482\\
$\ell_2$ distance & 0.5309        & 0.5393        & 0.5372        & 0.5311 & 0.5431        & 0.5436        & {\bf 0.5444}        &  0.5444        & 0.5418        & 0.5424\\
\hline
{\em F-Measure}&$k = 1$&$k = 3$&$k = 5$&$k = 7$&$k = 9$&$k = 11$&$k = 13$&$k = 15$&$k = 17$&$k = 19$\\
\hline
$\ell_1$ distance & 0.5750     & 0.5823     & 0.5929     & 0.5985     & {\bf 0.6050}     & 0.5953     & 0.5988     & 0.6014     & 0.6032     & 0.6016\\
$\ell_2$ distance & 0.5756     & 0.5854     & 0.5833     & 0.5794     & 0.5922     & 0.5947     & 0.5974     & 0.5985     & 0.5968     & {\bf 0.6002}\\
\hline
{\em Area under ROC}&$k = 1$&$k = 3$&$k = 5$&$k = 7$&$k = 9$&$k = 11$&$k = 13$&$k = 15$&$k = 17$&$k = 19$\\
\hline
$\ell_1$ distance &-     & 0.3014     & 0.4094     & 0.4479     & 0.4765     & 0.4887     & 0.4931     & 0.5090     & 0.5122     & {\bf 0.5174} \\
$\ell_2$ distance &  -     & 0.2938     & 0.4016     & 0.4462     & 0.4668     & 0.4851     & 0.4990     & 0.5088     & 0.5125     & {\bf 0.5145}\\
\hline
\end{tabular}}}
\end{center}
\end{table}

\begin{figure}
\begin{center}
\includegraphics[scale = 0.45]{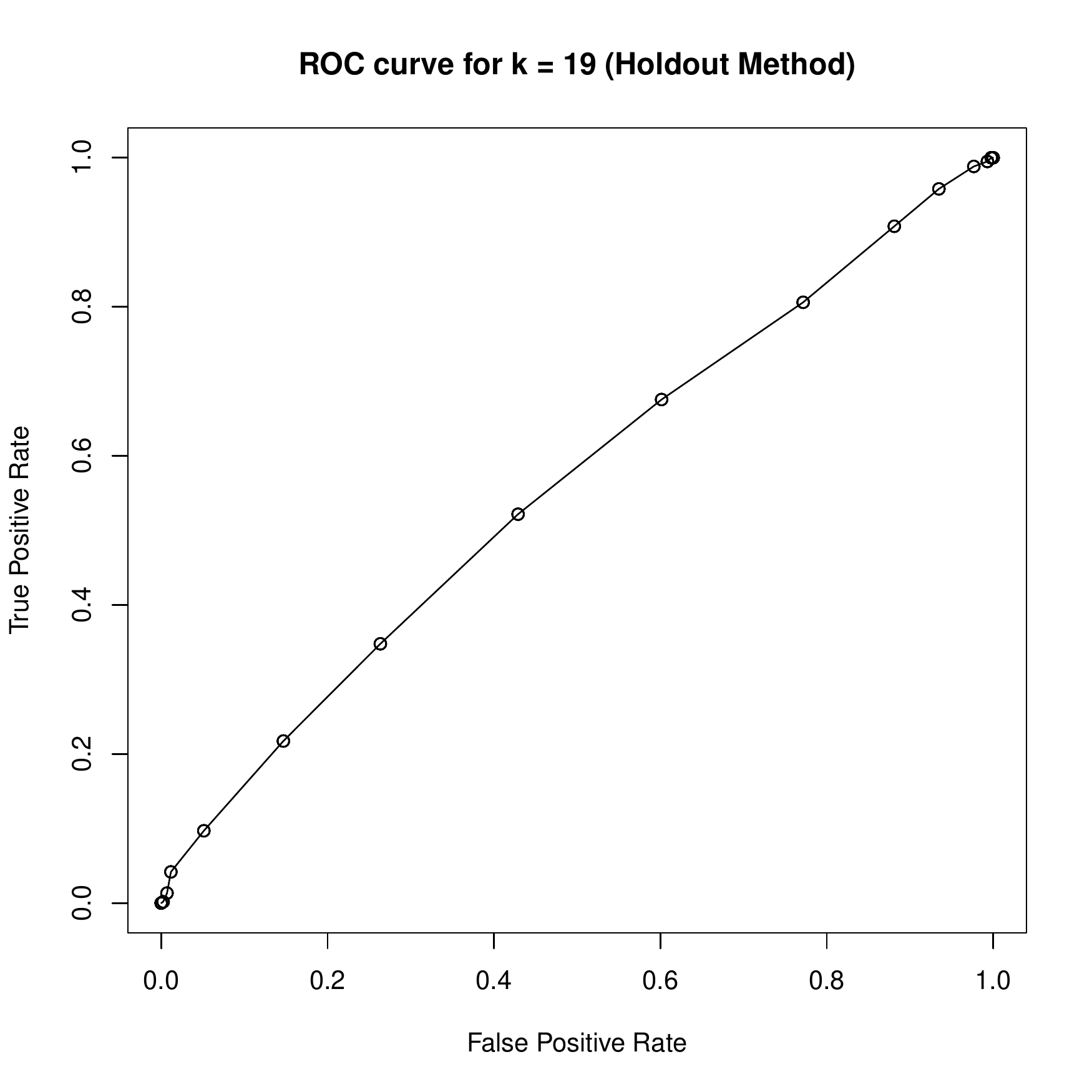}
\caption[Sample ROC curve from Approach 1]{Sample ROC curve, with an estimated area under the curve of 0.50, from the $k$-NN classifier, with $k = 19$ and the $\ell_1$ norm, after Random Projections using the Holdout Method.}
\label{rocknn}
\end{center}
\end{figure}

\begin{table}
\begin{center}
\caption[Results for Approach 1 from the Holdout Method]{Accuracy, the F-Measure, and area under the ROC curve scores for the $k$-NN classifier on the Chromosome 1 SNP dataset, reduced to dimension $m' = 5000$, for difference values of $k$ under the $\ell_1$ and $\ell_2$ norms, from the Holdout Method.}

\label{holdoutknn}
{\footnotesize
\centerline{\begin{tabular}{|c|cccccccccc|}
\hline
{\em Accuracy}&$k = 1$&$k = 3$&$k = 5$&$k = 7$&$k = 9$&$k = 11$&$k = 13$&$k = 15$&$k = 17$&$k = 19$\\
\hline
$\ell_1$ distance & 0.5063        & 0.5180        & 0.5219        & 0.5375& 0.5307        & {\bf 0.5463}        & 0.5433        & 0.5307        & 0.5287        & 0.5424\\
$\ell_2$ distance &0.5112     & 0.5122     & 0.5307     & {\bf 0.5492}     & 0.5346     & 0.5326     & 0.5287     & 0.5316     & 0.5307     & 0.5336\\
\hline
{\em F-Measure}&$k = 1$&$k = 3$&$k = 5$&$k = 7$&$k = 9$&$k = 11$&$k = 13$&$k = 15$&$k = 17$&$k = 19$\\
\hline
$\ell_1$ distance & 0.5445     & 0.5608     & 0.5651     & 0.5709     & 0.5562     & {\bf 0.5764}     & 0.5717     & 0.5642     & 0.5568     & 0.5704\\
$\ell_2$ distance & 0.5420     & 0.5391     & 0.5594     & {\bf 0.5756}     & 0.5582     & 0.5531     & 0.5493     & 0.5567     & 0.5529     & 0.5561\\
\hline
{\em Area under ROC}&$k = 1$&$k = 3$&$k = 5$&$k = 7$&$k = 9$&$k = 11$&$k = 13$&$k = 15$&$k = 17$&$k = 19$\\
\hline
$\ell_1$ distance &-     & 0.2363     & 0.3791     & 0.4306     & 0.4540     & 0.4636     & 0.4743     & 0.4746     & 0.4867     & {\bf 0.4991} \\
$\ell_2$ distance &  -     & 0.2424     & 0.3870     & 0.4450     & 0.4649     & 0.4720     & 0.4703     & 0.4779     & 0.4766     & {\bf 0.4975}\\
\hline
\end{tabular}}}
\end{center}
\end{table}

\section{MTD Feature Selection and Random Forest Results}\label{mtdrfsec}

For Approach 2, MTD Feature Selection and Random Forest were first applied to the dataset of 3907 observations and 73571 SNPs in Chromosome 1. The same dimensionality reduction and classification methods were then applied to the entire dataset of 3907 and 865688 SNPs across Chromosomes 1 to 15 and 17 to 22. Regarding the first experiment, the best results obtained were an accuracy of 0.6592, a F-Measure score of 0.6149, and an area under the ROC curve of 0.8392, where the threshold $\alpha = 0.2$ from MTD Feature Selection resulted in 287 important SNPs selected for classification by Random Forest. Table \ref{mtdrfchr1} records the predictive scores from this experiment with $\alpha = 0.2,0.3,0.4$, and Figure \ref{rocchr1comp} shows the ROC curves produced for this experiment at the three values of $\alpha$ (graph on the left) and compares these curves against the ROC curve from Approach 1, seen in Figure \ref{rocknn}, using the Holdout Method (graph on the right).

Involving the second experiment of Approach 2 with all 865688 SNPs, the best accuracy score of 0.6660 and area under the ROC curve of 0.8562 were obtained with $\alpha = 0.2$ and 3335 important SNPs selected for Random Forest, while the best F-Measure was obtained with $\alpha = 0.4$, resulting from 99 selected SNPs. Table \ref{mtdrfchrall} provides all the predictive performance measures of Random Forest for $\alpha = 0.2,0.3,0.4,0.5$. Figure \ref{rocallcomp} shows two plots of ROC curves: the plot on the left are the ROC curves from the second experiment of Approach 2 at the four various values of $\alpha$; the plot on the right consists of the three best ROC curves, obtained from the initial experiment of Approach 1 and the two experiments of Approach 2, based on the Holdout Method.

\begin{table}
\begin{center}
\caption[Results for Approach 2 using SNPs from Chromosome 1]{Accuracy, the F-Measure, and area under the ROC curve scores for the Random Forest classifier on the Chromosome 1 SNP dataset, for various values of the MTD Feature Selection threshold $\alpha$.}
\label{mtdrfchr1}

\vspace{4mm}

\centerline{\begin{tabular}{|c|ccc|}
\hline
 & $\alpha = 0.2$ & $\alpha = 0.3$ & $\alpha = 0.4$ \\
 \hline\hline
\# of total $\to$ important SNPs & $73571 \to 287$ & $73571 \to 62$ & $73571 \to 12$  \\
\hline\hline
{\em Accuracy} & {\bf 0.6592} & 0.6319 & 0.6338\\
\hline
{\em F-Measure} & {\bf 0.6149} & 0.5909 & 0.6008\\
\hline
{\em Area under ROC} & {\bf 0.8392} & 0.7739 & 0.7195\\
\hline
\end{tabular}}
\end{center}
\end{table}

\begin{figure}
\begin{center}
\centerline{\includegraphics[scale = 0.45]{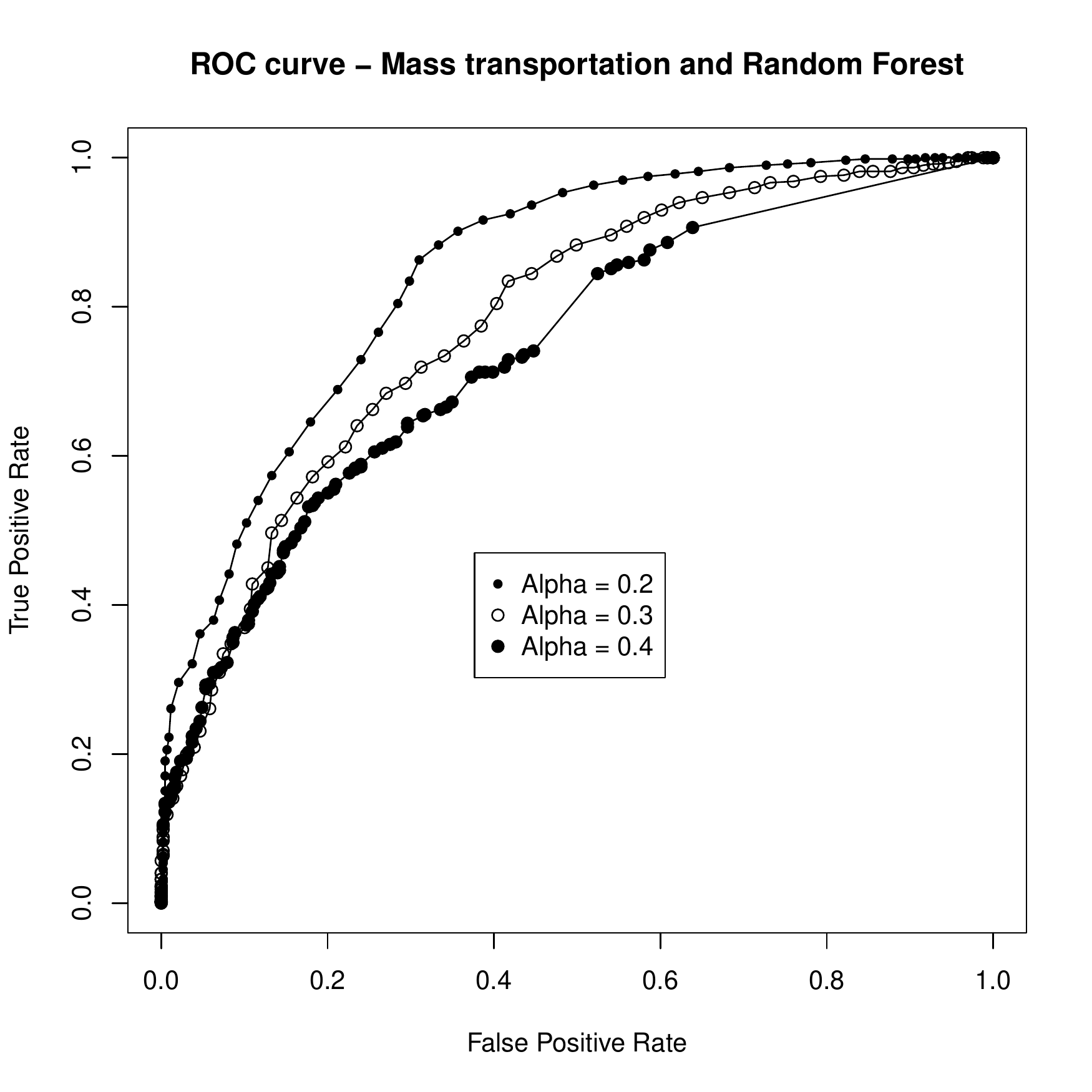}\includegraphics[scale=0.45]{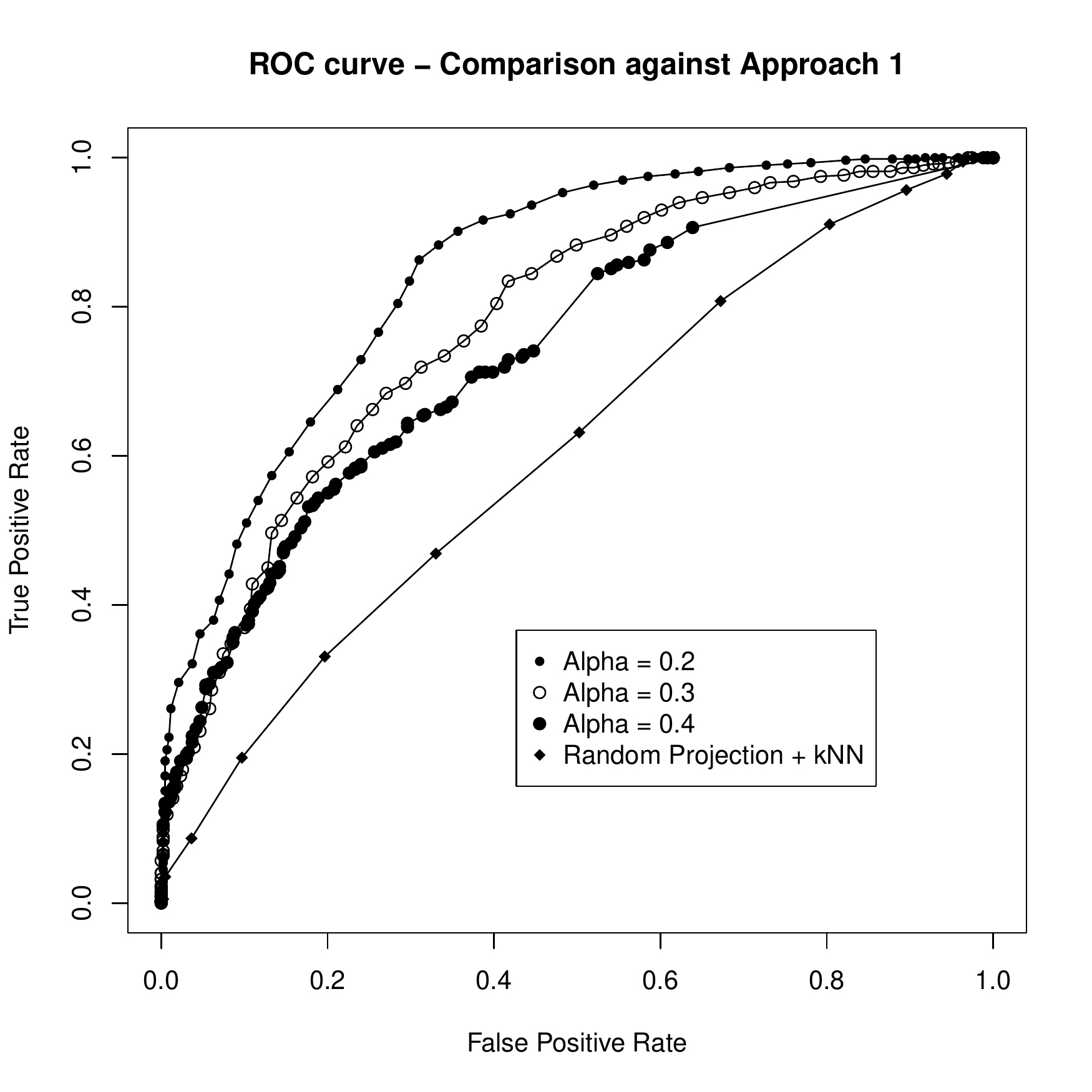}}
\end{center}
\caption[Comparison of ROC curves with SNPs from Chromosome 1]{The figure on the left shows the three ROC curves, based on the three values of $\alpha$, produced from Approach 2 with Random Forest using selected SNPs from Chromosome 1. The figure on the right compares these curves against the ROC curve from Approach 1, with the $k$-NN classifier for $k = 19$ and the $\ell_1$ norm.}
\label{rocchr1comp}
\end{figure}

\begin{table}
\begin{center}
\caption[Results for Approach 2 using SNPs from Chromosomes 1-15, 17-22]{Accuracy, the F-Measure, and area under the ROC curve scores for the Random Forest classifier on the Chromosomes 1 to 15 and 17 to 22 SNP dataset, for various values of the MTD Feature Selection threshold $\alpha$.}
\label{mtdrfchrall}

\vspace{4mm}

\centerline{\begin{tabular}{|c|cccc|}
\hline
 & $\alpha = 0.2$ & $\alpha = 0.3$ & $\alpha = 0.4$ & $\alpha = 0.5$\\
 \hline\hline
\# of total $\to$ important SNPs & $865688 \to 3335$ & $865688 \to 716$ & $865688 \to 99$ & $865688 \to 10$ \\
\hline\hline
{\em Accuracy} &{\bf 0.6660} & 0.6582 & 0.6475 & 0.6271\\
\hline
{\em F-Measure} &0.6202 & 0.6139 & {\bf 0.6253} & 0.5868\\
\hline
{\em Area under ROC} &{\bf 0.8562} & 0.8226 & 0.7767 &0.6137 \\
\hline
\end{tabular}}
\end{center}
\end{table}

\begin{figure}
\begin{center}
\centerline{\includegraphics[scale = 0.45]{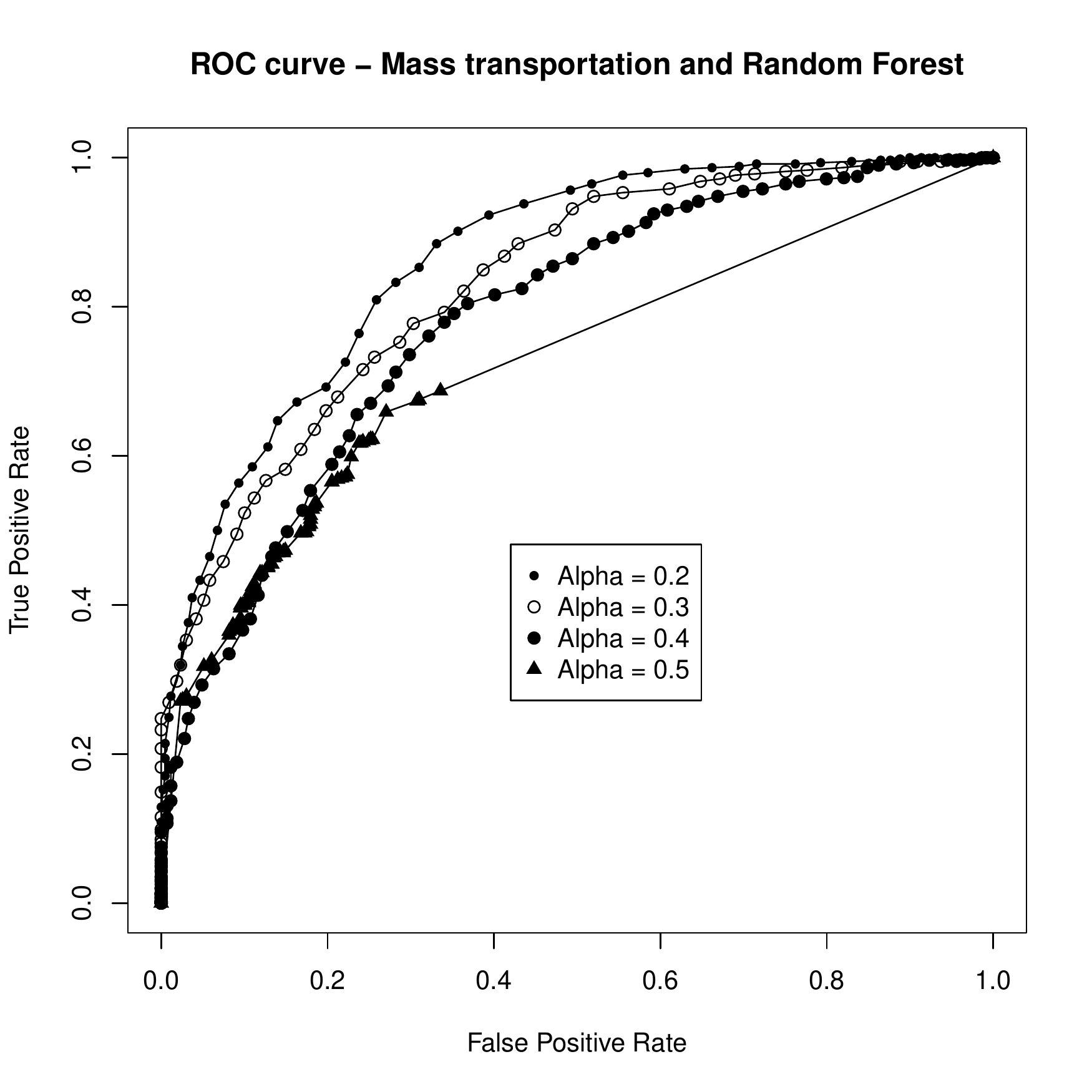}\includegraphics[scale=0.45]{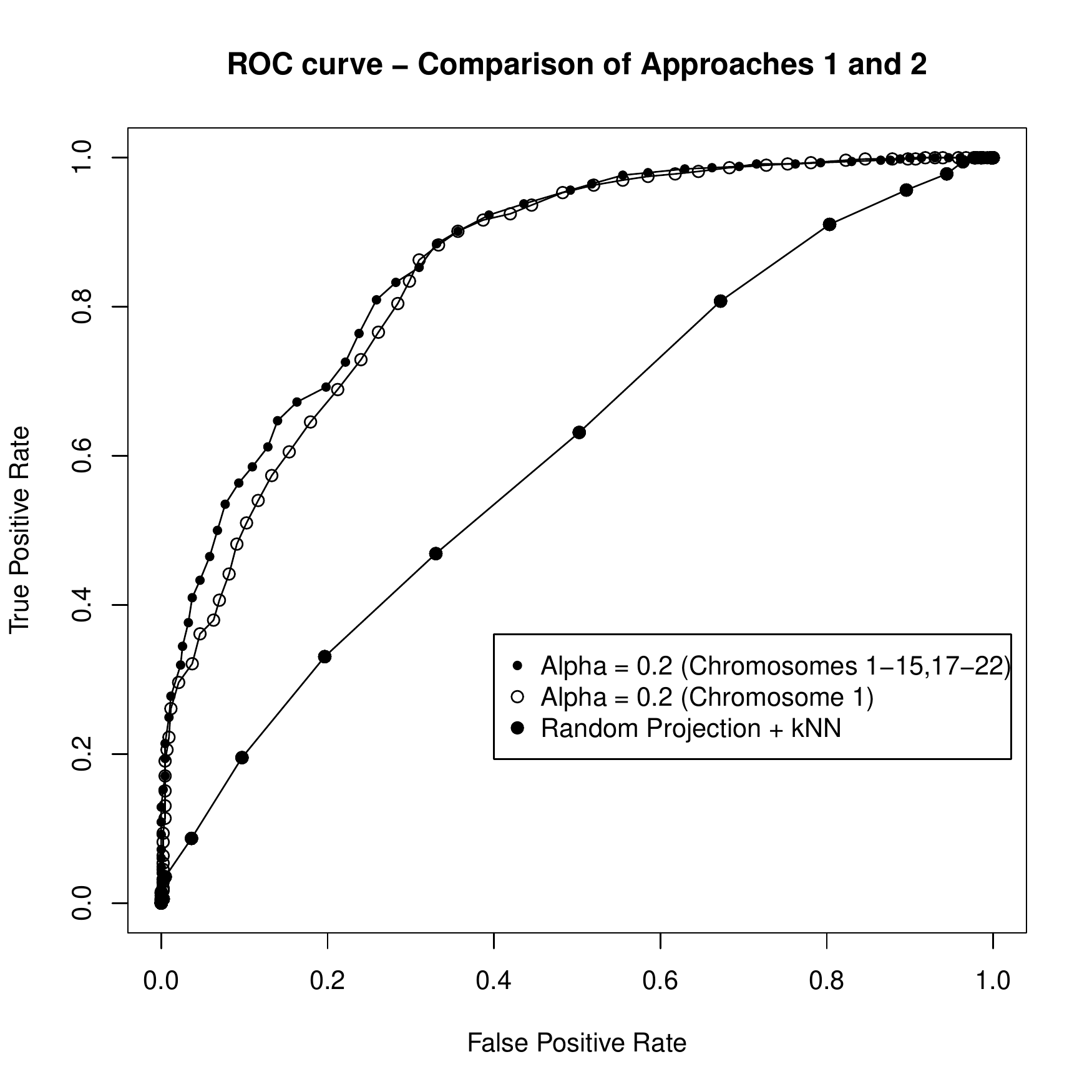}}
\end{center}
\caption[Best three ROC curves obtained]{The figure on the left shows the four ROC curves, based on the four values of $\alpha$, produced from Approach 2 with Random Forest using selected SNPs from Chromosomes 1 to 15 and 17 to 22. The figure on the right compares the three best ROC curves obtained for Approaches 1 and 2: the ROC curve with $k$-NN, for $k = 19$ and the $\ell_1$ norm, and the ROC curves with Random Forest with $\alpha = 0.2$ using selected SNPs from Chromosome 1 and then from Chromosomes 1 to 15 and 17 to 22.}
\label{rocallcomp}
\end{figure}

\section{Discussion of Predictive Results}\label{discsect}

This section discusses the predictive results from Approaches 1 and 2. It is clear that from Tables \ref{foldfiveknn} and \ref{holdoutknn} that the approach of using Random Projections to the lower dimension $m' = 5000$ and then applying $k$-NN was not successful in predicting coronary artery disease with the genetic dataset of 3907 observations with 73571 SNPs represented in $m = 220713$ dimensional space. The best accuracy achieved was only 0.5554 using $k = 9$ and the $\ell_1$ norm, while the highest area under the ROC was 0.5174 with $k = 19$ and also the $\ell_1$ norm, based on 5-fold cross validation. It is worthwhile to mention that the predictive scores were a bit higher from cross validation than from the Holdout Method, since due to the high-complexity of the genetic dataset, only one iteration of the Holdout Method was run.

On the other hand, based on the dataset of 73571 SNPs from the same observations, Approach 2 was able to obtain an accuracy of 0.6592 and an area under the ROC curve of 0.8392, with MTD Feature Selection selecting {\em only} 287 important SNPs for classification by Random Forest. This area under the ROC curve, obtained with only SNPs from Chromosome 1, is considerably higher than the previous high score of 0.608 from \cite{robbie12} on the same genetic dataset.

When all 865688 SNPs from Chromosomes 1 to 15 and 17 to 22 were considered, the best accuracy achieved with Approach 2 increased to 0.6660 and the best area under the ROC curve increased to 0.8562, with 3335 SNPs selected as important by MTD Feature Selection for classification. Even with merely 10 SNPs selected as important, from the threshold $\alpha = 0.5$, Random Forest was obtain to obtain an accuracy of 0.6271 and an area under the ROC curve of 0.6137, which is still better than the score in \cite{robbie12}.

In summary, Chapter \ref{resultschap} has covered the results obtained from experiments based on two approaches of predicting coronary artery disease using techniques from data science. The new MTD Feature Selection method along with the Random Forest classifier were able to obtain the highest predictive accuracies and areas under the ROC curve, which are all considerably higher than scores from the approach of using Random Projections with the $k$-Nearest Neighbour classifier or from any previously used classification algorithms. Chapter \ref{conclude} below concludes this thesis and provides some limitations of these experiments and directions for futures work.

\cleardoublepage

\chapter{Conclusion}\label{conclude}

In short, as its first main objective, this thesis has explained in detail two supervised learning algorithms, the $k$-Nearest Neighbour ($k$-NN) and Random Forest classifiers, and two dimensionality reduction methods, a well-known technique called Random Projections and a novel method termed Mass Transportation Distance (MTD) Feature Selection. The thesis provided a complete proof that the $k$-NN classifier is universally consistent, a highly desirable property for learning algorithms, in any finite dimensional normed vector space. This thesis also justified the use of the Mass Transportation Distance in supervised learning theory, since the distance is closely related to classification margins of 1-Lipschitz functions.

For the second objective, the thesis has compared the approach of applying Random Projections with $k$-NN against the approach of applying MTD Feature Selection and Random Forest on the prediction of coronary artery disease based on a high-dimensional genetic dataset, collected from the Ontario Heart Genomics Study (OHGS). These classification and reduction techniques were all applied on this dataset for the very first time. The comparative study demonstrated that MTD Feature Selection with Random Forest has considerably better predictive abilities than Random Projections with $k$-NN. With 3335 Single-Nucleotide Polymorphisms (SNPs), selected as important for classification by MTD Feature Selection, the Random Forest classifier was able to obtain an accuracy of 0.6660 and an area under the Receiver Operating Characteristic (ROC) curve of 0.8562, which is considerably better than the previous high area of 0.608 from \cite{robbie12}.

However, because this thesis is written from a data science, and not a genetic, perspective, it has two main limitations. First, although the training and evaluating split for MTD Feature Selection and Random Forest was done correctly (as MTD Feature Selection must only be applied to the training part to select important SNPs, and not on the entire genetic dataset), it is important in the genetics community to validate the predictive performance on a completely independent dataset, for instance against a similar genetic dataset from the Wellcome Trust Case-Control Consortium or Cleveland Clinic Foundation (see \cite{robbiethesis} for more details). MTD Feature Selection would ideally be used to select important SNPs from the OHGS genetic dataset and then Random Forest would train using these SNPs on the same dataset. The classifier would then predict labels for observations from an independent dataset using the same SNPs. A concern for this validation process, however, is that selected SNPs from the OHGS dataset may not appear in the independent dataset. Regardless, this is certainly an important direction for future work to improve this thesis.

Second, the concept of Quality Control (QC), which are statistical procedures to determine whether a SNP is of high enough quality to be included in a mathematical analysis, has been ignored in this thesis. Recall that, in the thesis, each SNP for an individual is assigned to the genotype with the highest probability, out of the three probabilities of possible genotypes, found in the raw OHGS dataset. Normally for QC however, a SNP is only assigned to a genotype if the highest probability is greater or equal to 0.9. Otherwise, it is assigned the value {\em no-call}. If a SNP has 10\% or more no-call values, it is removed from a study completely. The surviving SNPs would then be checked to ensure they satisfy the Hardy-Weinberg Equilibrium, a mathematical condition involving the SNP allele and genotype frequencies in a population; non-satisfactory SNPs are again removed. For more information on Quality Control and the Hardy-Weinberg Equilibrium, see \cite{robbiethesis} and \cite{robbie12}.

This thesis did not consider Quality Control to remove SNPs because out of the 865688 starting SNPs from the OHGS dataset, 299388 SNPs, or 34.6\%, would not pass QC. This number corresponds to a considerable amount of coordinates in the dataset to remove without question, especially as some of these removed SNPs may very well be related to coronary artery disease. Experiments using Quality Control, by removing the 299388 SNPs, have in fact demonstrated the MTD Feature Selection method with Random Forest, evaluated exactly as in Section \ref{approach2} with the Holdout Method, could obtain an area under the ROC curve of 0.6230 with 206 selected SNPs that pass QC. The score is much lower than the area of 0.8562 obtained without QC using the same techniques, yet is still higher than the previous high score of 0.608. This result is evidence that some SNPs useful for predicting coronary artery disease would be wrongfully removed as a result of QC. Since dimensionality reduction methods are run prior to classification, it is also not necessary to apply Quality Control to decrease the number of SNPs, or coordinates, for a classifier to run efficiently. 

In any case, Quality Control is an important topic to investigate for future research, especially when learning algorithms with dimensionality reduction methods can handle high-dimensional datasets that may not have the best data quality. The following outlines three additional directions for future research:

\begin{enumerate}
\item The raw OHGS dataset, with probabilities of the three possible genotypes for each SNP, has not been studied prior to this thesis. Although Random Projections and the $k$-NN classifier did not give good predictive results, further studies on the raw dataset with different classifiers, such as Support Vector Machines or Neural Networks, should be done. This dataset has the obvious advantage over the discretized genetic dataset, with assigned values at each SNP, of containing all genotype information from the DNA microarrays. 

\item Feature selection based on the Mass Transportation Distance (MTD) is an extremely promising technique and it should be researched much further. For the finite discrete domain $(\Omega,d)$, the Mass Transportation Distance between two measures simplifies to their $\ell_1$ distance. It is not known what the MTD simplifies to when the domain is a product of two finite discrete spaces: $(\Omega^2,d^2)$ where
\[
d^2[(\omega_1,\omega_2),(\omega_1',\omega_2')] = \mathrm{card}\{i \, | \,\omega_i \neq \omega_i'\}.
\]
Because paired interactions between SNPs are common, by knowing the simplification of the MTD for a product discrete space, one would be able to assign importance scores to, and select, pairs of SNPs for classification.

\item Further regarding the Mass Transportation Distance, as Theorems \ref{mainmtd} and \ref{newmtdthem} provide a theoretical relationship between this distance and the classification margin of 1-Lipschitz functions, it would be extremely interesting to continue developing the theory between the MTD and the predictive abilities of l-Lipschitz classifiers. A reference for studying these types of classifiers and classification margin is \cite{lipclass}.
\end{enumerate}

Despite some limitations, this thesis still provides a new data science perspective on the problem of predicting coronary artery disease with SNP information. It is the hope that by addressing the limitations of this thesis and the suggested future work, one will be able to further improve the predictive abilities of data science algorithms for classifying coronary artery disease. Perhaps one day, an accurate prediction on whether a patient walking into the hospital would have this disease, purely based on a blood sample, could be made.








\cleardoublepage

%
%
%
%
%
%
\addcontentsline{toc}{chapter}{Bibliography}
\bibTexCite{biblio}     


\end{document}